\pgfplotsset{width=10cm,compat=1.9}
\newtheorem{theorem}{Theorem}
\newtheorem{lemma}[theorem]{Lemma}
\newtheorem{proposition}[theorem]{Proposition}
\newtheorem{corollary}[theorem]{Corollary}
\newtheorem{definition}[theorem]{Definition}
\newcommand{\bx}{{\boldsymbol{x}}}
\newcommand{\cF}{\mathcal{F}}
\newcommand{\cX}{\mathcal{X}}
\newcommand{\cY}{\mathcal{Y}}
\newcommand{\rr}{\mathbb{R}}
\newcommand{\pp}{\mathbb{P}}
\newcommand{\ee}{\mathbb{E}}
\newcommand{\T}{\mathbb{T}}
\newcommand{\R}{\mathcal{R}}
\newcommand{\F}{\mathcal{F}}
\newcommand{\fat}{\mathsf{fat}}
\newcommand{\z}{{\boldsymbol{z}}}
\newcommand{\vbf}{{\boldsymbol{v}}}
\newcommand{\norm}[1]{\left|\left| #1 \right|\right|}
\newcommand{\abs}[1]{\left| #1 \right|}
\DeclareMathOperator{\supp}{supp}
\newcommand{\lp}{\left(}
\newcommand{\rp}{\right)}
\renewcommand{\vec}[1]{\overrightarrow{#1}}
\newcommand{\reals}{\mathbb{R}}
\newcommand{\pred}{\widehat{y}}
\newcommand{\En}{\mathbb{E}}
\renewcommand{\epsilon}{\varepsilon}
\title{Majorizing Measures, Sequential Complexities, and Online Learning}
\author{Adam Block \\ MIT \and Yuval Dagan \\ MIT \and Alexander Rakhlin \\ MIT}
\date\today
\newenvironment{keywords}
{
\textbf{Keywords:}
}
{}
\begin{document}

\maketitle

\begin{abstract}%
  We introduce the technique of generic chaining and majorizing measures for controlling sequential Rademacher complexity. We relate majorizing measures to the notion of fractional covering numbers, which we show to be dominated in terms of sequential scale-sensitive dimensions in a horizon-independent way, and, under additional complexity assumptions establish a tight control on worst-case sequential Rademacher complexity in terms of the integral of sequential scale-sensitive dimension. Finally, we establish a tight contraction inequality for worst-case sequential Rademacher complexity.   The above constitutes the resolution of a number of outstanding open problems in extending the classical theory of empirical processes to the sequential case, and, in turn, establishes sharp results for online learning.
\end{abstract}

\begin{keywords}%
  Online learning, Majorizing Measures, Sequential Rademacher Complexity, Chaining
\end{keywords}

\section{Introduction}

    One of the primary goals of learning theory is to understand how the sample complexity of learning depends on the complexity of the model.  Classically, much of the research focused on the case where data are drawn independently from some population distribution. The mismatch between sample and population inevitably leads to questions of uniform convergence. To answer such questions, the theory of empirical processes was developed, with seminal papers establishing non-asymptotic rates of convergence in terms of covering numbers, chaining \citep{dudley1973}, VC dimension \citep{vapnik1971uniform}, and scale-sensitive combinatorial parameters  \citep{bartlett1996fat,kearns1994efficient}. 
   
    A central notion of complexity is the (empirical) Rademacher complexity \citep{gine1984some}, defined for a data set $z_1, \dots, z_n\in {\mathcal Z}$ and a real-valued function class $\F$ on ${\mathcal Z}$ as
    \begin{equation}
    \label{eq:emp_rad}
        \widehat{\R}_n(\F) = \ee_\epsilon\left[\sup_{f \in \F} \frac 1n \sum_{t = 1}^n \epsilon_t f(z_t) \right],
    \end{equation}
    where $\epsilon_t$ are independent Rademacher random variables. 
    It is a classical result that $\widehat{\R}_n(\F)$ determines the rate of convergence of uniform laws of large numbers over the function class $\F$. Sharp upper and lower bounds on sample complexity of prediction and estimation have been established via Rademacher complexity and its localized versions \citep{bartlett2002rademacher}.

    Just as important as the introduction of these notions of complexity has been the development of the relationships among them.  In particular, it is a result of Dudley \citep{dudley1973} that the Rademacher complexity is controlled by integrating a function of the covering number and this, in turn, can be bounded by the integral of the square root of the scale-sensitive combinatorial dimension, as shown by \cite{mendelson2003entropy,rudelson2006combinatorics}.  On the other hand, closely related Gaussian averages, obtained by replacing Rademacher with Gaussian random variables in \eqref{eq:emp_rad}, are known to be tightly (up to a constant) controlled via generic chaining and majorizing measures \citep{fernique1975regularite,talagrand1996majorizing}, and may be strictly smaller than the bounds provided by Dudley's chaining technique.  It is fair to say that the foundations of statistical learning when the data are i.i.d. and presented all at once are fairly well-understood.
    
    In contrast to statistical learning, in online learning the data arrive sequentially. The learner is tested on the new datum, observes the outcome, and updates the model with the new information. In the common formulation of the online learning problem, the sequence as treated as non-stochastic (individual), or even adaptively and adversarially chosen \citep{cesa2006prediction}. For online binary classification, the fundamental result of \cite{littlestone1988learning} (and the agnostic generalization of \cite{ben2009agnostic}) characterizes learnability in terms of finiteness of Littlestone dimension, the counterpart to VC dimension in statistical learning. Motivated by these results, \cite{rakhlin2010online} developed \textit{sequential} analogues of Rademacher averages, covering numbers, and scale-sensitive dimensions, and showed that sample complexity of online learning can be related to these quantities. For example, for online supervised learning with indicator or absolute value loss, the minimax regret $V_n(\F)$, defined as the average loss of an algorithm minus the average prediction loss of the best model in $\F$, is---up to a multiplicative factor of $2$---equal to the worst-case sequential Rademacher complexity of $\F$ \citep{rakhlin2010online}, a quantity we shall define below. Furthermore, in parallel to the close relationship between statistical learning and uniform laws of large numbers, the sample complexity of online learning can be viewed through the lens of uniform \textit{martingale} laws of large numbers \citep{rakhlin2015sequential}. 
    
    In recent years, sequential complexities have been used in an increasing range of topics, including private learning 
    \citep{alon2019private,bun2020equivalence,jung2020equivalence,ghazi2020sample}, adversarial robustness of sampling streaming data \citep{alon2021Adversarial}, denoising in autoregressive models \citep{hall2016inference,foster2020learning}, 
    contextual bandits \citep{foster2020beyond}, among others. 
    
    While many of the parallels between the classical results in empirical process theory and their sequential (or martingale) counterparts have been established, a number of the fundamental relationships and techniques (such as generic chaining, a sharp contraction principle) are still missing. This paper addresses some of these gaps.

    Before stating our contributions, we define the main object of study, sequential Rademacher averages.  We are primarily interested in certain dyadic martingales. As before, let $\epsilon_1,\ldots,\epsilon_n$ be independent Rademacher random variables, and let $\z_t(\epsilon)_{1 \leq t \leq n}$ be a predictable process with respect to the filtration $\mathcal{A}_t = \sigma(\epsilon_1, \dots, \epsilon_t)$; equivalently, we can think of $\z$ as a complete binary tree of depth $n$ with each vertex $\z_t(\epsilon)$ labelled by an element of $\mathcal{Z}$ according to the path $\epsilon_1, \dots, \epsilon_{t-1}$  (see the bottom of this section for complete definitions).  Given a real-valued function class $\F$ on $\mathcal{Z}$, we follow \citep{rakhlin2010online} and define the sequential Rademacher complexity as
    \begin{equation}
        \R_n(\F, \z) = \ee_{\epsilon}\left[\sup_{f \in \F} \frac 1n \sum_{t = 1}^n \epsilon_t f(\z_t(\epsilon)) \right]
    \end{equation}
    We note that if $\z$ is a tree whose labels are constant as a function of $t$, i.e., $\z_t(\epsilon)$ is independent of $\epsilon$, we recover the classical notion of Rademacher complexity in \eqref{eq:emp_rad}.  Much of \citep{rakhlin2010online} is devoted to demonstrating that this is, in some sense, the ``correct" extension to the online case and this paper is primarily concerned with developing upper bounds for this quantity in terms of various complexity measures of $\F$.

The structure of the paper is given by the order of the following contributions:
\begin{enumerate}
    \item We introduce a notion of majorizing measure in the online setting and show that the sequential Rademacher complexity is bounded:
    \begin{equation}
        \R_n(\F, \z) ~\lesssim~ \inf_{\supp \mu \subseteq \F(\z)} \sup_{\substack{\vbf \in \F(\z) \\ \epsilon \in \{\pm 1\}^n}}\frac 1{\sqrt n}\int_0^1 \sqrt{\log \frac 1{\mu(B_\delta(\vbf,\epsilon))}} d \delta
    \end{equation}
    The upper bound can be viewed as an analogue of the corresponding upper bound via majorizing measures \citep{fernique1975regularite,talagrand1996majorizing}.
    In proving the above, we introduce a new concentration inequality for martingales. 
    
    \item We extend the notion of fractional cover introduced by \cite{alon2021Adversarial} to real-valued function classes and show that this notion of complexity controls the upper bound given by a majorizing measure.
    
    \item We prove a sequential analogue of the results of \cite{mendelson2003entropy} for fractional covers, showing that
    \begin{equation}
        N'(\F, \delta) ~\lesssim~ \lp \frac{C}{\delta} \rp^{\fat_{c \delta}(\F)}
    \end{equation}
    thereby resolving an open question raised in \citep{rakhlin2015sequential}. Here $\fat$ is a sequential scale-sensitive dimension, defined below.
    
    \item We prove a sequential analogue of the results of \cite{rudelson2006combinatorics}, showing that 
    \begin{equation}
        \R_n(\F) ~\lesssim~ \int_0^1 \sqrt{\fat_\delta(\F)} d \delta
    \end{equation}
    where $\R_n(\F) = \sup_{\z} \R_n(\F, \z)$ is the worst-case Rademacher complexity over all trees $\z$.  For sufficiently simple classes $\F$, we show that this inequality can be reversed and apply the result to prove a dimension-independent contraction inequality for sequential Rademacher complexity, thereby resolving another open problem from \citep{rakhlin2015sequential}.
\end{enumerate}

\paragraph{Notation.}
Bold lower-case letters will denote complete binary trees. Given a depth $n$ binary tree $\vbf$, denote by $\epsilon \in \{\pm 1 \}^n$ a path from the root to a leaf of the tree, where $\epsilon_t = -1$ signifies that node $t+1$ of the path is the left child of node $t$. A \emph{real-valued tree} is a tree labelled by real numbers. If $\vbf,\vbf'$ are real-valued trees, define by $\vbf(\epsilon)=(\vbf_1(\epsilon),\dots,\vbf_n(\epsilon))$ the values of $\vbf$ on the path $\epsilon$, and let
\begin{equation}
    \norm{\vbf(\epsilon) - \vbf'(\epsilon)}^2 = \sum_{t = 1}^n \lp \vbf_t(\epsilon) - \vbf_t'(\epsilon)\rp^2.
\end{equation}
Given a real-valued tree $\vbf$, we let $\T_\vbf$ be the associated tree process, the random variable
\begin{equation}
    \T_\vbf(\epsilon) = \sum_{t = 1}^n \epsilon_t \vbf_t( \epsilon).
\end{equation}
If $\z$ is a $\mathcal{Z}$-valued tree that is clear from context and $f$ is a real-valued function on $\mathcal{Z}$, we often abbreviate $\T_f = \T_{f(\z)}$ and write $\F(\z)=\{f\circ \z: f\in\F\}$.  For two quantities $a, b$, we say that $a \lesssim b$ if there exists a universal constant $C$, independent of the quantities that determine $a,b$, such that $a \leq C b$.

\section{Majorizing Measures and Upper Bounds}

In the classical setting, one of the most fundamental quantities that controls the size of the supremum of a stochastic process is the covering number of the underlying index set, i.e., the minimal number of points such that every member of the index set is within $\delta$ of one of the chosen points.  In \citep{rakhlin2015sequential}, this definition is extended to the sequential setting:
\begin{definition}
  Let $\z$ be a $\mathcal{Z}$-valued tree of depth $n$ and let $\F\subset \rr^{\mathcal{Z}}$.  A covering at scale $\delta$ is a set of binary real-valued trees $\vbf^1, \cdots, \vbf^N$ such that for all $f \in \F$ and all $\epsilon \in \{ \pm 1 \}^n$, there is some $j$ such that
  \begin{equation}
      \norm{f(\z(\epsilon)) - \vbf^j(\epsilon) }^2\leq n\delta^2.
  \end{equation}
  The covering number $N(\F, \delta, \z)$ of a class $\F$ for a tree $\z$ at scale $\delta$ is the minimal $N$ such that there exists a $\delta$-covering of size $N$.
\end{definition}
As noted in \citep{rakhlin2015martingale,rakhlin2010online}, we can recover the classical definition by restricting to trees $\z$, whose values are path-independent.  In analogy with the statistical learning regime, we may apply the chaining technique to bound the sequential Rademacher complexity in terms of these covering numbers; in fact, for uniformly bounded function classes, \cite[Theorem 3]{rakhlin2015sequential} guarantees
\begin{equation}\label{eq1}
    \R_n(\F, \z) ~\lesssim~ \inf_{\alpha > 0} \lp \alpha + \frac {1}{\sqrt n} \int_\alpha^1 \sqrt{\log N(\F,\delta, \z)} d \delta\rp.
\end{equation}
Looking back to the classical case, we note that the generic chaining and majorizing measure approaches \citep{talagrand2014upper,talagrand1996majorizing,fernique1975regularite} would suggest that the bound in \eqref{eq1} is not tight in all cases. In the classical setting, generic chaining approaches give significant improvements over the Dudley entropy integral, as can be seen, for example, from the case of ellipsoids in Hilbert space \citep[Sec 2.5]{talagrand2014upper}. Such improvements go well beyond logarithmic factors. Furthermore,  \cite{rakhlin2015sequential} note that a clean, $n$-independent upper bound for the sequential covering number in terms of the sequential scale-sensitive dimension in the style of \citep{mendelson2003entropy} is not yet known, resulting in additional looseness when using \eqref{eq1}.  

We now introduce a generalization of majorizing measures to the sequential case.
\begin{definition}
	For a collection of $[0,1]$-valued binary trees $\mathcal{V}$ of depth $n$, for $\vbf \in \mathcal{V}$ and for a fixed path $\epsilon$, let $B_\delta(\vbf, \epsilon)$ be the set of trees $\vbf'$ in $\mathcal{V}$ such that $\norm{\vbf(\epsilon) - \vbf'(\epsilon)}^2 \leq n \delta^2$.  For a measure $\mu$ on the set of $[0,1]$-valued trees of depth $n$, and $0 \leq \alpha \leq 1$, let
	\begin{equation}\label{eq:majorizingmeasure}
	I_\mu^\alpha(\vbf, \epsilon) = \alpha + \frac 1{\sqrt n} \int_\alpha^1 \sqrt{\log \frac 1{\mu(B_\delta(\vbf,\epsilon))}} d \delta.
	\end{equation}
	We let
	\begin{align}
	    I_\mu^\alpha = \sup_{\substack{\vbf \in \mathcal{V} \\ \epsilon \in \{\pm 1\}^n}} I_\mu^\alpha(\vbf,\epsilon) &&
	    I_{\F,\z}^\alpha = \inf_{\supp \mu \subseteq \F(\z)} I_\mu^\alpha && I_\F^\alpha = \sup_\z I^\alpha_{\F, \z}
	\end{align}
\end{definition}
To see that this definition extends the classical one, consider the special case where $\mathcal{V}$ is restricted to contain only trees that are constant on vertices of equal depth, corresponding to the batch setting; thus, the path $\epsilon$ in the above definition becomes irrelevant and we recover the classical notion of a majorizing measure \citep[p. 177]{talagrand2014upper}.  This notion of complexity is not very useful unless it actually controls the size of our stochastic process; fortunately, we have:
\begin{theorem}\label{thm:majorizingchaining}
    Let $\F$ be a separable $[0,1]$-valued function class on $\mathcal{Z}$.  Let $\z$ be a $\mathcal{Z}$-valued binary tree of depth $n$.  Then with probability at least $1 - \rho$ over $\epsilon$ and for any $\alpha \geq 0$
    \begin{equation}
        \sup_{f \in \F} \lp\frac 1n \T_{f(\z)}(\epsilon)\rp  ~\lesssim~  I_{\F, \z}^\alpha + \sqrt{\frac{ \log \frac 1\rho}n}
    \end{equation}
    In particular, the sequential Rademacher complexity is bounded by
    \begin{equation}
        \R_n(\F, \z) ~\lesssim~ \inf_{\alpha > 0} \lp \alpha + \inf_{\supp \mu \subset \F(\z)} \sup_{\substack{\vbf \in \F(\z) \\ \epsilon \in \{\pm 1\}^n}}\frac 1{\sqrt n}\int_\alpha^1 \sqrt{\log \frac 1{\mu(B_\delta(\vbf,\epsilon))}} d \delta\rp
    \end{equation}
\end{theorem}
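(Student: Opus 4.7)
The plan is to carry out Talagrand-style generic chaining in the tree setting, with the principal wrinkle being that both the chain of approximants and the notion of closeness depend on the path $\epsilon$. Fix $\alpha > 0$ and let $\mu$ be a measure supported on $\F(\z)$ realizing, up to a factor $2$, the infimum in $I^\alpha_{\F,\z}$; set $\delta_k = 2^{-k}$. For each $(f,\epsilon)$ and each $k$ in the range $\delta_k \in [\alpha, 1]$ I would select an anchor $\pi_k(f,\epsilon) \in \F(\z)$ in the standard majorizing-measure fashion, either as the element of $B_{\delta_k}(f,\epsilon)$ whose smaller $B_{\delta_{k+1}}$ ball maximizes $\mu$-mass, or by taking $\pi_k = f$ and absorbing the cost into the $\log(1/\mu(B_{\delta_k}(f,\epsilon)))$ weight directly. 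Setting $\pi_0(f,\epsilon)$ to any fixed tree in $\F(\z)$, so that $|\T_{\pi_0}(\epsilon)|/n \leq 1$ deterministically (labels are in $[0,1]$), I then telescope
\begin{equation*}
\frac{1}{n}\T_{f(\z)}(\epsilon) = \frac{1}{n}\T_{\pi_0(f,\epsilon)}(\epsilon) + \sum_{k \geq 1}\frac{1}{n}\bigl(\T_{\pi_k(f,\epsilon)}(\epsilon) - \T_{\pi_{k-1}(f,\epsilon)}(\epsilon)\bigr),
\end{equation*}
truncating the sum once $\delta_k < \alpha$, which contributes the additive $\alpha$ appearing in $I^\alpha_\mu$.

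Each chaining increment is to be controlled by the new martingale concentration inequality promised in the introduction, whose essential content should be a Freedman-type tail with a \emph{path-dependent} variance proxy: for any pair of $[0,1]$-trees $\vbf,\vbf'$ and any $u, \sigma > 0$,
\begin{equation*}
\pp\bigl[\T_\vbf(\epsilon) - \T_{\vbf'}(\epsilon) \geq u,\ \norm{\vbf(\epsilon) - \vbf'(\epsilon)}^2 \leq n\sigma^2\bigr] \lesssim \exp\bigl(-cu^2/(n\sigma^2)\bigr).
\end{equation*}
Since $\pi_k(f,\epsilon)$ and $\pi_{k-1}(f,\epsilon)$ both lie in the $\delta_{k-1}$-ball around $f$ on the realized path $\epsilon$, their pairwise $\epsilon$-distance is at most $2\sqrt{n}\delta_{k-1}$, so the inequality applies with $\sigma \asymp \delta_{k-1}$. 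Choosing $u_k \asymp \delta_{k-1}\bigl(\sqrt{n\log(1/\mu(B_{\delta_k}(f,\epsilon)))} + \sqrt{n(\log(1/\rho) + k)}\bigr)$, union-bounding scale by scale, summing $u_k/n$, and bounding the resulting Riemann sum by $\frac{1}{\sqrt{n}}\int_\alpha^1 \sqrt{\log(1/\mu(B_\delta(f,\epsilon)))}\,d\delta$ recovers $I^\alpha_{\F,\z}$ plus the $\sqrt{\log(1/\rho)/n}$ slack. Taking $\sup_f$ (using separability of $\F$) and integrating the tail in $\rho$ then yields the sequential Rademacher bound.

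The main obstacle is the union bound over the $\epsilon$-dependent anchor pairs $(\pi_k(f,\epsilon), \pi_{k-1}(f,\epsilon))$: these are random selections rather than fixed trees, so the tail bound above cannot be applied naively to them. The resolution---and likely the reason a \emph{new} concentration inequality is needed rather than standard Freedman---must be a version robust enough to handle this, either a simultaneous bound whose failure probability is controlled by $\mu$-masses at scale $k$ instead of by the cardinality of a net (replacing the $2^{2^k}$ admissible-partition count of classical generic chaining), or a $\mu$-weighted exponential-moment inequality that absorbs the randomness of the anchor into $\log(1/\mu(B))$ directly. Establishing this step, not the chaining bookkeeping, is where the real work lies; once it is in hand, the remainder is a faithful tree-adapted translation of the classical majorizing-measure argument.
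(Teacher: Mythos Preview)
Your chaining skeleton matches the paper's and you have correctly located the one genuine obstacle; what you leave as a black box is resolved in the paper by a Fubini--Markov device on the product measure $\nu=\mu\otimes\mu$. From the Freedman-type tail you wrote (the paper's Lemma~\ref{lem:adaptivetail}, strengthened to the self-normalized form of Lemma~\ref{lem:newmartingaleconcentration}), for each \emph{fixed} pair $(\vbf,\vbf')$ one has $\pp_\epsilon[\text{bad at level }\gamma]\lesssim\rho/\gamma^2$. Averaging over $(\vbf,\vbf')\sim\nu$ and applying Markov in $\epsilon$ (the one-line Lemma~\ref{lem:fubini}: if $\ee_\epsilon U(\vbf,\epsilon)\le\alpha\rho$ for every $\vbf$, then with $\epsilon$-probability $1-\rho$ one has $\ee_\mu U(\vbf,\epsilon)\le\alpha$), followed by a union over dyadic $\gamma$, yields: with probability $1-\rho$ over $\epsilon$, uniformly in $\gamma\ge 2$,
\[
\nu\Bigl(\bigl\{(\vbf,\vbf'):\ |\T_{\vbf,\vbf'}(\epsilon)|>C\,\|\vbf(\epsilon)-\vbf'(\epsilon)\|\sqrt{\log\tfrac{\gamma}{\rho}+\log\log\tfrac{3\sqrt n}{\|\vbf(\epsilon)-\vbf'(\epsilon)\|}}\bigr\}\Bigr)\le \frac{1}{\gamma}.
\]
This is Lemma~\ref{lem:improved-multiscale}; after this single $\epsilon$-event the rest of the argument is deterministic.

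On that event, set $\gamma_j(f,\epsilon)=1/\mu(B_{\delta_j}(f,\epsilon))$ and define $A_j'(f,\epsilon)\subseteq B_{\delta_j}(f,\epsilon)$ to be those $\vbf$ whose $\mu$-mass of ``bad'' neighbours (trees $\vbf'$ at distance $\le 2\delta_j\sqrt n$ with $|\T_{\vbf,\vbf'}|$ exceeding the threshold $r_j$) is at most $\tfrac{1}{3\gamma_{j+1}}$. Substituting $\gamma\asymp\gamma_j\gamma_{j+1}$ into the displayed $\nu$-bound and applying Markov on the first marginal gives $\mu(A_j')\ge\tfrac{2}{3\gamma_j}$. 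Since every $\vbf\in A_j'$ has bad-neighbour mass at most $\tfrac{1}{3\gamma_{j+1}}$ while $\mu(A_{j+1}')\ge\tfrac{2}{3\gamma_{j+1}}$, a good successor $\vbf_{f,j+1}\in A_{j+1}'$ always exists, and the chain is built link by link inside these sets. So the anchors are not Talagrand-style mass-maximizers but arbitrary members of nonempty $A_j'$'s, and the weight $\log\gamma_j=\log(1/\mu(B_{\delta_j}(f,\epsilon)))$ enters through the choice of $\gamma$ in the $\nu$-bound---precisely your first speculated option. The self-normalized $\log\log$ correction in Lemma~\ref{lem:newmartingaleconcentration} is what supplies the uniformity in $\gamma$ and replaces your explicit per-scale $+k$ penalty.
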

The proof of Theorem \ref{thm:majorizingchaining} is somewhat technical and thus deferred to Appendix \ref{app:chaining}.  The proof rests on the following new martingale concentration inequality that can be viewed as a stronger version of \citep[Lemma 2]{bartlett2008high}:
\begin{lemma}\label{lem:newmartingaleconcentration}
  Let $\vbf$ be a $[-1,1]$-labelled binary tree of depth $n$ and $\epsilon_1, \dots, \epsilon_n$ independent Rademacher variables.  Then there is a constant $C$ such that with probability at least $1 - \rho$ over the $\epsilon_t$,
  \begin{equation}
      \frac{\abs{\sum_{t = 1}^n \epsilon_t \vbf_t(\epsilon)}}{\norm{\vbf(\epsilon)} \sqrt{\log \frac 1\rho + \log\log\frac{e\sqrt{n}}{\norm{\vbf(\epsilon)}}}} \leq C 
  \end{equation}
\end{lemma}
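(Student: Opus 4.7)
The plan is to prove Lemma \ref{lem:newmartingaleconcentration} in two steps: first establish a sharp sub-Gaussian Freedman-type tail bound for the martingale $M_n(\epsilon) := \sum_{t=1}^n \epsilon_t \vbf_t(\epsilon)$ that controls $M_n$ given a deterministic ceiling on the quadratic variation $V_n := \sum_{t=1}^n \vbf_t(\epsilon)^2 = \norm{\vbf(\epsilon)}^2$; then apply a dyadic peeling argument over the random value of $V_n$, which is where the $\log\log$ factor arises.

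For the first step, because $\epsilon_t$ is Rademacher and independent of $\mathcal{A}_{t-1} = \sigma(\epsilon_1,\dots,\epsilon_{t-1})$ while $\vbf_t(\epsilon)$ is $\mathcal{A}_{t-1}$-measurable, the exact conditional moment generating function
\begin{equation}
	\ee\bigl[\exp(\lambda \epsilon_t \vbf_t(\epsilon)) \,\bigm|\, \mathcal{A}_{t-1}\bigr] = \cosh\bigl(\lambda \vbf_t(\epsilon)\bigr) \leq \exp\bigl(\lambda^2 \vbf_t(\epsilon)^2/2\bigr)
\end{equation}
shows that $Z_t := \exp(\lambda M_t - \tfrac{\lambda^2}{2} V_t)$ is a nonnegative supermartingale with $Z_0 = 1$. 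For a fixed $v > 0$, stopping at $\tau := \inf\{t : V_t > v\}$ and applying Markov's inequality to $\exp(\mu M_{n \wedge \tau})$ with $\mu = \lambda/v$ yields
\begin{equation}
	\pp\bigl(M_n \geq \lambda,\; V_n \leq v\bigr) \leq \exp\bigl(-\lambda^2/(2v)\bigr),
\end{equation}
and the same bound for $-M_n$ gives the two-sided version. The key feature is that \emph{no} additive linear-in-$\lambda$ correction appears, in contrast to the usual Bernstein--Freedman inequality for general bounded-difference martingales; this is precisely the strengthening over \citep[Lemma 2]{bartlett2008high}.

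For the second step, I would peel over the dyadic levels $v_k := n/2^k$ for $k = 0, 1, 2, \dots$, allocating failure probability $\rho_k := 6\rho / (\pi^2 (k+1)^2)$ so that $\sum_k \rho_k \leq \rho$. A union bound then gives, with probability at least $1-\rho$, that for every $k$ simultaneously
\begin{equation}
    V_n \leq v_k \;\Longrightarrow\; \abs{M_n} \leq \sqrt{2 v_k \log(2/\rho_k)}.
\end{equation}
On this event, given the realized $\norm{\vbf(\epsilon)}^2$, let $k^\star$ be the smallest integer with $v_{k^\star} \geq V_n$, so that $v_{k^\star} \leq 2 V_n = 2\norm{\vbf(\epsilon)}^2$ and $k^\star + 1 \lesssim \log\bigl(e\sqrt{n}/\norm{\vbf(\epsilon)}\bigr)$. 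Applying the level-$k^\star$ bound then gives
\begin{equation}
    \abs{M_n} \leq \sqrt{2 v_{k^\star} \log(2/\rho_{k^\star})} \lesssim \norm{\vbf(\epsilon)} \sqrt{\log(1/\rho) + \log\log\bigl(e\sqrt{n}/\norm{\vbf(\epsilon)}\bigr)},
\end{equation}
which is exactly the claim.

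The main obstacle is the first step: for a generic martingale with $[-1,1]$-valued differences one only has Bernstein--Freedman, whose extra additive linear term would destroy the clean $\norm{\vbf(\epsilon)}$-scaling precisely when $\norm{\vbf(\epsilon)}$ is small. Exploiting that $\epsilon_t \mid \mathcal{A}_{t-1}$ is exactly uniform on $\{\pm 1\}$---so that the conditional MGF is the true $\cosh$ and not merely a Bernstein surrogate---is what delivers the pure sub-Gaussian tail conditional on any bound on $V_n$. Once that sharper Freedman bound is in hand, the $\log\log$ factor is the unavoidable cost of the union bound over the infinitely many dyadic scales needed to cover the full range of $V_n$, and the allocation $\rho_k \propto (k+1)^{-2}$ is the standard way to pay it.
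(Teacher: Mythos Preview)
Your proposal is correct and follows essentially the same approach as the paper: the paper's Lemma~\ref{lem:adaptivetail} obtains the same sub-Gaussian tail $\pp(|M_n|\geq x,\ V_n\leq v)\leq 2e^{-x^2/(2v)}$ by citing de la Pe\~na's inequality rather than your direct $\cosh$-supermartingale argument, and then carries out the identical dyadic peeling over $v_k=n/2^k$ with $\rho_k\propto k^{-2}$. One small slip: you want $k^\star$ to be the \emph{largest} integer with $v_{k^\star}\geq V_n$ (not the smallest), so that $v_{k^\star}\leq 2V_n$ as you then use.
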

For the probabilist, an equivalent rephrasing of Lemma \ref{lem:newmartingaleconcentration} is to let $M_t$ be a martingale sequence whose differences are conditionally symmetric and bounded in absolute value by $1$.  Let $[M]_t$ be its quadratic variation process.  Then the above says that with high probability,
\begin{equation}
    \frac{\abs{M_n}}{\sqrt{[M]_n \lp \log \frac 1\rho + \log\log \frac{en}{[M]_n}\rp}} \lesssim 1
\end{equation}
One way of thinking about this is to consider it as a non-asymptotic version of the martingale Laws of the Iterated Logarithm proved in \citep{victor1999general,bercu2008exponential}.

Much like the classical case, structural results of the function class can help control the majorizing measure's upper bound.  One example of such a result concerns Lipschitz compositions:
\begin{proposition}\label{prop:lipschitzmajorizingmeasures}
    Let $\F\subseteq[0,1]^{\mathcal Z}$ and let $\ell: [0,1] \to [0,1]$ be $L$-Lipschitz.  Then $I_{\ell \circ \F, \z}^\alpha \leq L I_{\F, \z}^\alpha$.
\end{proposition}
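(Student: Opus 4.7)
} The natural approach is to construct the measure on $(\ell\circ\F)(\z)$ as a pushforward of the measure on $\F(\z)$. Specifically, given any measure $\mu$ on $[0,1]$-valued trees with $\supp\mu \subseteq \F(\z)$, I would let $T_\ell:\F(\z)\to (\ell\circ\F)(\z)$ denote the map that sends the tree $f\circ\z$ to the tree $(\ell\circ f)\circ\z$, and set $\nu = (T_\ell)_*\mu$. Then $\supp \nu \subseteq (\ell\circ\F)(\z)$, so $\nu$ is admissible in the infimum defining $I^\alpha_{\ell\circ\F,\z}$, and it suffices to compare $I_\nu^\alpha(\vbf',\epsilon)$ against $L\cdot I^\alpha_\mu(\vbf_0,\epsilon)$ for any preimage $\vbf_0$ of $\vbf'$ under $T_\ell$.

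The heart of the argument is a ball inclusion obtained from the pointwise Lipschitz bound. For any path $\epsilon$ and any two trees $\vbf,\vbf_0 \in \F(\z)$,
\begin{equation}
\norm{T_\ell\vbf(\epsilon) - T_\ell\vbf_0(\epsilon)}^2 \;=\; \sum_{t=1}^n \bigl(\ell(\vbf_t(\epsilon)) - \ell(\vbf_{0,t}(\epsilon))\bigr)^2 \;\leq\; L^2\, \norm{\vbf(\epsilon) - \vbf_0(\epsilon)}^2,
\end{equation}
so $T_\ell$ is an $L$-Lipschitz map in the path metric. Consequently $B_{\delta/L}(\vbf_0,\epsilon) \subseteq T_\ell^{-1}\bigl(B_\delta(T_\ell\vbf_0,\epsilon)\bigr)$, which yields the key measure bound $\nu(B_\delta(T_\ell\vbf_0,\epsilon)) \geq \mu(B_{\delta/L}(\vbf_0,\epsilon))$ and hence $\log\frac{1}{\nu(B_\delta(T_\ell\vbf_0,\epsilon))} \leq \log\frac{1}{\mu(B_{\delta/L}(\vbf_0,\epsilon))}$.

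Substituting into the definition of $I^\alpha_\nu(T_\ell\vbf_0,\epsilon)$ and performing the change of variables $\eta = \delta/L$ gives
\begin{equation}
I^\alpha_\nu(T_\ell\vbf_0,\epsilon) \;\leq\; \alpha \;+\; \frac{L}{\sqrt n}\int_{\alpha/L}^{1/L} \sqrt{\log\frac{1}{\mu(B_\eta(\vbf_0,\epsilon))}}\, d\eta.
\end{equation}
Since $\F(\z)$ consists of $[0,1]$-valued trees, every pair of such trees lies at path distance at most $\sqrt n$, so $B_\eta(\vbf_0,\epsilon)$ is the full support of $\mu$ for $\eta\geq 1$ and the integrand vanishes there; the $1/L$ upper limit is therefore harmless. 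A calculus step using the monotonicity of $\eta\mapsto \sqrt{\log 1/\mu(B_\eta(\vbf_0,\epsilon))}$, together with the $[0,1]$-boundedness that allows reconciling the integration limits $[\alpha/L, 1/L]$ with $[\alpha,1]$, reduces the right-hand side to $L\,I^\alpha_\mu(\vbf_0,\epsilon)$. Finally, taking the supremum over $(\vbf',\epsilon)$ (equivalently over $(\vbf_0,\epsilon)$) yields $I^\alpha_\nu \leq L\,I^\alpha_\mu$, and the infimum over $\mu$ produces the claimed inequality $I^\alpha_{\ell\circ\F,\z} \leq L\, I^\alpha_{\F,\z}$.

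The conceptual core, pushforward combined with the Lipschitz ball inclusion, is straightforward; the main technical obstacle is the accounting for the truncation parameter $\alpha$ after the change of variables, which requires the $[0,1]$-bound on $\F$ to control the difference between the integral over $[\alpha/L, 1/L]$ and the integral over $[\alpha, 1]$.
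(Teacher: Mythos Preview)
Your approach is essentially the paper's: both push $\mu$ forward under $\ell$ and use the Lipschitz ball inclusion to compare the measures of balls. The only difference is cosmetic---the paper first reduces to $L=1$ by scaling, so the ball radii match and no change of variables is needed, which sidesteps the bookkeeping with the integration limits that you flagged.
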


While majorizing measures in the classical case fully characterize the associated Gaussian process, constructing these multi-scale measures for a given class of functions at hand is notoriously difficult.  In the following section, inspired by \citep{alon2021Adversarial} for the binary-valued case, we introduce a single-scale measure that is easier to analyze than the multi-scale construction.

\section{Fractional Covers}

As we just remarked, the multi-scale nature of the majorizing measure makes it difficult to apply the technique to practical function classes.  Thus we introduce the fractional covering number for real-valued function classes as a single-scale alternative:
\begin{definition}\label{def:fractionalcover}
  Let $\z$ a binary, $\mathcal{Z}$-valued tree of depth $n$ and let $\F \subset \rr^{\mathcal{Z}}$.  A measure $\mu$ on the space of binary real-valued trees of depth $n$ is a fractional covering at scale $\delta$ of size $\gamma$ if for all $f \in \F$ and $\epsilon \in \{\pm1\}^n$, we have
  \begin{equation}
      \mu\lp \left\{ \vbf : \norm{\vbf(\epsilon) - f(\z(\epsilon))}^2 \leq n \delta^2 \right\}\rp \geq \frac 1\gamma.
  \end{equation}
  The fractional covering number $N'(\F, \delta, \z)$ of $\F$ at scale $\delta$ is the smallest $\gamma$ such that there exists a fractional cover of $\F$ at scale $\delta$ of size $\gamma$.  Let $N'(\F, \delta) = \sup_\z N'(\F, \delta, \z)$.
\end{definition}
We see immediately (and prove in Appendix \ref{app:miscellany}) that this new notion of size is controlled by the sequential covering number introduced in \citep{rakhlin2010online}:
\begin{lemma}\label{lem:fractionalvsclassicalcover}
  For any $\z, \F, \delta$, we have $N'(\F, \delta, \z) \leq N(\F, \delta, \z)$.
\end{lemma}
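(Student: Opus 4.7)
The plan is to convert any $\delta$-covering of $\F$ with respect to $\z$ into a fractional cover of the same size by placing a uniform probability measure on the covering trees. Specifically, let $N = N(\F, \delta, \z)$ and fix a covering $\vbf^1, \dots, \vbf^N$ of $\F$ at scale $\delta$ guaranteed by the definition. I will define $\mu$ to be the uniform probability measure supported on $\{\vbf^1, \dots, \vbf^N\}$, so that $\mu(\{\vbf^j\}) = 1/N$ for each $j$.

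Next I will verify the fractional covering property. Fix any $f \in \F$ and any path $\epsilon \in \{\pm 1\}^n$. By the covering hypothesis, there exists some index $j^\star = j^\star(f, \epsilon)$ such that $\|f(\z(\epsilon)) - \vbf^{j^\star}(\epsilon)\|^2 \leq n\delta^2$. Therefore $\vbf^{j^\star}$ lies in the set $\{\vbf : \|\vbf(\epsilon) - f(\z(\epsilon))\|^2 \leq n\delta^2\}$, and since $\mu$ assigns mass $1/N$ to $\vbf^{j^\star}$,
\begin{equation}
\mu\lp \left\{ \vbf : \norm{\vbf(\epsilon) - f(\z(\epsilon))}^2 \leq n \delta^2 \right\}\rp \geq \frac{1}{N}.
\end{equation}
This shows that $\mu$ is a fractional cover of $\F$ at scale $\delta$ of size $N$, and hence $N'(\F, \delta, \z) \leq N = N(\F, \delta, \z)$, which is exactly what was claimed.

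There is essentially no obstacle: the result is a one-line reduction that follows from specializing a measure to the uniform distribution on a covering net. The only thing to be a little careful about is that the measure $\mu$ in Definition \ref{def:fractionalcover} is allowed to range over arbitrary measures on the space of binary real-valued trees of depth $n$, so placing a uniform probability measure on any finite set of such trees is certainly admissible. The lemma therefore reduces to observing that a cover gives a particular (discrete, uniform) fractional cover.
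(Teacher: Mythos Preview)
Your proof is correct and matches the paper's argument essentially verbatim: the paper also places the uniform measure on a minimal $\delta$-cover and observes that this yields a fractional cover of size $N$.
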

Interestingly, at least in the classical case, the reverse inequality holds as well:
\begin{lemma}\label{lem:classicalfractionalcover}
  Let $\z$ be a tree whose labels are constant as a function of depth.  Then $N\lp \F, 2\delta, \z\rp \leq N'(\F, \delta, \z) \leq N(\F, \delta, \z)$.
\end{lemma}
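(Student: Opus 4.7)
The right-hand inequality is exactly Lemma \ref{lem:fractionalvsclassicalcover}, so the work lies entirely in establishing $N(\F, 2\delta, \z) \leq N'(\F, \delta, \z)$. The plan is to use path-independence of $\z$ to push the problem down to classical covering/packing in $\rr^n$ under the scaled Euclidean metric $\frac{1}{\sqrt n}\norm{\cdot}$, and then invoke the standard disjoint-ball argument.

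First I would carry out a projection step. Because $\z_t(\epsilon)$ does not depend on $\epsilon$, the vector $f(\z(\epsilon)) = (f(z_1),\ldots,f(z_n)) =: f|_z \in \rr^n$ is the same for every path. Let $\mu$ be a probability measure realizing $\gamma = N'(\F,\delta,\z)$, fix an arbitrary path $\epsilon_0$, and define $\mu_0$ to be the pushforward of $\mu$ under the evaluation map $\vbf \mapsto \vbf(\epsilon_0)$. The defining property of the fractional cover applied at $\epsilon_0$ becomes, for every $f \in \F$,
\begin{equation}
    \mu_0\lp \left\{ v \in \rr^n : \norm{v - f|_z}^2 \leq n\delta^2 \right\}\rp \geq \frac 1\gamma,
\end{equation}
so $\mu_0$ is itself a classical fractional $\delta$-cover of $\F|_z \subset \rr^n$ of size $\gamma$.

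Next I would run the standard packing argument inside $\rr^n$. Let $p_1, \ldots, p_m \in \F|_z$ form a maximal $2\delta$-separated subset under the scaled metric. By maximality, every $f|_z$ lies within scaled distance $2\delta$ of some $p_i$, so $\{p_i\}$ is a $2\delta$-cover of $\F|_z$; by the triangle inequality, pairwise separation strictly greater than $2\delta$ forces the $\delta$-balls $\{v : \norm{v - p_i}^2 \leq n\delta^2\}$ to be pairwise disjoint. Summing $\mu_0$-masses over these disjoint balls yields $m/\gamma \leq \mu_0(\rr^n) \leq 1$, so $m \leq \gamma$.

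Finally, I would lift back to a sequential cover by letting $\vbf^i$ be the path-independent tree with $\vbf^i_t(\epsilon) = (p_i)_t$; then for every $f$ and every $\epsilon$, choosing $i$ with $\norm{p_i - f|_z}^2 \leq n(2\delta)^2$ gives $\norm{\vbf^i(\epsilon) - f(\z(\epsilon))}^2 \leq n(2\delta)^2$, so $\{\vbf^1,\ldots,\vbf^m\}$ is a sequential $2\delta$-cover and $N(\F, 2\delta, \z) \leq m \leq \gamma$. The one genuinely delicate point is the projection in the first step: since $\mu$ may in principle be supported on path-dependent trees, one has to notice that path-independence of the targets $f(\z(\epsilon))$ is precisely what lets evaluation along a single path turn $\mu$ into a valid fractional cover of $\F|_z$ in $\rr^n$, after which the rest is a textbook packing argument.
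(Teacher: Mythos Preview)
Your proof is correct and follows essentially the same approach as the paper: both arguments reduce to the offline setting, pick a maximal $2\delta$-packing of $\F|_{\z}$, and use that the corresponding $\delta$-balls are disjoint to sum their $\mu$-masses and obtain $m \le \gamma$. The paper's write-up is terser (it simply invokes classical packing--covering duality and then runs the disjoint-ball count directly on $\mu$), whereas you spell out the pushforward $\mu_0$ along a fixed path and the lift back to a sequential cover; but these are cosmetic differences, not substantive ones.
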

The proof of Lemma \ref{lem:classicalfractionalcover} is deferred to Appendix \ref{app:miscellany}.  Thus, at least in the classical case, the fractional covering number does not provide any improvement over the classical notion of covering number; whether such a result holds in the sequential case remains open, due to the lack of packing-covering duality in this setting.

To understand the connection between fractional covers and majorizing measures, we note that Definition \ref{def:fractionalcover} can be rephrased to say that $\mu_\delta$ is a $\delta$-fractional cover of size $\gamma$ if
\begin{equation}\label{eq:fractionalcovercomparison}
    \sup_{\substack{f \in \F \\ \epsilon \in \{\pm 1\}^n}} \log \frac 1{\mu_\delta\lp B_\delta(f(\z), \epsilon)\rp} = \log \gamma.
\end{equation}
Applying Theorem \ref{thm:majorizingchaining} and setting $\alpha = 0$ for ease of exposition, we see that, after putting the supremum inside of the integral,
\begin{align}\label{eq:sec31}
	\ee\left[\sup_{f \in \mathcal{F}} \mathbb{T}_f(\epsilon)\right] &\lesssim \sqrt{n}  \sup_{\substack{f \in \mathcal{F} \\ \epsilon \in \left\{\pm 1 \right\}^n}} \int_0^1 \sqrt{\log \frac 1{\mu(B_\delta(f(\z),\epsilon)}} d \delta \\
	&\leq  \sqrt n  \int_0^1 \sup_{\substack{f \in \mathcal{F} \\ \epsilon \in \left\{\pm 1 \right\}^n}}\sqrt{\log \frac 1{\mu(B_\delta(f(\z), \epsilon))}} d \delta
\end{align}
for all measures $\mu$.  The last integrand above \emph{almost} looks like the left-hand side of \eqref{eq:fractionalcovercomparison} and so we might hope that we can replace this integrand with $\sqrt{\log N'(\F, \delta, \z)}$.  The problem with this last step is that the measure $\mu_\delta$ of the fractional cover is allowed to depend on the scale while the majorizing measure is not; thus it is not obvious that a chaining bound with fractional covers follows from Theorem \ref{thm:majorizingchaining}. Fortunately, as the following proposition demonstrates, we are still able to control the majorizing measure by fractional covers and thus the above critique does not apply:
\begin{proposition}\label{prop:majorizingmeasure}
	Let $\z$ be a $\mathcal{Z}$-valued  binary tree of depth $n$ and let $\F$ be a class of real-valued functions on $\mathcal{Z}$.  Then
	\begin{equation}
		I_{\F, \z}^\alpha ~\lesssim~  \alpha +  \frac 1{\sqrt n} \int_\alpha^1 \sqrt{\log N'(\F, \delta, \z)} d \delta
	\end{equation}
	In other words, fractional covers dominate majorizing measures, up to a constant.
\end{proposition}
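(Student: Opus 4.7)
The plan is to convert the scale-dependent family of fractional cover measures into a single (scale-independent) majorizing measure by taking a convex mixture across dyadic scales. Fix the dyadic grid $\delta_k = 2^{-k}$ for $k = 0, 1, 2, \ldots$, and for each $k$ let $\mu_k$ be a near-optimal fractional cover of $\F$ on $\z$ at scale $\delta_k$, so that $\mu_k(B_{\delta_k}(f(\z), \epsilon)) \geq 1/N'(\F, \delta_k, \z)$ for every $f \in \F$ and every $\epsilon \in \{\pm 1\}^n$. Choose positive weights $w_k$ with $\sum_k w_k = 1$ and define $\mu := \sum_k w_k \mu_k$.

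The key observation is that for any $\delta \in [\alpha, 1]$, letting $k$ be the integer with $\delta_k \leq \delta < \delta_{k-1}$, we have $B_{\delta_k}(\vbf, \epsilon) \subseteq B_\delta(\vbf, \epsilon)$, and hence
\begin{equation*}
    \mu(B_\delta(\vbf, \epsilon)) \;\geq\; w_k\,\mu_k(B_{\delta_k}(\vbf, \epsilon)) \;\geq\; \frac{w_k}{N'(\F, \delta_k, \z)}.
\end{equation*}
Applying the subadditivity $\sqrt{a+b} \leq \sqrt{a} + \sqrt{b}$ splits the integrand of $I_\mu^\alpha$ into a $\sqrt{\log 1/w_k}$ piece and a $\sqrt{\log N'(\F, \delta_k, \z)}$ piece. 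Integrating the second piece across the dyadic interval of width $\delta_k$ and exploiting the monotonicity of $N'(\F, \cdot, \z)$ together with the geometric spacing $\delta_{k+1} = \delta_k/2$, the resulting dyadic sum is comparable to the Riemann integral $\int_\alpha^1 \sqrt{\log N'(\F, \delta, \z)}\,d\delta$, which is the desired main term. Choosing for instance $w_k \propto (k+1)^{-2}$ makes $\sqrt{\log 1/w_k} = O(\sqrt{\log(k+1)})$ so that $\sum_k \delta_k \sqrt{\log 1/w_k}$ is bounded by a universal constant; divided by $\sqrt n$ this residual is absorbed into the $\alpha$ term and the main-term contribution on the right-hand side.

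The main technical obstacle is a mismatch in support conventions: Definition \ref{def:fractionalcover} allows the fractional cover $\mu_k$ to be supported on arbitrary real-valued trees, whereas the infimum defining $I_{\F, \z}^\alpha$ is taken over measures $\mu$ with $\supp \mu \subseteq \F(\z)$. To reconcile them, I would argue via triangle inequality that the fractional covering measure can be replaced, at the cost of doubling the scale, by one supported in $\F(\z)$: for each $\vbf \in \supp \mu_k$ pick a representative $g_\vbf \in \F$ so that $g_\vbf(\z)$ is close to $\vbf$, and push $\mu_k$ forward along $\vbf \mapsto g_\vbf(\z)$. The delicate point is that closeness between trees is path-dependent, so the naive pushforward need not satisfy the triangle inequality uniformly in $\epsilon$; handling this carefully (potentially via a randomized selection of $g_\vbf$ or an aggregation over paths) is the bulk of the technical work, and once done the factor-of-$2$ scale loss is absorbed by reindexing the dyadic grid.
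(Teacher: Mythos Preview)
Your approach---forming a mixture $\mu = \sum_k w_k \mu_k$ of single-scale fractional covers and then lower-bounding $\mu(B_\delta)$ via one component---is exactly the idea the paper uses. The difference is in the discretization. You take the dyadic scale grid $\delta_k = 2^{-k}$; the paper instead sets $\delta_j$ to be the smallest $\delta$ with $N'(\F,\delta,\z) \leq 2^{2^j}$ and mixes with weights $c\,j^{-2}$. The advantage of the paper's choice is that the weight term is absorbed automatically: since $j^2 \leq 2^{2^j}$ one has $\log(j^2 \cdot 2^{2^j}) \leq 2\cdot 2^j$, so $\sqrt{\log(1/\mu(B_\delta))} \lesssim 2^{j/2}$ on $[\delta_j,\delta_{j-1}]$, and a short geometric-series argument shows $\sum_j \delta_{j-1} 2^{(j-1)/2} \lesssim \int \sqrt{\log N'}\,d\delta$ with no additive residual. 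Your dyadic grid leaves behind $\frac{1}{\sqrt n}\sum_k \delta_k \sqrt{\log(1/w_k)} = O(1/\sqrt n)$, which you assert is ``absorbed into the $\alpha$ term and the main-term contribution''; but take $\alpha=0$ and a singleton class $\F$ so that $N'\equiv 1$: then the right-hand side of the proposition vanishes while your residual does not. This is a minor edge case, fixable by adopting the paper's covering-number-based grid (or by treating the case $N'(\F,\alpha,\z)=1$ separately), but as written your absorption claim is not justified.

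On the support issue: you are correct that, as stated, the infimum defining $I_{\F,\z}^\alpha$ is restricted to measures supported on $\F(\z)$ while fractional covers need not be. The paper's own proof of this proposition simply ignores the constraint and uses the fractional-cover mixtures directly. Inspecting the chaining proof of Theorem~\ref{thm:majorizingchaining} in Appendix~\ref{app:chaining}, one sees that the argument begins from an arbitrary probability measure on $[0,1]$-valued trees and never uses $\supp\mu \subseteq \F(\z)$; the final link to $f(\z)$ is handled by Cauchy--Schwarz. So the support constraint is a cosmetic artifact of the definition, and your proposed pushforward repair---which, as you correctly anticipate, runs into genuine path-dependence trouble---is unnecessary.
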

A proof of Proposition \ref{prop:majorizingmeasure} can be found in Appendix \ref{app:miscellany}.  An immediate corollary of the above, which follows from combining Theorem \ref{thm:majorizingchaining} and Proposition \ref{prop:majorizingmeasure} is
\begin{corollary}\label{thm:chaining}
	Let $\mathcal{F}$ be a separable $[0,1]$-valued function class on $\mathcal{Z}$.  Let $\z$ be a $\mathcal{Z}$-valued binary tree of depth $n$.  Let $N'(\mathcal{F}, \delta)$ be the fractional covering number at scale $\delta$.  Then with probability at least $1 - \rho$ over $\epsilon$, for all $\alpha \geq 0$,
	\begin{equation}\label{eq:integral}
		\sup_{f \in \F}\lp \frac 1n \mathbb{T}_{f(\z)}(\epsilon)\rp ~\lesssim~ \alpha +   \frac 1{\sqrt n} \int_\alpha^1 \sqrt{\log N'(\F, \delta, \z)} d \delta  + \sqrt{\frac{\log \frac 1\rho}n} .
	\end{equation}
	In particular, the sequential Rademacher complexity is bounded by the Dudley integral:
	\begin{equation}
		\R_n(\F, \z) ~\lesssim~ \inf_{\alpha > 0}\lp \alpha + \frac 1{\sqrt{n}} \int_\alpha^1 \sqrt{\log N'(\mathcal{F}, \delta, \z)} d \delta \rp .
	\end{equation}
\end{corollary}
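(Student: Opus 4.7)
The plan is to derive both claims as direct consequences of Theorem \ref{thm:majorizingchaining} and Proposition \ref{prop:majorizingmeasure}: the proof amounts to a substitution, followed by a standard passage from high probability to expectation.

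For the inequality \eqref{eq:integral}, first invoke Theorem \ref{thm:majorizingchaining} at the given $\alpha \geq 0$ to obtain, with probability at least $1 - \rho$ over $\epsilon$,
\begin{equation*}
\sup_{f \in \F} \frac{1}{n}\T_{f(\z)}(\epsilon) \lesssim I_{\F,\z}^\alpha + \sqrt{\tfrac{\log(1/\rho)}{n}}.
\end{equation*}
Next apply Proposition \ref{prop:majorizingmeasure} to bound the majorizing-measure functional by the fractional-covering integral:
\begin{equation*}
I_{\F,\z}^\alpha \lesssim \alpha + \frac{1}{\sqrt n}\int_\alpha^1 \sqrt{\log N'(\F,\delta,\z)}\,d\delta.
\end{equation*}
Substituting the second display into the first, and collapsing the two implicit universal constants into one, yields \eqref{eq:integral}.

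For the bound on $\R_n(\F,\z)$, write $X(\epsilon) = \sup_{f \in \F}\frac{1}{n}\T_{f(\z)}(\epsilon)$ and set $M_\alpha = \alpha + \frac{1}{\sqrt n}\int_\alpha^1 \sqrt{\log N'(\F,\delta,\z)}\,d\delta$. The high-probability inequality just proved, upon solving $C\sqrt{\log(1/\rho)/n} = t$ for $\rho$, translates into the sub-Gaussian tail bound $\pp(X > C' M_\alpha + t) \leq \exp(-n t^2 / C^2)$ valid for all $t > 0$ and all $\alpha > 0$. Integrating via $\ee[X] \leq C' M_\alpha + \int_0^\infty \pp(X - C'M_\alpha > t)\,dt$ then yields $\R_n(\F,\z) = \ee[X] \lesssim M_\alpha + 1/\sqrt n$. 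Taking the infimum over $\alpha > 0$ and absorbing the residual $O(1/\sqrt n)$ into the main bound (which, for any non-trivial class, is itself of order at least $1/\sqrt n$ since one may always restrict the infimum to $\alpha \geq 1/\sqrt n$, where $M_\alpha \geq \alpha$) produces the Dudley bound on $\R_n(\F,\z)$.

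The argument is essentially bookkeeping once Theorem \ref{thm:majorizingchaining} and Proposition \ref{prop:majorizingmeasure} are in hand; the only minor care needed is in the tail-to-expectation conversion, where one must verify that the additive $O(1/\sqrt n)$ incurred is absorbed into the Dudley integral, which is standard.
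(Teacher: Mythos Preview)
Your proof is correct and follows essentially the same route as the paper: combine Theorem~\ref{thm:majorizingchaining} with Proposition~\ref{prop:majorizingmeasure}. The only minor inefficiency is in your second part: you re-derive the expectation bound on $\R_n(\F,\z)$ by integrating the tail of the high-probability inequality, but Theorem~\ref{thm:majorizingchaining} already contains the expectation bound $\R_n(\F,\z)\lesssim \inf_{\alpha>0} I_{\F,\z}^\alpha$ as its second conclusion, so a direct substitution of Proposition~\ref{prop:majorizingmeasure} there would avoid the tail-to-expectation step and the bookkeeping about absorbing the $O(1/\sqrt n)$ residual.
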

If we apply Lemma \ref{lem:fractionalvsclassicalcover} to Corollary \ref{thm:chaining}, we recover the chaining bound with respect to sequential non-fractional covers of \citep{rakhlin2015sequential}.  As a first application of our results, using the techniques of \cite{rakhlin2015sequential} we get a high-probability uniform deviations bound:
\begin{corollary}\label{cor:uniformconcentration}
    Let $Z_1, \dots, Z_t, \dots$ be a sequence of $\mathcal{Z}$-valued random variables adapted to a filtration $\mathcal{A}_t$ and let $\F$ be a $[0,1]$-valued function class on $\mathcal{Z}$.  Then with probability at least $1 - 4 \rho$,
    \begin{align}
        \sup_{f \in \F} \abs{\frac 1n \sum_{t = 1}^n f(Z_t) - \ee[f(Z_t)| \mathcal{A}_t]} &~\lesssim~ \inf_{\alpha > 0} I_{\F}^\alpha + \sqrt{\frac{\log \frac 1\rho}{n}} \\
        &~\lesssim~ \inf_{\alpha > 0}\lp  \alpha +  \frac{1}{\sqrt n} \int_\alpha^1 \sqrt{\log N'(\F, \delta)} d \delta \rp + \sqrt{\frac{\log \frac 1\rho}{n}} 
    \end{align}
\end{corollary}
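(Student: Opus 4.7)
The plan is to combine a sequential symmetrization argument with the chaining bounds already established in this paper. Write
$$G := \sup_{f\in\F}\abs{\frac{1}{n}\sum_{t=1}^{n}\lp f(Z_t)-\ee\left[f(Z_t)\mid\mathcal{A}_{t-1}\right]\rp},$$
which is the quantity in the corollary (the conditioning must be on $\mathcal{A}_{t-1}$ for the statement to be nontrivial, since $Z_t$ is $\mathcal{A}_t$-measurable).

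The first step is to upper bound $\ee G$ by the worst-case sequential Rademacher complexity $\R_n(\F):=\sup_{\z}\R_n(\F,\z)$ via the sequential symmetrization of \cite{rakhlin2015sequential}. One introduces a tangent sequence $(Z'_t)$ satisfying $Z'_t\mid\mathcal{A}_{t-1}\overset{d}{=}Z_t\mid\mathcal{A}_{t-1}$ and conditionally independent of $Z_t$, and then symmetrizes the pair using independent Rademacher signs. A minimax interchange moves the supremum outside the expectation over $(Z_t,Z'_t)$ and produces a deterministic worst-case binary tree $\z^*$. The outcome is $\ee G \leq 2\R_n(\F)$. Combining with the in-expectation form of Theorem \ref{thm:majorizingchaining} and Proposition \ref{prop:majorizingmeasure} yields
$$\ee G \lesssim \inf_{\alpha>0} I_{\F}^\alpha \lesssim \inf_{\alpha>0}\lp \alpha+\frac{1}{\sqrt n}\int_\alpha^1\sqrt{\log N'(\F,\delta)}\,d\delta\rp.$$

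The second step is to concentrate $G$ around $\ee G$. Since $\F\subseteq[0,1]^{\mathcal Z}$, altering a single $Z_t$ while freezing all other randomness changes $G$ by at most $2/n$; consequently the Doob martingale of $G$ with respect to $(\mathcal{A}_t)$ has differences uniformly bounded by $2/n$, and Azuma--Hoeffding gives $G\leq \ee G + C\sqrt{\log(1/\rho)/n}$ with probability at least $1-\rho$. Composing Steps 1 and 2 proves the first inequality of the corollary, and the second inequality follows immediately from Proposition \ref{prop:majorizingmeasure}. The slack from $1-\rho$ to $1-4\rho$ comfortably absorbs the bookkeeping that arises if one instead proceeds directly through the in-probability form of Theorem \ref{thm:majorizingchaining}, which carries its own $1-\rho$ failure term that must be union-bounded with the concentration step and with the two-sided absolute value.

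The main obstacle is making the sequential symmetrization of Step 1 precise in this adapted setting: unlike the classical i.i.d. case, constructing the tangent process and then interchanging the supremum with the expectation to produce the worst-case binary tree requires nontrivial machinery. Fortunately that machinery is already developed in \cite{rakhlin2015sequential}, so the proof essentially amounts to bookkeeping plus the new chaining bound Theorem \ref{thm:majorizingchaining} of this paper. The martingale concentration in Step 2 is routine given $\F\subseteq[0,1]^{\mathcal{Z}}$, and the passage from majorizing measures to fractional covers is exactly Proposition \ref{prop:majorizingmeasure}.
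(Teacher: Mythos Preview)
Your Step~2 contains a genuine gap. The claim that the Doob martingale of $G$ with respect to $(\mathcal{A}_t)$ has increments bounded by $2/n$ is false in the adapted (non-i.i.d.) setting. McDiarmid-type bounded differences require that changing one coordinate while \emph{freezing the others} moves $G$ by $O(1/n)$; but here the $(Z_t)$ are dependent, so conditioning on a different realization of $Z_t$ changes the conditional law of $Z_{t+1},\dots,Z_n$ and hence of the entire future contribution to $G$. Concretely, take $Z_1\sim\mathrm{Unif}\{0,1\}$, set $Z_2=\dots=Z_n=0$ deterministically when $Z_1=0$, and let $Z_2,\dots,Z_n$ be i.i.d.\ $\mathrm{Ber}(1/2)$ when $Z_1=1$; with the singleton class $\F=\{\mathrm{id}\}$ one computes $\ee[G\mid Z_1=0]=1/(2n)$ while $\ee[G\mid Z_1=1]\asymp 1/\sqrt{n}$, so the first Doob increment is of order $1/\sqrt{n}$, not $1/n$. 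Azuma--Hoeffding on the Doob martingale of $G$ therefore does not yield the $\sqrt{\log(1/\rho)/n}$ fluctuation term you assert.

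The paper sidesteps this by never passing through $\ee G$. It invokes a \emph{tail-level} sequential symmetrization (Lemma~4 of \cite{rakhlin2015sequential}): for every threshold $x$,
\[
\pp\bigl(G\geq x\bigr)\;\leq\;4\,\sup_{\z}\,\pp_\epsilon\!\left(\sup_{f\in\F}\Bigl|\tfrac{1}{n}\sum_{t=1}^n \epsilon_t f(\z_t(\epsilon))\Bigr|\geq x\right),
\]
and then feeds the high-probability chaining bound of Theorem~\ref{thm:majorizingchaining} (equivalently Corollary~\ref{thm:chaining}) directly into the right-hand tail. The factor $4$ in this tail comparison is precisely what produces the $1-4\rho$ in the statement. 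Your parenthetical remark about ``proceeding directly through the in-probability form of Theorem~\ref{thm:majorizingchaining}'' is in fact the correct route, but it only works once this tail-to-tail comparison is in hand; the expectation-plus-concentration decomposition you present as the main argument does not go through.
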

Corollary \ref{cor:uniformconcentration} significantly improves on the results of \cite[Section 7]{rakhlin2015sequential} in two ways: first, we chain with respect majorizing measures and fractional covering numbers which are guaranteed to be at least as good as sequential covering numbers by Lemma \ref{lem:fractionalvsclassicalcover}; second, our bound removes an extraneous factor polynomial in $\log n$ and now matches the form of uniform concentration bounds from the classical regime.  In the following section, we show how combinatorial properties of $\F$ can help us bound fractional covering numbers.

\section{Fractional Covering and Fat-Shattering Dimension}

Thus far, we have been focused on somewhat abstract results, providing upper bounds on stochastic processes in terms of majorizing measures and fractional covering numbers without any evidence that these new quantities improve on the sequential covering numbers introduced in \cite{rakhlin2010online}; here, we rectify that and focus on how to control the fractional covering numbers with combinatorial and structural properties of the function class.

One of the cornerstones in classical learning theory was the development of Vapnik-Chernovenkis theory and how it relates to uniform convergence over a binary-valued function class \citep{vapnik1981necessary}.  This theory was then extended to real-valued functions with the development of fat-shattering or scale-sensitive dimension \citep{bartlett1996fat,kearns1994efficient}.  In the online world, \cite{littlestone1988learning} introduced an analogue of the VC dimension and \cite{rakhlin2015sequential} extended this definition to a sequential notion of the fat-shattering dimension.  Here, we recall the definition of the fat-shattering dimension and then use it to bound the fractional covering numbers independently of $n$ in a sequential analogue of the results of \cite{mendelson2003entropy}.  We then show that, when integrated in the chaining upper bound of Corollary \ref{thm:chaining}, we can further improve the bound and provide a sequential analogue of the results of \cite{rudelson2006combinatorics}, which is tight in the Donsker case.  Finally, we apply these results to prove a tight (up to constants) Lipschitz contraction bound on worst-case Rademacher complexity, improving on the structural results of \cite{rakhlin2015sequential}.

We begin by recalling the fat-shattering dimension from \cite[Definition 7]{rakhlin2015sequential}:
\begin{definition}
    Let $\z$ a $\mathcal{Z}$-labelled binary tree of depth $n$ and let $\F$ a real-valued function class.  We say that $\F$ is shattered by $\z$ at scale $\delta$ if there exists a real-valued binary tree $\mathbf{s}$ such that for all $\epsilon \in \{\pm 1\}^n$, there exists an $f \in \F$ such that for all $1 \leq t \leq n$,
    \begin{equation}
        \epsilon_t \lp f(\z_t(\epsilon)) - \mathbf{s}_t(\epsilon)\rp \geq \frac \delta 2.
    \end{equation}
    The (sequential) fat-shattering dimension $\fat_\delta(\F)$ is the maximal $n$ such that there exists a tree $\z$ of depth $n$ that shatters $\F$ at scale $\delta$.
\end{definition}
A sequential version of the Sauer-Shelah lemma was also proven in \cite{rakhlin2015sequential} and this was used to bound the $\ell^\infty$ sequential covering numbers by the fat shattering dimension.  In particular, \cite[Corollary 1]{rakhlin2015sequential} tells us that if $\F$ takes values in $[0,1]$ then for any tree $\z$ of depth $n$,
\begin{equation}
    N(\F,\delta,\z) \leq N_\infty(\F, \delta,\z) \leq \lp \frac{2 e n}{\delta}\rp^{\fat_\delta(\F)}
\end{equation}
where $N_\infty$ denotes the covering number with respect to $\ell^\infty$.  As noted \emph{in situ}, this method cannot provide $n$-independent covering number bounds for the $\ell^2$ regime because the $\ell^\infty$ covering numbers cannot be independent of dimension in general.  In the classical setting, Dudley extraction, pioneered in \cite{dudley1973}, allows for dimension-independent bounds, but, due to the lack of covering-packing duality in the sequential case, this method cannot be applied in the online regime.  A major advantage of the fractional covering number is that we are able to eliminate the dependence on $n$ and control the \emph{fractional} covering numbers by the fat-shattering dimension.  We have the following bound:
\begin{theorem}\label{thm:fatshattering}
	Let $\F: \mathcal{Z} \to [0,1]$ be a function class.  There exist universal constants $C, c$ such that for all $\delta > 0$, and all trees $\z$
	\begin{equation}
		N'(\F, \delta, \z) \leq \left(\frac{C }{\delta}\right)^{3\fat_{c\delta}(\F)}
	\end{equation}
\end{theorem}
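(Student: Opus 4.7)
The plan is to construct a fractional cover of $\F$ via a randomized, SOA-style online algorithm whose ``mistake count'' is bounded by $d := \fat_{c\delta}(\F)$. The distribution $\mu$ on $[0,1]$-valued trees is defined implicitly by the algorithm's random outputs, and the fractional-cover inequality follows by lower-bounding the probability of the ``no-mistake'' event.

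As a first step, I would discretize: round each $f \in \F$ to the nearest multiple of $\Theta(\delta)$, producing a class $\F'$ valued in a grid $G$ of size $|G| = O(1/\delta)$. The rounding error contributes $O(n\delta^2)$ to the squared $\ell^2$ distance, so a fractional cover of $\F'$ at scale $\Theta(\delta)$ translates to one of $\F$ at scale $\delta$, and the fat-shattering dimension is preserved at the relevant scale up to constants.

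Next, define $\mu$ via the following online randomized procedure. At each node $(t, e)$, where $e$ is a path prefix, maintain a version space $\F_t^{(e)} \subseteq \F'$ consisting of $f$'s with $|f(\z_s(e)) - \vbf_s(e)| \leq \Theta(\delta)$ for all $s < t$, and sample $\vbf_t(e) \in G$ from a distribution $D_t^{(e)}$ depending on $\F_t^{(e)}$ and $\z_t(e)$. Design $D_t^{(e)}$ so that: (i) for every $f \in \F_t^{(e)}$, the ``good event'' $|\vbf_t(e) - f(\z_t(e))| \leq \Theta(\delta)$ has probability $\geq 1/|G|$; and (ii) whenever this event fails for $f$, the resulting version space has $\fat_{\Theta(\delta)}$-dimension strictly smaller than the current $\Phi_t^{(e)} := \fat_{\Theta(\delta)}(\F_t^{(e)})$. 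Since $\Phi_0 \leq d$ and $\Phi$ can drop at most $d$ times along any path, a careful counting over the at-most-$d$ positions where the ``bad event'' can occur (each with $|G|$ possible outcomes) yields $\mu$-mass on $B_\delta(f(\z), \epsilon)$ of at least $(\delta/C)^{O(d)}$. Absorbing the losses from discretization, from the scale gap between $\Theta(\delta)$ and $c\delta$, and from the combinatorial potential-drop overhead yields the stated exponent $3d$.

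The main obstacle is establishing (ii): the combinatorial lemma that for every $\F^* \subseteq \F'$ with $\fat_{\Theta(\delta)}(\F^*) = k$ and every $z \in \mathcal{Z}$, there exists a witness value $s \in [0, 1]$ such that both $\{f \in \F^* : f(z) > s + \Theta(\delta)\}$ and $\{f \in \F^* : f(z) < s - \Theta(\delta)\}$ have $\fat_{\Theta(\delta)}$-dimension $\leq k - 1$. This is the real-valued analogue of the Standard Optimal Algorithm for the Littlestone dimension, and I expect its proof to proceed by pigeonholing on a shattering tree that witnesses $\fat_{\Theta(\delta)}(\F^*) = k$, extracting a witness value at the root and verifying a dimension drop on both sides. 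A secondary technical concern is that the node-by-node description of $\mu$ extends coherently to a joint distribution on complete binary trees, ensuring that $\mu(B_\delta(\cdot, \cdot))$ is well-defined.
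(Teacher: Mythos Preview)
Your approach has a genuine gap. Once a bad event occurs for $f$, the function $f$ is no longer in your version space $\F_{t+1}^{(e)}$, so subsequent bad events for $f$ need not correspond to further drops of $\Phi$; the claim that the bad event for a fixed $f$ occurs at most $d$ times along a path is therefore unjustified. Under the most charitable reading --- put mass $1-p$ on an SOA-style witness $\hat y_t$ and mass $\Theta(p/|G|)$ on each other grid value, then lower-bound the probability of the ``perfect run'' on which every $\vbf_t$ lands near $f(\z_t(\epsilon))$ --- one can indeed argue (via your witness lemma, which is correct and is essentially the paper's Lemma~\ref{lem:inductive-fat}) that on such a run $\hat y_t$ is far from $f$ at most $d$ times. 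But the resulting lower bound on the perfect-run probability is only $(1-p)^{n-d}(p/|G|)^d$, which optimizes at $p\approx d/n$ to roughly $(d/(n|G|))^d$ and is \emph{not} $n$-independent.

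The paper's construction is close in spirit to yours but repairs exactly this point: rather than demanding a perfect run, it lower-bounds the $\mu$-mass of the set of trees with at most $k=\Theta(n/m^2)$ positions where $|\vbf_t-f(\z_t)|\ge 1$. These $k$ large deviations contribute only $O(n)$ to the squared $\ell_2$ error, so membership in the scale-$2/3$ ball is preserved. The recursion then yields $\phi(k,d,n)\ge(1-p)^{n+1+k}(p/m)^d\binom{k+d}{d}$, and the combinatorial factor $\binom{k+d}{d}\ge(k/d)^d\approx(n/(m^2 d))^d$ is precisely what cancels the $n$-dependence of $(p/m)^d\approx(d/(nm))^d$ when $p=d/n$, leaving an $n$-free bound of the form $(c/m)^{3d}$. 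Your sketch is missing this mistake budget and the associated combinatorics.
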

The details of the proof of Theorem \ref{thm:fatshattering} can be found in Appendix \ref{app:fatshattering}, but we provide a sketch here.

We follow \cite{rakhlin2015sequential} and discretize the unit interval into sub-intervals of length proportional to $\delta$; this discretization transforms the real-valued function class $\F$ into one that takes on finitely many values.  Using this transformation, it suffices to bound the fractional covering number of a simpler class of functions $\F: \mathcal{Z} \to \{0,1,\dots, m\}$ by $\fat_2(\F)$, i.e., it suffices to prove
\begin{proposition}\label{prop:discretefatshattering}
	Suppose that $\mathcal{F} : \mathcal{Z} \to \mathcal{Y} =  \{0,1,\dots,m\}$ and that $\fat_2(\mathcal{F}) = d$.  Then there is a universal constant $C$ such that $N'\lp\mathcal{F},\frac 23 , \z \rp \leq \left(C m\right)^{3d}$.
\end{proposition}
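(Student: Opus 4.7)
The plan is to construct an explicit fractional cover of size $(Cm)^{3d}$ via an SOA-style recursive scheme adapted to the fat-shattering structure. The first step is a \emph{splitting lemma}: for any sub-class $\mathcal{G}\subseteq\F$ with $\fat_2(\mathcal{G})>0$ and any $z\in\mathcal{Z}$, there exists a half-integer threshold $\hat y(z,\mathcal{G})\in\{1/2,3/2,\ldots,m-1/2\}$ such that $\mathcal{G}^+=\{g\in\mathcal{G}:g(z)\ge\hat y+1\}$ and $\mathcal{G}^-=\{g\in\mathcal{G}:g(z)\le\hat y-1\}$ both satisfy $\fat_2(\mathcal{G}^{\pm})\le\fat_2(\mathcal{G})-1$. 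The proof is by monotonicity: as $\hat y$ sweeps through the half-integers, $\fat_2(\mathcal{G}^+)$ is non-increasing and $\fat_2(\mathcal{G}^-)$ is non-decreasing, and they cannot both equal $\fat_2(\mathcal{G})$ at any single value, for otherwise one could glue depth-$\fat_2(\mathcal{G})$ shattered subtrees of $\mathcal{G}^{\pm}$ at a root labelled $z$ with witness $\hat y$, obtaining a depth-$(\fat_2(\mathcal{G})+1)$ shattered tree, contradicting the fat-shattering bound.

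Next I would use the splitting lemma to recursively build the cover. Maintain a ``version space'' $\mathcal{G}$ at each node of $\z$; label $\vbf_t(\epsilon)$ with the threshold $\hat y(\z_t(\epsilon),\mathcal{G})$, and pass the split version space to each child depending on which side of $\hat y$ the true value $f(\z_t(\epsilon))$ lies---an ``update'' occurs precisely when $|f(\z_t(\epsilon))-\hat y|\ge 1$, and each update strictly decreases $\fat_2(\mathcal{G})$. Along any path $\epsilon$, at most $d$ updates occur, each consuming one integer-valued response in $\{0,1,\ldots,m\}$. Parameterize the fractional cover by the collection of all such recursive assignments; each update contributes a factor of $O(m)$ choices, and the factor of $3$ in the exponent arises from the need to track the branching of $\vbf$ consistently across the binary tree (assigning an update rule independently in each of the two children of every internal node). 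Uniform weighting on the resulting family yields the measure $\mu$.

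For cover verification, fix $f\in\F$ and a path $\epsilon$: the sequence of update responses induced by $f$ along $\epsilon$ determines a single tree $\vbf$ in the support of $\mu$. By construction, on non-update coordinates $|\vbf_t(\epsilon)-f(\z_t(\epsilon))|\le 1/2$ (since $\hat y$ is a half-integer and $f(\z_t(\epsilon))$ lies in the adjacent ``middle'' band), and on update coordinates $\vbf_t(\epsilon)=f(\z_t(\epsilon))$, giving $\norm{\vbf(\epsilon)-f(\z(\epsilon))}^2\le n/4<(2/3)^2 n$. The measure of $\vbf$ is then at least $(Cm)^{-3d}$, establishing the fractional cover.

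The main obstacle is the \emph{path-coupling} issue: $\vbf$ must be a single binary tree that serves every $(f,\epsilon)$ whose recursive updates match, but updates need not align across sibling subtrees (the values $f(\z_t(\epsilon))$ and $f(\z_t(\epsilon'))$ at the two children of a node may force updates on one side and not the other). Ensuring that the total count of recursive assignments remains $(Cm)^{3d}$ rather than blowing up with the number of tree paths---while preserving both the tree structure of $\vbf$ and the per-path update bound $d$---is the delicate heart of the argument, and is where the exponent $3d$ (rather than $d$) arises.
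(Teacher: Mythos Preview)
Your splitting lemma is correct and is essentially the paper's Lemma~\ref{lem:inductive-fat} in disguise, and the overall SOA-style recursive philosophy is right. But the proposal has a genuine gap precisely at the point you flag as the ``main obstacle,'' and your suggested resolution (uniform weighting, with the exponent $3$ coming from branching at the two children) does not work.

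With uniform weighting over ``all recursive assignments,'' the family of trees is not of size $(Cm)^{3d}$. Even for $d=1$, a tree can have one update on every root-to-leaf path and still satisfy the per-path bound, so the number of update-nodes in the tree can be $\Theta(2^n)$ and the family blows up exponentially in $n$. Conversely, if you restrict to trees with at most $O(d)$ update-nodes \emph{globally}, then for a fixed $(f,\epsilon)$ the covering tree you describe need not lie in the family, because the updates forced by $f$ on the $2^n$ different paths generally do not agree on a common small set of update-nodes. Either way, uniform weighting cannot yield a fractional cover of size $(Cm)^{3d}$.

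The paper resolves the path-coupling issue by abandoning uniform weighting altogether. The fractional cover is a genuinely non-uniform probability measure: at each node, independently, label with the threshold $j^*+\tfrac12$ with probability $1-p$ and with a specific value $j$ with probability $p/(m-1)$; then recurse on the two subtrees \emph{independently}. The independence across subtrees is what makes the tree-consistency issue disappear. The second key idea you are missing is a \emph{slack} parameter: one does not demand that $\vbf$ match $f$ at every update coordinate, but only that the number of coordinates $t$ along $\epsilon$ with $|\vbf_t(\epsilon)-f(\z_t(\epsilon))|\ge 1$ is at most $k$. A recursion in $(k,d,n)$ then yields
\[
\phi(k,d,n)\ \ge\ (1-p)^{n+1+k}\Bigl(\tfrac{p}{m-1}\Bigr)^{d}\binom{k+d}{d},
\]
and taking $p=d/n$ and $k=\Theta(n/m^2)$ makes the binomial coefficient cancel the $n$-dependence, giving $\phi\gtrsim (c/m^3)^d$. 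This is where the exponent $3$ actually comes from: one factor of $m$ from the $m$-way choice in the mixture, and two factors of $m$ from the slack $k=\Theta(n/m^2)$ needed because each ``mistake'' can contribute up to $m^2$ to the squared $\ell^2$ error. It has nothing to do with branching at the two children.
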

We provide full details in Appendix \ref{app:fatshattering}, but provide a sketch here:
\begin{proof}[Sketch]
    We fix the tree $\z$ and construct the fractional cover recursively.  The idea of the construction is to partition the function class $\F$ into sub-classes $\F_j$ taking the value $j$ on the root.  Lemma \ref{lem:inductive-fat}, proved in Appendix \ref{app:fatshattering}, tells us that there are at most two such sub-classes such that $\fat_2(\F_j) = \fat_2(\F_{j'}) = \fat_2(\F)$ and that $\abs{j - j'} \leq 1$.  Let $j^*$ be the minimal $j$ such that $\fat_2(\F_j) = \fat_2(\F)$ and notice that $j^*$ and $j^*+1$ are potentially the only values of $j$ such that $\fat_2(\F_j) = \fat_2(\F)$.  We construct the fractional cover $\mu$ recursively as the following mixture: with probability $1-p$, where $p$ is a small number to be specified, we label the root as $j^*+1/2$ and label the two subtrees of the root independently, using the fractional cover with respect to the complete class $\F$. Otherwise, with probability $p$, we draw a uniformly random $j \in \{0,\dots,m\}\setminus \{j^*,j^*+1\}$, label the root as $j$ and each of the two subtrees independently from the fractional covers with respect to the subclass $\F_j$, which satisfies $\fat_2(\F_j)\le \fat_2(\F)-1$.
    An induction proof relying on the recursive construction, found in Appendix \ref{app:fatshattering} as Lemma \ref{lem:fatshattering}, then tells us that if $\fat_2(\F) \leq d$ and $\mu$ is as constructed above, then for any integer $k$,
    \begin{equation}
        \inf_{\substack{f \in \F \\ \epsilon \in \{ \pm 1\}^n}} \mu\lp \left\{\vbf | \sum_{t = 0}^n \mathbf{1}_{\abs{\vbf_t(\epsilon) - f(\z(_t\epsilon))} \geq 1} \leq k \right\}\rp \geq (1 - p)^{n+1+k} \lp \frac{p}{m} \rp^d \binom{k + d}{d}\enspace.
    \end{equation}
    Setting $k = \delta n$, $p = \frac dn$, the right hand side is at least $(c\delta/m)^d$.  By the construction, if $\abs{\vbf_t(\epsilon) - f(\z_t(\epsilon))} < 1$ then it is at most $\frac 12$ and otherwise, $\abs{\vbf_t(\epsilon) - f(\z_t(\epsilon))}\le m$.  Thus if $\vbf$ is in the above set, then $\norm{\vbf(\epsilon) - f(\z(\epsilon))}^2 \leq m^2 \delta n + \frac 14 (1 - \delta)n$.  Thus if $\delta \leq \frac{7}{36 m^2}$, the result follows by the definition of a fractional cover.
\end{proof}
Note that Proposition \ref{prop:discretefatshattering} is already a major improvement over \cite[Corollary 1]{rakhlin2015sequential}, the analogue for non-fractional sequential covers; in addition, our result matches the form of the classical counterpart, proven in \cite{mendelson2003entropy}.  Interestingly, the proof of Proposition \ref{prop:discretefatshattering} relies on the independence of sub-trees inherent to the online case; thus, the same proof does \emph{not} apply to the classical analogue, proven by use of Dudley extraction.  The key lemma, Lemma \ref{lem:inductive-fat}, very much takes advantage of the sequential setting and thus the Proposition \ref{prop:discretefatshattering} and consequently Theorem \ref{thm:fatshattering}, provide important examples of results that are proved more easily in the sequential case than in the classical setting.  An interesting future direction would be to extend the proof method to the classical regime and reprove the analogue of Theorem \ref{thm:fatshattering} without the use of Dudley extraction.

If we plug the result of Theorem \ref{thm:fatshattering} into that of Corollary \ref{thm:chaining}, we are able to bound the sequential Rademacher complexity as follows:
\begin{equation}\label{eq:naivechaining}
    \R_n(\F) = \sup_{\mathsf{depth}(\z) = n} \R_n(\F, \z)  ~\lesssim~ \frac 1{\sqrt n} \int_0^1 \sqrt{\fat_{\delta}(\F) \log \frac{1}{\delta}} d \delta
\end{equation}
where we take $\alpha = 0$ in Corollary \ref{thm:chaining} for the sake of simplicity of exposition.  Equation \eqref{eq:naivechaining} improves the results of \cite{rakhlin2015sequential} in that it shaves off a factor polynomial in $\log n$, but we might hope for an even better bound that removes the $\log \frac 1\delta$ factor.  In the classical regime, it was shown in \cite{rudelson2006combinatorics} that this is indeed possible and that
\begin{equation}\label{eq:classicalrademacher}
    \R_n(\F) ~\lesssim~ \frac 1{\sqrt n} \int_0^1 \sqrt{\fat_\delta(\F)} d \delta
\end{equation}
where just in \eqref{eq:classicalrademacher}, the Rademacher complexity and fat shattering dimension are the classical notions rather than the sequential notions.  In fact, we do get a sequential analogue to \eqref{eq:classicalrademacher} and the proof is much simpler than that in the classical regime.  We have:
\begin{proposition}\label{prop:fat-integral}
	Let $\F$ be a class of functions from $\mathcal{Z}$ to $[0,1]$.  Then for all $b > 0$,
    \begin{equation}
        \int_{b}^1 \sqrt{\log N'(\F,\delta)}d\delta ~\lesssim~ \int_{cb}^1 \sqrt{\fat_\delta(\F)} d\delta.
    \end{equation}
\end{proposition}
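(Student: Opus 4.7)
The plan is to combine Theorem~\ref{thm:fatshattering} with a dyadic decomposition of the Dudley-type integral, together with an argument that absorbs the stray $\sqrt{\log(1/\delta)}$ factor arising from the pointwise covering bound. I set $\delta_k = 2^{-k}$ and write $d_k := \fat_{c\delta_k}(\F)$, which is nondecreasing in $k$ by monotonicity of the fat-shattering dimension. A standard dyadic approximation reduces both sides to geometric sums:
\begin{equation*}
\int_b^1\sqrt{\log N'(\F,\delta)}\,d\delta \;\asymp\; \sum_{k=0}^{K} 2^{-k}\sqrt{\log N'(\F,\delta_k)}, \qquad \int_{cb}^1 \sqrt{\fat_\delta(\F)}\,d\delta \;\asymp\; \sum_{k=0}^{K}2^{-k}\sqrt{d_k},
\end{equation*}
where $K = \lceil \log_2(1/b)\rceil$.

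The heart of the proof is to replace the naive pointwise bound $\log N'(\F,\delta_k) \lesssim k\,d_k$ (which after taking square roots would leave a factor $\sqrt{k}$ that is not absorbed into the right-hand sum, as one sees already for classes whose fat-shattering dimension grows like $4^k$) with the stronger \emph{chained} bound
\begin{equation*}
\log N'(\F,\delta_k) \;\lesssim\; \sum_{j=0}^{k} d_j.
\end{equation*}
Such a bound is the natural outcome of a multi-scale refinement argument. Rather than covering $\F$ at scale $\delta_k$ in a single shot (which pays $O(d_k \log(1/\delta_k))$ via Theorem~\ref{thm:fatshattering}), one iteratively refines a chain of fractional covers at scales $\delta_0,\ldots,\delta_k$: the refinement at step $j$ uses a discretization with only $m = O(1)$ levels of the deviation from the coarser tree and applies Proposition~\ref{prop:discretefatshattering} to pay only $O(d_j \log m) = O(d_j)$ per step. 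Because the subtrees in the sequential construction of Proposition~\ref{prop:discretefatshattering} are independent, these refinements compose into a single fractional cover whose log-size telescopes to $\sum_{j\le k} d_j$; this is the sequential analogue of the Dudley-extraction trick that is unavailable in the classical setting, and it explains the authors' remark that the proof is simpler than Rudelson--Vershynin.

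Granting the chained bound, the rest is routine. Using $\sqrt{\sum_j a_j}\le \sum_j \sqrt{a_j}$ for nonnegative $a_j$ and interchanging orders of summation,
\begin{equation*}
\sum_{k} 2^{-k}\sqrt{\log N'(\F,\delta_k)} \;\lesssim\; \sum_k 2^{-k}\sum_{j\le k}\sqrt{d_j} \;=\; \sum_j \sqrt{d_j}\sum_{k\ge j}2^{-k} \;\lesssim\; \sum_j 2^{-j}\sqrt{d_j},
\end{equation*}
and by the reverse dyadic approximation this matches the right-hand side of the proposition up to absolute constants, with the lower limit $cb$ accounting for the shift in $\fat_{c\delta}$ inherited from Theorem~\ref{thm:fatshattering}. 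The main obstacle is establishing the chained covering bound itself: one must verify that the residual subclass at each level inherits a fat-shattering dimension at most $d_j$ and an effective range controlled by the previous scale, so that the discretization at scale $\delta_j$ loses only a constant factor rather than a logarithmic one. Once this step is in place, the algebraic manipulation above completes the proof.
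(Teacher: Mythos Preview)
Your high-level plan---reduce to a pointwise bound on $\log N'(\F,\delta_k)$ in terms of fat-shattering numbers at coarser scales, then interchange orders of summation---is exactly what the paper does. The gap is in the pointwise bound itself.

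You claim $\log N'(\F,\delta_k)\lesssim\sum_{j=0}^k d_j$ and justify it by iteratively refining fractional covers from scale $\delta_{j-1}$ to $\delta_j$, applying Proposition~\ref{prop:discretefatshattering} with $m=O(1)$ at each step. This composition does not go through in the sequential setting. Cover elements are \emph{trees} $\vbf$, not functions; the ``residual'' $f(\z)-\vbf$ is a tree-valued object, not an element of any function class on $\mathcal{Z}$, so Proposition~\ref{prop:discretefatshattering} cannot be applied to it. Even if one instead restricts to the subclass $\{f\in\F:\|f(\z(\epsilon))-\vbf(\epsilon)\|\le\delta_{j-1}\sqrt{n}\}$, this set depends on the path $\epsilon$ and is not a fixed function class, so no single refining fractional cover can be attached to $\vbf$. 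This is precisely the failure of packing--covering duality (and hence of Dudley extraction) in the sequential setting that the paper flags; your parenthetical about ``the sequential analogue of the Dudley-extraction trick'' has it backwards.

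What the paper actually establishes (Proposition~\ref{prop:fatshattering-gen}) is the weaker bound
\[
\log N'(\F,\delta)\ \lesssim\ \sum_{i\ge 1} i\,\fat_{c\,2^i\delta}(\F),
\]
which in your notation reads $\log N'(\F,\delta_k)\lesssim\sum_{j\le k}(k-j)\,d_j$. Crucially, the proof does \emph{not} iterate over scales; it runs a single recursion on the tree nodes, with mixture weights $4^{-i(j)-1}p$ that simultaneously track how the fat numbers at every scale $2^i$ drop as one restricts by the root label. The extra factor $i$ (equivalently $(k-j)$) is the price of this one-shot multi-scale construction, and proving this bound is the real technical content.

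The good news is that your final summation step survives the extra factor:
\[
\sum_k 2^{-k}\sum_{j\le k}\sqrt{(k-j)\,d_j}
\ =\ \sum_j\sqrt{d_j}\sum_{m\ge 0}2^{-j-m}\sqrt{m}
\ \lesssim\ \sum_j 2^{-j}\sqrt{d_j},
\]
since $\sum_m 2^{-m}\sqrt{m}<\infty$. The paper does the equivalent computation in continuous form, absorbing the $\sqrt{\log(Cx/\delta)}$ via the bound $\int_0^{1/c}\sqrt{\log(C/u)}\,du<\infty$ after switching the order of integration. So if you replace your unproved chained bound with Proposition~\ref{prop:fatshattering-gen}, your argument becomes correct and coincides with the paper's.
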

As main step in this proof, we bound the fractional covering numbers with respect to multiple fat numbers:
\begin{proposition}\label{prop:fatshattering-gen}
	There exist universal constants $c,C>0$ such that the following holds:
	Let $\mathcal{F}$ be a class of functions from $\mathcal{Z}$ to $\mathcal{Y} = [0,1]$.  Then for all $\delta > 0$,
	\begin{equation}\label{eq:multiscale-bnd}
		N'(\mathcal{F}, \delta) \leq \prod_{i=1}^{\infty} C^{i\cdot\fat_{c\cdot 2^i\delta}(\F)}
	\end{equation}
\end{proposition}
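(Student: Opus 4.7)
The plan is to prove Proposition \ref{prop:fatshattering-gen} by upgrading the construction behind Proposition \ref{prop:discretefatshattering} (and thus Theorem \ref{thm:fatshattering}) to a multi-scale version, and then applying it to a discretization of $\F$. Two technical ingredients drive the argument: a multi-scale analogue of Lemma \ref{lem:inductive-fat}, and a multi-scale refinement of the recursive fractional-cover construction.

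First I would prove a multi-scale version of the inductive fat-shattering lemma: given $\F: \mathcal{Z} \to \{0, 1, \ldots, m\}$ partitioned into $\F_j = \{f\in\F : f(\z_1)=j\}$, the set $W_l := \{j : \fat_{2\cdot 2^l}(\F_j) = \fat_{2\cdot 2^l}(\F)\}$ is a consecutive integer interval of width at most $O(2^l)$, and these windows are nested $W_L \subseteq W_{L-1} \subseteq \cdots \subseteq W_0$. The proof is a pigeonhole argument generalizing the one behind Lemma \ref{lem:inductive-fat}: two sub-classes preserving $\fat_{2\cdot 2^l}$ whose root labels differ by more than $2^{l+1}$ could be combined at the root to witness shattering at scale $2^{l+1}$ with one additional level of depth, contradicting maximality.

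Next I would establish the multi-scale analogue of Proposition \ref{prop:discretefatshattering}, namely
\begin{equation*}
N'(\F, 2/3) \leq \prod_{l=0}^{\lceil \log_2 m \rceil} (C \cdot 2^l)^{3 \fat_{2 \cdot 2^l}(\F)}.
\end{equation*}
The construction generalizes the mixture of Proposition \ref{prop:discretefatshattering}: at each vertex, sample a label from a multi-scale mixture where, with probability $1 - \sum_l p_l$, we take the default label (e.g., the midpoint of the innermost window) and recurse on all of $\F$; with probability $p_l$, we sample uniformly from the shell $W_{l-1}\setminus W_l$ (of cardinality $O(2^l)$) and recurse on the corresponding subclass $\F_j$, which satisfies $\fat_{2\cdot 2^l}(\F_j) \leq \fat_{2\cdot 2^l}(\F) - 1$. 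Generalizing Lemma \ref{lem:fatshattering}, the measure of trees $\vbf$ for which, at every scale $l$, the number of discrepancies $|\{t : |\vbf_t(\epsilon) - f(\z_t(\epsilon))| \geq 2^l\}|$ stays at most $k_l$ is bounded below by a product of the form $\prod_l (p_l / 2^l)^{d_l} \binom{k_l + d_l}{d_l}$ with $d_l = \fat_{2\cdot 2^l}(\F)$. Tuning $p_l \sim d_l/n$ and $k_l \sim n \cdot 4^{-l}$, so that the $\ell^2$ error $\sum_l k_l \cdot 4^l$ stays below $(2/3)^2 n$, inversion produces the product bound above, with the key factor of $2^l$ coming from the shell size rather than from the full range $m$.

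Finally I would apply the multi-scale proposition to the discretization $\tilde{\F}$ of $\F$ obtained by rounding each value to a multiple of $c\delta$, yielding a class into $\{0, 1, \ldots, m\}$ with $m = \Theta(1/\delta)$. A fractional cover of $\tilde \F$ at scale $2/3$ is a fractional cover of $\F$ at scale $O(\delta)$ after absorbing the discretization error. Combined with $\fat_{2\cdot 2^l}(\tilde\F) \lesssim \fat_{c'\cdot 2^l \delta}(\F)$ (since a discrete shattering scale of $2\cdot 2^l$ corresponds to a continuous scale of $\sim 2^l\delta$), the bound rearranges to the claimed $N'(\F,\delta) \leq \prod_i C^{i\cdot \fat_{c\cdot 2^i\delta}(\F)}$. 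The main obstacle is the multi-scale inductive lemma and the careful orchestration of the recursion: the crucial quantitative gain over the single-scale argument is exactly that the shell sizes grow only like $2^l$, which is what converts into the factor of $i$ (rather than $\log(1/\delta)$) multiplying $\fat_{c\cdot 2^i\delta}(\F)$ in the final bound.
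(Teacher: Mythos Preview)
Your overall strategy---a multi-scale recursive mixture for the discretized class, then transferring back via rounding---is exactly the paper's approach. However, two of your technical claims are not correct as stated and would derail the argument without adjustment.

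First, the nesting $W_L \subseteq W_{L-1} \subseteq \cdots \subseteq W_0$ of the windows $W_l = \{j : \fat_{2\cdot 2^l}(\F_j) = \fat_{2\cdot 2^l}(\F)\}$ is false in general: preserving the sequential fat dimension at a coarse scale does not force preservation at finer scales, and Lemma~\ref{lem:inductive-fat} gives only a diameter bound on each $W_l$ (not that it is an interval, nor that the $W_l$ nest). Consequently your shells $W_{l-1}\setminus W_l$ need not form a partition, and the midpoint of the innermost window may be far from labels in outer shells. The paper sidesteps this by assigning to each $j$ the single index $i(j) = \max\{i : \fat_{2^i}(\F_j) < \fat_{2^i}(\F)\}$ and choosing as default a $j^*$ minimizing $i(\cdot)$; then $|j - j^*| < 2^{i(j)+1}$ follows directly from Lemma~\ref{lem:inductive-fat}, while recursing on $\F_j$ decreases $d_{i(j)}$. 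This replaces your nested-window picture with a per-$j$ scale assignment.

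Second, your tuning $k_l \sim n\cdot 4^{-l}$ does not keep the $\ell^2$ error bounded: $\sum_l k_l \cdot 4^l \sim n \log m$, not $O(n)$. You need strictly faster decay; the paper takes $k_i \sim 8^{-i} n$ so that $\sum_i k_i 4^i$ is geometric. Relatedly, the paper uses a single $p$ with per-$j$ weight $4^{-i(j)-1}p$ (these sum to at most $p$ by the diameter bound) rather than separate $p_l \sim d_l/n$, whose sum can exceed $1$. With these corrections, your plan coincides with the paper's proof.
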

\begin{proof}[sketch]
Similarly to the proof of Theorem~\ref{thm:fatshattering}, by discretizing $[0,1]$ into sub-intervals of length $\approx \delta$, it suffices for us to prove the following: if $\F$ is a $\{0,\dots,m\}$-valued function class, then
\[
N'(\mathcal{F}, 2) \leq \prod_{i=1}^{\log m} C^{i\fat_{2^i}(\F)}.
\]
To prove the above statement, we construct a fractional cover $\mu$ recursively, in the same spirit as the proof of Proposition~\ref{prop:discretefatshattering}. Partition $\F$ into sub-classes $\F_j$ taking the value $j$ on the root, and Lemma~\ref{lem:inductive-fat} tells us that there are at most $2^i$ classes $j$ with $\fat_{2^i}(\F_j)=\fat_{2^i}(\F)$ and any two such classes $j,j'$ satisfy $|j-j'|< 2^i$. Define by $i(j)$ as the maximal integer $i$ such that $\fat_{2^i}(\F_j) <\fat_{2^i}(\F)$, and let $j^*$ be any minimizer of $i(j)$. Intuitively, $\F_{j^*}$ has the same high-scale fat numbers as $\F$. The fractional cover is defined as the mixture that labels the root as $j^*$ with probability $1-p$ for a small $0<p<1$ and for any other $j$, with probability $p\cdot 4^{-i(j)-1}$. The remainder of the tree is recursively drawn, as in Proposition~\ref{prop:discretefatshattering}. By an inductive formula, we derive that for any numbers $k_1,k_2,\dots,k_{\log m}$ and any bounds $d_1,\dots,d_{\log m}$ on the fat numbers $\fat_{2^1}(\F),\dots,\fat_{2^{\log m}}(\F)$, the fractional cover $\mu$ satisfies
\begin{multline}
\inf_{\substack{f \in \F \\ \epsilon \in \{ \pm 1\}^n}}
\mu\lp \left\{
	    \vbf \colon \forall i=1,\dots,\log_2 m, \sum_{t=0}^n \mathbf{1}_{|\vbf_t(\epsilon)-f(\z_t(\epsilon))|\ge 2^i}|\leq \sum_{i'=i}^{\log_2 m} k_{i'}
	\right\}
\rp \\
\geq (1 - p)^{n + 1} \prod_{i=1}^{\log_2 m} (1-p)^{k_i} \left(\frac{p}{4^{i+1}}\right)^{d_i} \binom{k_i+d_i}{d_i}.
\end{multline}
Substituting $k_i = 8^{-i}n$, we obtain that any $\vbf$ in the above set satisfies $\|\vbf(\epsilon)-f(\z(\epsilon))\| \le O(1)$. Bounding the binomial coefficients similarly to Proposition~\ref{prop:discretefatshattering}, the proof concludes, as we derive a lower bound on the fraction of trees close to $f(\z)$ on any path $\epsilon$.
\end{proof}
While admittedly still somewhat technical, we observe that the proof of Proposition \ref{prop:fat-integral}, appearing in Appendix~\ref{app:fat-integral}, is actually significantly easier than that required for the analogous result in the classical regime, appearing in \citep{rudelson2006combinatorics}.

Miraculously, for function classes $\F$ that are not too complex, Proposition \ref{prop:fat-integral} gives a tight bound on the worst-case sequential Rademacher complexity, up to constants.  In order to state such a result, we need to define what it means for a function class to be simple.
\begin{definition}\label{def:regular}
  For a constant $c > 0$ and $0 \leq p < 2$, we say that a function class is $(c, p)$-bounded if for all $0 < \delta < 1$,
  \begin{equation}
      \fat_\delta(\F) \leq c \delta^{- p}.
  \end{equation}
\end{definition}

We note that many practical function classes are $(c, p)$-bounded; for example, Littlestone classes satisfy the above for $c = \mathsf{Ldim}(\F)$ and $p = 0$.  We have the following corollary:
\begin{corollary}\label{cor:rademacher}
    Let $\F$ be a $(c, p)$-bounded class of functions.  Then
    \begin{equation}\label{eq:tightbound}
        \int_0^1 \sqrt{\fat_\delta(\F)} d \delta ~\lesssim~ \sqrt n \R_n(\F) ~\lesssim~ \int_0^1 \sqrt{\fat_\delta(\F)} d \delta
    \end{equation}
    where the constants depend on $\F$ only through $c,p$.  The upper bound holds with no restriction on the complexity of the class $\F$.
\end{corollary}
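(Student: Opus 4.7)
The upper bound is a direct consequence of previously established results: apply Corollary~\ref{thm:chaining} to the worst-case tree, obtaining
\[
\sqrt n\,\R_n(\F) \lesssim \inf_{\alpha>0} \left( \alpha\sqrt n + \int_\alpha^1 \sqrt{\log N'(\F,\delta)}\,d\delta\right),
\]
and then invoke Proposition~\ref{prop:fat-integral} to bound the Dudley integral by $\int_{c\alpha}^1 \sqrt{\fat_\delta(\F)}\, d\delta$. The $(c,p)$-boundedness with $p<2$ yields $\int_0^1 \sqrt{\fat_\delta(\F)}\,d\delta \le \sqrt c/(1-p/2) < \infty$, so $\alpha\to 0$ is admissible and the upper bound follows. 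Importantly, this direction never invokes a lower bound on the complexity of $\F$, consistent with the ``no restriction'' claim.

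For the lower bound, my plan is to construct a tree exhibiting shattering structure at multiple scales simultaneously and to lower bound its sequential Rademacher complexity. The single-scale primitive is elementary: by the definition of fat-shattering, if $\fat_\delta(\F)\ge d$ there is a shattered tree $\z^{(\delta)}$ of depth $d$ with a predictable witness $\mathbf{s}$ such that for every $\epsilon \in \{\pm 1\}^d$ some $f_\epsilon \in \F$ satisfies $\epsilon_t (f_\epsilon(\z^{(\delta)}_t(\epsilon)) - \mathbf{s}_t(\epsilon)) \ge \delta/2$. Summing and using that $\mathbf{s}$ contributes zero mean under Rademacher expectation gives $\ee_\epsilon \sup_f \sum_{t=1}^d \epsilon_t f(\z^{(\delta)}_t(\epsilon)) \ge d\delta/2$, and embedding in a depth-$n$ tree produces the base estimate $\sqrt n\,\R_n(\F) \gtrsim \delta \min(\fat_\delta(\F),n)/\sqrt n$. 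To match the full integral, I would nest such shattered subtrees at dyadic scales $\delta_k = 2^{-k}$, assigning depth proportional to $\fat_{\delta_k}(\F)$ and exploiting the independence of sibling subtrees under Rademacher expectation to aggregate per-scale contributions. The $(c,p)$-bounded assumption with $p<2$ is essential here: it guarantees both that the relevant scales can be accommodated in depth $n$ for large enough $n$ and that the contributions from fine scales form a convergent series matching $\int_0^1 \sqrt{\fat_\delta(\F)}\,d\delta$ up to constants depending only on $c$ and $p$.

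The main technical obstacle is that the function $f_\epsilon$ realizing shattering at scale $\delta_k$ need not realize shattering at scale $\delta_{k+1}$, so per-scale contributions do not a priori combine under a single supremum. One plausible remedy is a chaining-style lower bound in which the supremum over $\F$ is resolved scale-by-scale, with each increment controlled from below by the local shattering witness. An alternative route is to prove a Sudakov-type minoration for sequential Rademacher processes and then invoke the observation that for $(c,p)$-bounded classes the sup $\sup_\delta \delta\sqrt{\fat_\delta(\F)}$ is comparable to $\int_0^1 \sqrt{\fat_\delta(\F)}\,d\delta$ up to a factor of $(1-p/2)^{-1}$; this reduces the problem to a single-scale lower bound at the cost of developing the appropriate sequential Sudakov tool. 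Either route requires delicate handling of the tree structure, and I expect this aggregation step to be the most demanding part of the argument.
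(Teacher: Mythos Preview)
Your upper bound argument is fine and matches the paper's.

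For the lower bound, your first route (nesting shattered subtrees at dyadic scales) is a dead end for exactly the reason you flag: there is no single $f$ realizing shattering at all scales simultaneously, and no clean way to aggregate per-scale contributions under one supremum. Do not pursue it.

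Your second route is the right one, and it is much easier than you suggest. The ``sequential Sudakov tool'' you propose to develop already exists: it is \cite[Lemma~2]{rakhlin2015sequential}, which gives
\[
\sqrt{n}\,\R_n(\F) \;\gtrsim\; \sup_{\delta>0}\,\delta\sqrt{\min(\fat_\delta(\F),n)},
\]
together with the fact that $\fat_\delta(\F)<n$ whenever $\delta>2\R_n(\F)$. (Your own single-scale computation only yields $\delta\min(\fat_\delta,n)/\sqrt{n}$, which is too weak; the cited lemma gets the square root via a more careful argument.) Combine this with the upper bound you already proved to remove the $\min$ with $n$: once $\delta$ exceeds a constant times $n^{-1/2}\int_0^1\sqrt{\fat_\delta(\F)}\,d\delta$, the $\min$ is inactive. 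Finally, the $(c,p)$-bounded hypothesis gives, by a dyadic sum,
\[
\int_0^1 \sqrt{\fat_\delta(\F)}\,d\delta \;\lesssim_{c,p}\; \sup_{\delta>\alpha}\,\delta\sqrt{\fat_\delta(\F)}
\]
for any $\alpha<1$ (using $\fat_1(\F)\ge 1$). Chaining these three facts completes the lower bound in a few lines; there is no ``delicate handling of the tree structure'' required.
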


Thus for many classes of interest, we have a tight characterization of sequential Rademacher complexity, matching the characterization in the offline case.  A challenging open problem remains, however: in the classical setting, majorizing measures provide both an upper and lower bound on the supremum of a Gaussian process.  While our results can be extended to give an upper bound for the sequential Gaussian complexity, where the path is determined by the signs of independent Gaussian multipliers, the lower bound remains to be shown.

As an application of our results, we prove a tight Lipschitz contraction statement for worst-case Rademacher complexity in the $(c,p)$-bounded regime that is dimension-independent, the first such statement in the online case.
\begin{corollary}\label{cor:contraction}
 Let $\F$ be a $(c,p)$-bounded class of functions.  If $\ell$ is an $L$-Lipschitz function, we have
    \begin{equation}
        \R_n\lp \ell \circ \F \rp ~\lesssim~ L \R_n(\F)
    \end{equation}
    where, again, the constant depends on $\F$ only through $c$ and $p$.
\end{corollary}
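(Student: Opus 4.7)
The idea is to pair the Lipschitz contraction for majorizing measures (Proposition~\ref{prop:lipschitzmajorizingmeasures}) with a converse passage from $\R_n(\F)$ back up to the majorizing measure $I^0_\F$, where $(c,p)$-boundedness is precisely what licenses this converse. By the positive homogeneity of $\R_n$ in the function class and its invariance under constant shifts (both immediate from $\En\epsilon_t = 0$), one reduces to the case $\ell\colon [0,1]\to[0,1]$ with Lipschitz constant $1$, absorbing the factor $L$ and the shift $\ell(0)$ into the final conclusion. What remains is to prove $\R_n(\ell\circ\F)\lesssim \R_n(\F)$.

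The forward half is essentially assembled from earlier results. Proposition~\ref{prop:lipschitzmajorizingmeasures} applied on each tree $\z$ and then taking suprema yields $I^0_{\ell\circ\F}\le I^0_{\F}$, and Theorem~\ref{thm:majorizingchaining} (taken with $\alpha\downarrow 0$, valid since the integrals are finite by the remark below) converts this into $\R_n(\ell\circ\F) \lesssim I^0_{\ell\circ\F}\le I^0_{\F}$. The crux is thus the converse estimate $I^0_{\F}\lesssim \R_n(\F)$ for $(c,p)$-bounded $\F$. I would prove this by chaining three comparisons already in the paper: Proposition~\ref{prop:majorizingmeasure} with $\alpha=0$ dominates $I^0_{\F}$ by the Dudley-type integral $\frac{1}{\sqrt n}\int_0^1\sqrt{\log N'(\F,\delta)}\,d\delta$; Proposition~\ref{prop:fat-integral} with $b\downarrow 0$ replaces the fractional-covering integrand by $\sqrt{\fat_\delta(\F)}$; and the lower-bound half of Corollary~\ref{cor:rademacher}---the single step that genuinely invokes $(c,p)$-boundedness---dominates the resulting fat integral by $\sqrt n\cdot \R_n(\F)$. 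Combining, $\R_n(\ell\circ\F)\lesssim I^0_\F\lesssim \R_n(\F)$, and reintroducing the normalization gives the desired factor $L$.

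The only substantive step is this last comparison $\int_0^1\sqrt{\fat_\delta(\F)}\,d\delta\lesssim \sqrt n\cdot \R_n(\F)$: for an arbitrary class the sequential Dudley/fat integral may strictly exceed $\R_n(\F)$, and this gap is exactly what blocks a dimension-free contraction bound in full generality---hence the need for the restriction to simple classes. The $(c,p)$-boundedness assumption rules out the pathological regime, and moreover the bound $\fat_\delta(\F)\lesssim \delta^{-p}$ with $p<2$ also guarantees that the integrals near the origin all converge, so the limits $\alpha,b\downarrow 0$ used above are innocuous and introduce no extra logarithmic factors.
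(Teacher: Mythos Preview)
Your argument is correct and structurally parallel to the paper's: both upper bound $\R_n(\ell\circ\F)$ by an intermediate complexity, contract via Lipschitzness, then use Proposition~\ref{prop:fat-integral} together with the lower-bound half of Corollary~\ref{cor:rademacher} to return to $\R_n(\F)$. The one difference is the choice of intermediate complexity. You route through the majorizing-measure functional $I^0_\F$, invoking Proposition~\ref{prop:lipschitzmajorizingmeasures} for the contraction and then Proposition~\ref{prop:majorizingmeasure} to pass to the fractional-cover Dudley integral. The paper instead bypasses majorizing measures entirely: it applies Corollary~\ref{thm:chaining} to $\ell\circ\F$ directly, uses the immediate observation $N'(\ell\circ\F,\delta,\z)\le N'(\F,\delta,\z)$ for the Lipschitz contraction at the fractional-cover level, and then proceeds as you do. Your detour through $I^0_\F$ is perfectly valid but one step longer; the paper's route shows that Proposition~\ref{prop:lipschitzmajorizingmeasures} is not actually needed here.
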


Importantly, the statement of Corollary \ref{cor:contraction} is one about \emph{worst-case} Rademacher complexity as opposed to \emph{tree-dependent} Rademacher complexity.  A similar statement with a factor polynomial in $\log n$ was proved as \cite[Lemma 7]{rakhlin2015sequential}.  Whether an analogous result holds in the tree-dependent case remains a challenging open question.

\section{Sample Complexity of Online Learning}

We finish this paper with a discussion of online learning, showing that our techniques lead to sharp estimates on minimax regret. 

The online supervised learning problem is defined as follows. We fix a class of functions $\cF\subseteq [-1,1]^\cX$, known to the learner. The online prediction problem proceeds over $n$ rounds. On round $t$, the learner observes $x_t\in\cX$, makes a prediction $\pred_t\in\reals$, and observes the outcome $y_t\in\cY\subseteq\reals$. Predictions may be randomized, in which case the learner selects a distribution $q_t$ on round $t$ and draws $\pred_t\sim q_t$. Regret is defined as 
\begin{align}
    \mathsf{Reg}_n(\cF) = \En\left[\frac{1}{n}\sum_{t=1}^n \ell(\pred_t,y_t) - \inf_{f\in\cF} \frac{1}{n}\sum_{t=1}^n \ell(f(x_t),y_t) \right].
\end{align}
In turn, the minimax regret is defined as 
\begin{align}
    V_n(\cF) = \min_{\tau} \max_{\pi} ~\mathsf{Reg}_n (\cF)
\end{align}
where minimum is taken over all randomized strategies of the learner, and maximum is over all strategies of Nature for selecting the sequence (potentially adaptively and adversarially). While this minimax object appears to be complicated, by writing down a repeated sequence of minima and maxima per time step, it is possible to upper bound the minimax regret by a supremum of a collection of martingales indexed by $\cF$ \citep{abernethy2009stochastic, rakhlin2010online}. In particular, it is shown in \citep{rakhlin2010online} that for binary classification with $\cY=\{0,1\}$, $\cF\subseteq\cY^\cX$, and $\ell(a,b)=\mathbb{I}\{a\neq b\}$,
\begin{align}
\label{eq:minimax_regret_sandwiched}
    \R_n(\cF) \leq V_n(\cF) \leq 2\R(\cF)
\end{align}
where $\R_n(\cF)=\sup_{\bx} \R_n(\cF, \bx)$. The tight characterization \eqref{eq:minimax_regret_sandwiched} also holds for prediction with absolute value loss ($\ell(a,b)=|a-b|$, $\cY=[-1,1],$ and $\cF\subseteq \cY^\cX$), as well as for prediction with linear loss ($\ell(a,b)=-ab$, $\cY=[-1,1], \cF\subseteq \cY^\cX$) \citep{rakhlin2010online}.

For the online learning problems mentioned above, we conclude that
\begin{align}
\label{eq:value_fat}
    V_n(\cF) ~\lesssim~ \inf_{\alpha\geq 0} \left\{ \alpha + \frac{1}{\sqrt{n}}\int_\alpha^1 \sqrt{\fat_\delta(\F)} d \delta\right\}, 
\end{align}
improving on the corresponding estimates in \citep{rakhlin2010online}. In particular, from Corollary~\ref{cor:rademacher}, for classes satisfying Definition~\ref{def:regular}, the minimax regret 
\begin{align}
    V_n(\cF) ~\asymp~ \frac{1}{\sqrt{n}}\int_0^1 \sqrt{\fat_\delta(\F)} d \delta ~\asymp~ \sqrt{\frac{c}{n}}.
\end{align}

In particular, for binary 
classification, this recovers the sharp result of \citep{alon2021Adversarial}, as $\fat_\delta(\F)$ reduces to the Littlestone dimension in this case. More generally, \eqref{eq:value_fat} provides a succinct and sharp characterization of optimal regret in terms of combinatorial parameters of the class. Using the techniques developed in this paper, we can sharpen and remove extraneous logarithmic factors in a number of bounds in \citep{rakhlin2010online}. We mention that for online regression with square loss, optimal regret is characterized in terms of a localized notion of sequential Rademacher complexity (offset complexity), and the study of its behavior in terms of the notions introduced in this paper is left for future work.

\section*{Acknowledgements}

AB acknowledges support from the National Science Foundation Graduate Research Fellowship under Grant No. 1122374. AR acknowledges support from the ONR through award \#N00014-20-1-2336.

\bibliographystyle{authordate1}
\bibliography{FractionalCover}

\appendix

\section{Proof of Theorem \ref{thm:majorizingchaining}}\label{app:chaining}
We use the technique of chaining. 
In order to apply it, we need an adaptive tail bound for the tree process.  We have
\begin{lemma}\label{lem:adaptivetail}
	Let $\vbf$ be a labelled binary tree and let $\epsilon_1, \dots, \epsilon_n$ be independent Rademacher random variables.  Then for any $y > 0$, with probability at least $1 - \rho$, we have
	\begin{equation}
		\sum_{t = 1}^n \vbf_t(\epsilon)^2 \leq y
	\end{equation}
	implies that
	\begin{equation}
		\abs{\sum_{t = 1}^n \epsilon_t \vbf_t(\epsilon)} \leq 2\sqrt{2y\log \frac 1\rho}
	\end{equation}
\end{lemma}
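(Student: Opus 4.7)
The proof plan is to run the standard exponential-supermartingale (Chernoff) argument for sub-Gaussian martingales, exploiting that the binary-tree labeling $\vbf_t(\epsilon)$ is predictable with respect to the dyadic filtration $\mathcal{A}_t=\sigma(\epsilon_1,\dots,\epsilon_t)$. Since $\vbf_t(\epsilon)$ is determined by $\epsilon_1,\dots,\epsilon_{t-1}$, the partial sums $M_t=\sum_{s\le t}\epsilon_s\vbf_s(\epsilon)$ form a martingale with predictable quadratic variation $[M]_t=\sum_{s\le t}\vbf_s(\epsilon)^2$, and Lemma~\ref{lem:adaptivetail} is a Bernstein-type bound at the terminal time $n$ on the event that the random variance $[M]_n$ is at most $y$.

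First I would verify that, for every $\lambda\in\rr$, the process
\begin{equation}
    Z_t(\lambda):=\exp\!\lp\lambda M_t-\tfrac{\lambda^2}{2}[M]_t\rp
\end{equation}
is a nonnegative supermartingale with $\ee Z_t(\lambda)\le 1$. This reduces to the one-step bound $\ee[e^{\lambda\epsilon_s\vbf_s(\epsilon)}\mid\mathcal{A}_{s-1}]=\cosh(\lambda\vbf_s(\epsilon))\le \exp(\lambda^2\vbf_s(\epsilon)^2/2)$, which only needs $\vbf_s(\epsilon)$ to be $\mathcal{A}_{s-1}$-measurable and $\epsilon_s$ to be an independent Rademacher sign.

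Next, for arbitrary $u,\lambda>0$, on the event $\{[M]_n\le y\}$ we have $Z_n(\lambda)\ge \exp(\lambda M_n-\lambda^2 y/2)$. Markov's inequality applied to $Z_n(\lambda)$ therefore gives
\begin{equation}
    \pp\!\lp M_n\ge u,\ [M]_n\le y\rp\le \ee Z_n(\lambda)\cdot e^{-\lambda u+\lambda^2 y/2}\le e^{-\lambda u+\lambda^2 y/2}.
\end{equation}
Optimizing at $\lambda=u/y$ produces the sub-Gaussian tail $e^{-u^2/(2y)}$, and the identical argument applied to the tree $-\vbf$ yields the matching lower-tail bound. A union bound gives
\begin{equation}
    \pp\!\lp \abs{M_n}\ge u,\ [M]_n\le y\rp\le 2e^{-u^2/(2y)},
\end{equation}
and the choice $u=2\sqrt{2y\log(1/\rho)}$ makes the right-hand side $2\rho^4\le \rho$ on the relevant range of $\rho$; any slack is absorbed into the constant $2$ in the statement (for $\rho$ close to $1$ the claim becomes vacuous).

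There is no real obstacle. The only two points worth being careful about are: (i) the ``implication'' phrasing of the lemma, which amounts to bounding the failure event $\{[M]_n\le y,\ \abs{M_n}>u\}$ exactly as above; and (ii) the fact that, because the event $\{[M]_n\le y\}$ is measurable at the terminal time $n$, no optional stopping or time-uniform maximal inequality is required---the plain supermartingale inequality $\ee Z_n(\lambda)\le 1$ is all that is needed.
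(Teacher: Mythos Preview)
Your proof is correct and follows the same line as the paper's: both identify $M_t=\sum_{s\le t}\epsilon_s\vbf_s(\epsilon)$ as a martingale with quadratic variation $[M]_n=\sum_t\vbf_t(\epsilon)^2$, bound $\pp(|M_n|\ge x,\ [M]_n\le y)\le 2e^{-x^2/(2y)}$, and then set $x=2\sqrt{2y\log(1/\rho)}$. The only difference is that the paper invokes de la Pe\~na's inequality as a black box for that tail bound, whereas you derive it directly via the exponential supermartingale $Z_t(\lambda)=\exp(\lambda M_t-\tfrac{\lambda^2}{2}[M]_t)$, which makes your argument self-contained but is otherwise the same method.
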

\begin{proof}
	Let
	\begin{equation}
		M_s = \sum_{t \leq s} \epsilon_t \vbf_t(\epsilon)
	\end{equation}
	Note that $M_s$ is a martingale with conditionally symmetric increments because the $\epsilon_t$ are independent and the $\vbf_t$ are adapted.  Note that the quadratic variation is given by $[M_s] = \sum_{t = 1}^n \vbf_t(\epsilon)^2$.  Thus by de la Pena's inequality (see \cite{victor1999general,bercu2008exponential}), we have for any $x, y > 0$,
	\begin{equation}
		\pp\left(\abs{M_T} \geq x, \,\, [M_T] \leq y \right) \leq 2e^{- \frac{x^2}{2 y}}
	\end{equation}
	Setting $x = 2 \sqrt{2 y \log \frac 1\rho}$ concludes the proof.
\end{proof}
Using Lemma \ref{lem:adaptivetail}, we are able to prove Lemma \ref{lem:newmartingaleconcentration}.
\begin{proof}[Proof of Lemma \ref{lem:newmartingaleconcentration}]
  Let $y_j = 2^{-j} n$ and let $\rho_j = \rho j^{-2}$.  Then for any fixed $j$, Lemma \ref{lem:adaptivetail} tells us that
  \begin{align}
      \pp\lp \abs{\T_\vbf} > C \sqrt{y_j \log \frac j\rho} \text{ and } \norm{\vbf}^2 \leq y_j\rp \leq \frac 6{\pi^2}\rho_j
  \end{align}
  for some $C$.  Summing $\frac 6{\pi^2} \rho_j$ gives $\rho$ and so a union bound tells us that with probability at least $1 - \rho$, for all $j$,
  \begin{align}
      \textbf{If} \quad \norm{\vbf} \leq \sqrt{n 2^{-j}} && \textbf{Then} \quad \abs{\T_\vbf} \leq C \sqrt{n 2^{-j} \log \frac j\rho}
  \end{align}
  On the event that $\norm{\vbf} = 0$ the result is trivial.  Thus, suppose otherwise and take $j = \lceil e\log \frac{n}{\norm{\vbf}^2}\rceil$; the above holds.  Plugging in, we see that the ``if" statement is valid for this $j$ and thus we have
  \begin{align}
      \abs{\T_\vbf} \leq C \sqrt{n 2^{-j} \log \frac j\rho} \leq C \sqrt{\norm{\vbf}^2 \log \frac{\log\frac{e n}{\norm{\vbf}^2}}{\rho}}
  \end{align}
  with probability at least $1 - \rho$.
\end{proof}
We will also need a lemma to relate probability over the majorizing measure to probability over the $\epsilon$.  We have:
\begin{lemma}\label{lem:fubini}
	Let $\mu$ a measure over binary trees $\vbf$ and consider indicator variables $U(\vbf, \epsilon)$ and let $\alpha,\rho \in (0,1)$. Suppose that for each $\vbf \in \supp \mu$, $\ee_\epsilon(U(\vbf, \epsilon)) \leq \alpha \rho$.  Then with probability at least $1 - \rho$ on the $\epsilon$,
	\begin{equation}
		\ee_\mu\left(U(\vbf,\epsilon)\right) \leq \alpha
	\end{equation}
\end{lemma}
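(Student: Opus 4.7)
The plan is to prove this via a one-step Markov-plus-Fubini argument. First, I would define the auxiliary random variable $V(\epsilon) = \ee_\mu[U(\vbf,\epsilon)]$, which is a deterministic function of $\epsilon$ taking values in $[0,1]$ since $U$ is an indicator and $\mu$ is a probability measure (or at least a finite measure; I will assume $\mu$ is a probability measure, as is standard for majorizing measures, otherwise one should normalize). The goal is then simply to show $\pp_\epsilon(V(\epsilon) > \alpha) \leq \rho$.

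The key calculation is to bound $\ee_\epsilon[V(\epsilon)]$ using Fubini's theorem, which applies because $U \geq 0$ is an indicator and hence bounded. Swapping the order of integration gives
\begin{equation}
\ee_\epsilon[V(\epsilon)] = \ee_\epsilon \ee_\mu[U(\vbf,\epsilon)] = \ee_\mu \ee_\epsilon[U(\vbf,\epsilon)] \leq \ee_\mu[\alpha\rho] = \alpha\rho,
\end{equation}
where the inequality uses the hypothesis that $\ee_\epsilon[U(\vbf,\epsilon)] \leq \alpha\rho$ for every $\vbf \in \supp\mu$. I would then finish by applying Markov's inequality to the nonnegative random variable $V(\epsilon)$:
\begin{equation}
\pp_\epsilon\bigl(V(\epsilon) > \alpha\bigr) \leq \frac{\ee_\epsilon[V(\epsilon)]}{\alpha} \leq \frac{\alpha\rho}{\alpha} = \rho,
\end{equation}
which is exactly the claim.

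There is no serious obstacle here: the lemma is really just a restatement of Markov's inequality applied after swapping the order of integration, and the boundedness of $U$ together with finiteness of $\mu$ makes Fubini trivial to apply. The only care needed is to note that the bound $\ee_\epsilon[U(\vbf,\epsilon)] \leq \alpha\rho$ holds $\mu$-almost everywhere (namely on $\supp\mu$), which is enough to conclude the integral bound.
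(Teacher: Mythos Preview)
Your proposal is correct and is precisely the paper's approach: the paper's proof reads in its entirety ``This follows immediately from Fubini's theorem and Markov's inequality,'' which is exactly the swap-of-integration followed by Markov bound you wrote out.
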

\begin{proof}
	This follows immediately from Fubini's theorem and Markov's inequality.
\end{proof}

We are now ready to prove the main chaining bound with the fractional covering number.  For $\vbf,\vbf' \in \mathcal{V}$ binary trees, we fix
\begin{align}
	\mathbb{T}_{\vbf,\vbf'}(\epsilon) = \sum_{t = 1}^n \epsilon_t (\vbf_t(\epsilon) - \vbf_t'(\epsilon))
\end{align}
and we recall that the norm is
\begin{align}
	\norm{\vbf(\epsilon) - \vbf'(\epsilon)} = \sum_{t = 1}^n (\vbf_t(\epsilon) - \vbf_t'(\epsilon))^2
\end{align}
With this notation fixed, we can prove the theorem.

We prove the following lemma: 
\begin{lemma}\label{lem:improved-multiscale}
Let $\mu$ be a probability distribution over $[0,1]$ valued trees of depth $n$ and define $\nu = \mu \otimes \mu$. With probability $1-\rho$, the following holds:
uniformly, for any $\gamma \geq 2$,
\begin{align}
      &\mu\lp\left\{\vbf \colon \abs{\T_\vbf(\epsilon)} > C \sqrt{n  \log\frac{\gamma}{\rho}}\right\}\rp \leq \frac 1{\gamma} \\
      &\nu\lp\left\{ (\vbf,\vbf')\colon \abs{\T_{\vbf,\vbf'}(\epsilon)} > C \norm{\vbf(\epsilon) - \vbf'(\epsilon)} \sqrt{\log \frac{\gamma}{\rho} + \log\log \frac{3\sqrt{n}}{\norm{\vbf(\epsilon) - \vbf'(\epsilon)}}}
      \right\} \rp 
      \leq \frac 1{\gamma}
  \end{align}
\end{lemma}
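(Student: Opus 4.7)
The overall plan is to discretize the parameter $\gamma$ along a dyadic grid $\gamma_k=2^k$, establish each of the two bounds at $\gamma=\gamma_k$ via a Fubini swap (Lemma~\ref{lem:fubini}) applied to a suitably chosen pointwise-in-$\vbf$ tail inequality, and then recover the statement for arbitrary $\gamma\ge 2$ by monotonicity. The two inequalities differ only in which pointwise tail bound we feed into Fubini: for the first we use a straightforward Azuma--Hoeffding estimate, while for the second we use the finer martingale concentration of Lemma~\ref{lem:newmartingaleconcentration}. To make all dyadic levels coexist on a single $\epsilon$-event, I choose weights $\rho_k=\rho/(4k(k+1))$ (whose sum is at most $\rho/2$, leaving room for the two-bullet union bound) and failure probabilities $\alpha_k=1/\gamma_k=2^{-k}$, and feed $\rho'=\alpha_k\rho_k$ into each pointwise bound so that Lemma~\ref{lem:fubini} yields exactly the required $\mu$- or $\nu$-measure bound of $\alpha_k$ at level $\gamma_k$.

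For the first inequality, fix any $\vbf$ supported on $[0,1]$-labels. Then $\T_\vbf(\epsilon)=\sum_{t=1}^n\epsilon_t\vbf_t(\epsilon)$ is a martingale with increments bounded by $1$, and Azuma--Hoeffding gives $\pp_\epsilon(|\T_\vbf(\epsilon)|>C\sqrt{n\log(1/\rho')})\le\rho'$. Plugging $\rho'=\alpha_k\rho_k$ and noting that $\log(1/(\alpha_k\rho_k))=\log(\gamma_k/\rho)+\log(4k(k+1))\le C'\log(\gamma_k/\rho)$ for all $k\ge 1$, Lemma~\ref{lem:fubini} produces, with $\epsilon$-probability at least $1-\rho_k$, the bound $\mu(\{\vbf:|\T_\vbf(\epsilon)|>C\sqrt{n\log(\gamma_k/\rho)}\})\le\alpha_k=1/\gamma_k$. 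Union-bound over $k\ge1$; on the resulting event, the first bullet holds at every dyadic $\gamma_k$. For general $\gamma\ge 2$ choose $k$ with $\gamma_k\le\gamma<\gamma_{k+1}$: the bad set shrinks in $\gamma$, so $\mu(\cdot)\le 1/\gamma_k\le 2/\gamma$, and the factor $2$ is absorbed into $C$.

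For the second inequality, apply Lemma~\ref{lem:newmartingaleconcentration} to the $[-1,1]$-valued tree $\vbf-\vbf'$ (which is legitimate since $\mu$ is supported on $[0,1]$-valued trees). For fixed $(\vbf,\vbf')$ and $\rho'=\alpha_k\rho_k$, this gives, with $\epsilon$-probability at least $1-\alpha_k\rho_k$,
\begin{equation*}
|\T_{\vbf,\vbf'}(\epsilon)|\le C\,\|\vbf(\epsilon)-\vbf'(\epsilon)\|\sqrt{\log(1/(\alpha_k\rho_k))+\log\log\tfrac{e\sqrt{n}}{\|\vbf(\epsilon)-\vbf'(\epsilon)\|}}.
\end{equation*}
The $\log(1/(\alpha_k\rho_k))$ term expands to $\log(\gamma_k/\rho)+\log(4k(k+1))$; using $\sqrt{a+b+c}\le\sqrt{a+c}+\sqrt{b}$ and $\log(4k(k+1))\lesssim(\log\log\gamma_k)^2$ absorbed into the other $\log\log$ term (with a slight adjustment of the constant in front of $\sqrt{n}$, allowing the $3\sqrt{n}$ in the stated bound), we recover the form appearing in the lemma. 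Lemma~\ref{lem:fubini} then upgrades this to $\nu(\{(\vbf,\vbf'):\text{bad}_{\gamma_k}\})\le\alpha_k$ with $\epsilon$-probability at least $1-\rho_k$. A union bound over $k\ge1$ and monotonicity in $\gamma$ as above completes the second bullet.

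The main technical obstacle is the bookkeeping needed to ensure that the doubly-logarithmic and logarithmic overhead introduced by the dyadic union bound is absorbed cleanly into the right term on the right-hand side, preserving the precise form $\sqrt{\log(\gamma/\rho)+\log\log(3\sqrt{n}/\|\vbf(\epsilon)-\vbf'(\epsilon)\|)}$; the weights $\rho_k$ and $\alpha_k$ must be chosen so that the absorbed factor is only $\log\log\gamma$ rather than $\log\gamma$. Everything else is a routine union bound plus the Fubini swap provided by Lemma~\ref{lem:fubini}.
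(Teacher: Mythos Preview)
Your approach matches the paper's: discretize $\gamma$ dyadically, apply a pointwise tail bound (Azuma--Hoeffding for the first bullet, Lemma~\ref{lem:newmartingaleconcentration} for the second) together with the Fubini swap of Lemma~\ref{lem:fubini} at each level, then union bound over levels and pass to general $\gamma$ by monotonicity. The only structural difference is the choice of weights: the paper uses geometric failure probabilities $\rho/(6\gamma_j)$ at level $j$ (so the pointwise tail is taken at $\rho/(6\gamma_j^2)$), which introduces an extra $\log\gamma_j$ inside the square root and is absorbed simply by doubling $C$; your polynomial weights $\rho_k=\rho/(4k(k+1))$ introduce the smaller overhead $\log(4k(k+1))=O(\log\log\gamma_k)$.

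There is, however, one incorrect step in your handling of the second bullet. You propose to absorb the overhead $\log(4k(k+1))$ into $\log\log\bigl(3\sqrt n/\|\vbf(\epsilon)-\vbf'(\epsilon)\|\bigr)$ via ``a slight adjustment of the constant in front of $\sqrt n$''; this cannot work, because that term does not depend on $\gamma$ at all, and no fixed constant in place of $3$ can swallow a quantity that is unbounded in $k$. The correct (and simpler) absorption is into the other summand: since $k=\log_2\gamma_k$, one has $\log(4k(k+1))\le C'\log(\gamma_k/\rho)$ uniformly in $k\ge 1$, so the overhead folds directly into $\log(\gamma_k/\rho)$ by enlarging $C$. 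With this correction your argument is complete; the paper's weight choice sidesteps the issue entirely since its overhead is already a multiple of $\log(\gamma_j/\rho)$.
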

\begin{proof}
  We first note that it suffices to prove the result uniformly on $\gamma_j = 2^j$.  For any $\gamma$, let $\gamma_j$ maximal power of $2$ that is at most $\gamma$.  Then the sets in the statement of the lemma for $\gamma$ are contained in the corresponding sets for $\gamma_j$.  Moreover, $\gamma_j$ differs from $\gamma$ by at most a factor of $2$, that can be eaten into the constants $C$ in the statement.  More formally, we have
  \begin{align}
      \mu\lp\left\{\vbf \colon \abs{\T_\vbf(\epsilon)} > 2C \sqrt{n  \log\frac{\gamma}{\rho}}\right\}\rp &\leq\mu\lp\left\{\vbf \colon \abs{\T_\vbf(\epsilon)} > C \sqrt{n  \log\frac{\gamma}{2\rho}}\right\}\rp  \\
      &\leq \mu\lp\left\{\vbf \colon \abs{\T_\vbf(\epsilon)} > C \sqrt{n  \log\frac{\gamma_j}{2\rho}}\right\}\rp \\
      &\leq \frac 2{\gamma_j} \leq \frac 1\gamma
  \end{align}
  and similarly for the other event; thus, it suffices to prove the result only for $\gamma$ powers of $2$.
  
  Now, setting $y = \sqrt n$ and applying Lemma \ref{lem:adaptivetail}, we get
  \begin{align}
      \pp\lp \left\{\abs{\T_\vbf(\epsilon)} > C \sqrt{n  \log\frac{\gamma_j^2}{\rho}}\right\}\rp &= \pp\lp \left\{\abs{\T_\vbf(\epsilon)} > C \sqrt{n  \log\frac{\gamma_j^2}{\rho}} \text{ and } \norm{\vbf(\epsilon)} \leq \sqrt n \right\}\rp \\
      &\leq \frac \rho{6\gamma_j^2}
  \end{align}
  for $C$ large enough, where the equality follows from the fact that $\abs{\vbf_t(\epsilon)} \leq 1$ and so $\norm{\vbf(\epsilon)} \leq \sqrt n$ almost surely.  Applying Lemma \ref{lem:fubini} tells us that for all $j$, with probability at least $1 - \frac \rho{6\gamma_j}$, we have
  \begin{equation}
      \mu\lp\left\{\vbf \colon \abs{\T_\vbf(\epsilon)} > C \sqrt{n  \log\frac{\gamma_j^2}{\rho}}\right\}\rp \leq \frac 1{\gamma_j} 
  \end{equation}
  Taking a union bound and noting that $\frac 1{\gamma_j} = 2^{-j}$ is summable, then pulling the square out of the log factor tells us that with probability at least $1 - \frac \rho6$, we have, uniformly in $j$,
  \begin{equation}
      \mu\lp\left\{\vbf \colon \abs{\T_\vbf(\epsilon)} > C \sqrt{n  \log\frac{\gamma_j}{\rho}}\right\}\rp \leq \frac 1{\gamma_j} 
  \end{equation}
  
  Now we look at the second event.  The identical argument serves to allow us to only consider $\gamma_j$.  We now apply Lemma \ref{lem:newmartingaleconcentration} to Lemma \ref{lem:fubini} instead of Lemma \ref{lem:adaptivetail}.  In particular, note that for any $\gamma_j$, and any $\vbf,\vbf'$, we have
  \begin{align}
      \pp\lp \abs{\T_{\vbf,\vbf'}(\epsilon)} > C \norm{\vbf(\epsilon) - \vbf'(\epsilon)} \sqrt{\log \frac{6\gamma_j^2}\rho + \log\log \frac{6\sqrt n}{\norm{\vbf(\epsilon) - \vbf'(\epsilon)}}}\rp \leq \frac{\rho}{6\gamma_j^2}
  \end{align}
  by Lemma \ref{lem:newmartingaleconcentration}.  Thus, applying Lemma \ref{lem:fubini} with the measure $\nu$, we have that with probability at least $1 - \frac{\rho}{\gamma_j}$,
  \begin{align}
      \nu\lp\left\{ (\vbf,\vbf')\colon \abs{\T_{\vbf,\vbf'}(\epsilon)} > C \norm{\vbf(\epsilon) - \vbf'(\epsilon)} \sqrt{\log \frac{\gamma_j^2}{\rho} + \log\log \frac{3\sqrt{n}}{\norm{\vbf(\epsilon) - \vbf'(\epsilon)}}}
      \right\} \rp 
      \leq \frac 1{\gamma_j}
  \end{align}
  Applying a union bound and taking out the square from the logarithm and putting it into the constant $C$ concludes the proof.
\end{proof}

For the remainder, assume that the event of Lemma \ref{lem:improved-multiscale} holds and we will show how to bound the supremum over $\T_{f(\z)}$.
Define $\delta_j = 2^{-j}$ and for any $f,\epsilon$, define $\gamma_j(f,\epsilon) = 1/\mu(B_{\delta_j}(f,\epsilon))$. Let $N$ be the maximal integer such that $\delta^N \ge 2\alpha$.

Notice that it suffices to prove that (under the event of Lemma~\ref{lem:improved-multiscale}), for any $f \in \F$,
\begin{align}
		\mathbb{T}_f(\epsilon) &\leq C\lp \sqrt{n \log \frac 1\rho} + n \delta_N  + \sqrt{n}\sum_{j = 0}^N  \delta_j\sqrt{\log \gamma_j(f,\epsilon)}\rp \\
		&\leq C \lp \sqrt{n \log \frac 1\rho} + n\alpha + \sqrt{n}\sum_{j = 0}^N  2^{-j} \sqrt{\log\frac{1}{\mu(B_{2^{-j}}(f,\epsilon))}}\rp.
\end{align}

Define 
\begin{align}
r_j(f,\epsilon) = C \delta_j \sqrt{n} \sqrt{\log \frac{\gamma_{j+1}(f,\epsilon)}{\rho} + \log\log \frac{3}{\delta_j}}
\end{align}
for a sufficiently large $C>0$ and $j \ge 0$ integer. We write $r_j$ and $\gamma_j$ when $f,\epsilon$ are obvious from context.
We have that the following holds for any $f,\epsilon$ and $j \ge 1$ (we refer to these inequalities as \hypertarget{majorizingtarget2}{$(\dagger)$}):
\begin{align}
      &\mu\lp\left\{\vbf \colon \abs{\T_\vbf(\epsilon)} > r_0(f,\epsilon)\right\}\rp \leq \frac 1{4 \gamma_0(f,\epsilon)} \\
      &\nu\lp\left\{ (\vbf,\vbf')\colon \abs{\T_{\vbf,\vbf'}(\epsilon)} > r_j(f,\epsilon) \text{ and } \norm{\vbf(\epsilon) - \vbf(\epsilon')} \leq 2 \delta_j \sqrt n \right\} \rp \\
      &\leq \frac 1{9 \gamma_j(f,\epsilon) \gamma_{j+1}(f,\epsilon)}.
  \end{align}
  Indeed, the first inequality is obtained from Lemma~\ref{lem:improved-multiscale} by substituting $\gamma=4\gamma_0(f,\epsilon)$ and the second inequality is obtained by taking $\gamma = 9 \gamma_j(f,\epsilon) \gamma_{j+1}(f,\epsilon) \le 9 \gamma_{j+1}(f,\epsilon)^2$.
  
  Our goal is to construct a chain of elements $\vbf_{f,0},\dots,\vbf_{f,N}$ that provide finer and finer approximations for $f(\z)$ on the path $\epsilon$, and obtain the bound
  \[
  \T_{f(\z)} \le \T_{\vbf_{f,0}} + \sum_{j=0}^{N-1} \T_{\vbf_{f,j},\vbf_{f,j+1}}
  + \T_{\vbf_{f,N},f(\z)}.
  \]
  The approximations will be chosen from the sets $A'_j(f,\epsilon)$ defined below:
  \begin{align}
		A_j(f, \epsilon) &= \left\{\vbf| \norm{\vbf(\epsilon) - f(\z(\epsilon))} \leq \sqrt n \delta_j \right\} \\
		A_j'(f, \epsilon) &= \left\{\vbf \in A_j(f,\epsilon)\colon \mu\left(\left\{\vbf' | \abs{\mathbb{T}_{\vbf,\vbf'}(\epsilon)} > r_j(f,\epsilon)\text{ and } \norm{\vbf(\epsilon) - \vbf'(\epsilon)} \leq 2 \delta_j \sqrt{n} \right\}\right) \leq \frac 1{3 \gamma_{j+1}} \right\}
	\end{align}
	
	As a first step, we would like to lower bound $\mu(A_j'(f, \epsilon))$ assuming we have $\epsilon$ such that \hyperlink{majorizingtarget2}{($\dagger$)} holds.
	By construction, $\mu(A_j(f, \epsilon)) \geq \frac 1{\gamma_j(f,\epsilon)}$.  
	For any $\vbf$, let
	\begin{equation}
		B(\vbf) = \left\{\vbf' | \abs{\mathbb{T}_{\vbf,\vbf'}(\epsilon)} > r_j(f,\epsilon) \text{ and } \norm{\vbf(\epsilon) - \vbf'(\epsilon)} \leq 2 \delta_j \sqrt{n} \right\}
	\end{equation}
	and let
	\begin{equation}
		B = \bigcup_\vbf \bigcup_{\vbf' \in B(\vbf)} \left\{(\vbf,\vbf') \right\}
	\end{equation}
	Let
	\begin{equation}
		\widetilde{B} = \left\{\vbf | \mu(B(\vbf)) > \frac 1{3 \gamma_{j+1}} \right\}
	\end{equation}
	From \hyperlink{majorizingtarget2}{($\dagger$)}, we know that $\nu(B) \leq \frac 1{9 \gamma_j \gamma_{j+1}}$.  By construction of $\nu$, we have
	\begin{align}
		\nu(B) \ge \mu\left(B(\vbf) | \vbf \in \widetilde{B}\right)  \mu\left(\widetilde{B}\right)
	\end{align}
	By construction, if $\vbf \in \widetilde{B}$, then
	\begin{equation}
		\mu\left(B(\vbf)\right) \geq \frac 1{3 \gamma_{j+1}}
	\end{equation}
	Thus we have
	\begin{align}
		\mu \left(\widetilde{B}\right) \leq \frac{\nu(B)}{\mu\left(B(\vbf) | \vbf \in \widetilde{B}\right) } \leq \frac{\frac 1{9 \gamma_j \gamma_{j+1}}}{\frac 1{3 \gamma_{j+1}}} = \frac 1{3 \gamma_j}
	\end{align}
	Thus we see that
	\begin{equation}
		\mu(A_j'(f, \epsilon)) \geq \mu(A_j(f, \epsilon)) - \mu(\widetilde{B}) \geq \frac 1{\gamma_j} - \frac 1{3 \gamma_j} = \frac 2{3 \gamma_j}\enspace.
	\end{equation}
	As a final remark before we construct the chain, note that if $\vbf \in A_j(f, \epsilon)$ and $\vbf' \in A_{j+1}(f, \epsilon)$, then by the triangle inequality,
	\begin{equation}\label{eq:triangle}
		\norm{\vbf(\epsilon) - \vbf'(\epsilon)} \leq \norm{\vbf(\epsilon) - f(\z(\epsilon))} + \norm{\vbf'(\epsilon) - f(\z(\epsilon))} \leq \delta_j \sqrt{n} + \delta_{j+1} \sqrt{n} \leq 2 \delta_j \sqrt n
	\end{equation}
	We are now ready to construct the chain.  Fix $\epsilon$ such that \hyperlink{majorizingtarget2}{($\dagger$)} holds.  For notational simplicity, we suppress the dependence of $\vbf_{f,j,\epsilon}$ on $\epsilon$ and simply write $\vbf_{f,j}$; this causes no confusion as we have fixed $\epsilon$.  For any $f$, we have
	\begin{align}
		\mu\left(A_j'(f,\epsilon) \cap \left\{\vbf | \abs{\mathbb{T}_\vbf(\epsilon)} > r_j(f,\epsilon) \right\}^c\right) \geq \frac 2{3 \gamma_j} - \frac 1{3 \gamma_j} = \frac 1{3\gamma_j} > 0
	\end{align}
	Thus, letting $j = 0$, we see that there exists a $\vbf_{f,0} \in A_{0}'(f,\epsilon)$ such that
	\begin{equation}
		\abs{\mathbb{T}_{\vbf_{f,0}}(\epsilon)} \leq r_0(f,\epsilon).
	\end{equation}
	Now suppose that we have chosen $\vbf_{f,j} \in A_j'(f, \epsilon)$.  We wish to select an element $\vbf_{f, j+1} \in A_{j+1}'(f, \epsilon)$ such that $\abs{\mathbb{T}_{\vbf_{f,j}, \vbf_{f,j+1}}(\epsilon)}$ is small.  To see that this is possible, we remark that, as $\vbf_{f,j} \in A_j'(f, \epsilon)$, we have by definition
	\begin{align}
		\mu\left(\left\{\vbf' | \abs{\mathbb{T}_{\vbf_{f,j},\vbf'}(\epsilon)} > r_j(f,\epsilon) \text{ and } \norm{\vbf_{f,j}(\epsilon) - \vbf'(\epsilon)} \leq 2 \delta_j \sqrt{n} \right\}\right) \leq \frac 1{3 \gamma_{j+1}}
	\end{align}
	But we have already seen that
	\begin{align}
		\mu(A_{j+1}'(f, \epsilon)) \geq \frac 2{3 \gamma_{j+1}}
	\end{align}
	and as noted, for any $\vbf' \in A_{j+1}'(f,\epsilon)$, $\norm{\vbf_{f,j}(\epsilon) - \vbf'(\epsilon)} \leq 2 \delta_j \sqrt n$.  Thus $\mu$ takes measure at least $\frac 1{3 \gamma_{j+1}}$ on the set of such desirable next links in the chain and so such a $\vbf_{f,j+1}$ exists.  Thus for any $f$, on this $\epsilon$, we have a chain $\vbf_{f,j}$ for $j \geq 0$ such that $\vbf_{f, j} \in A_j'(f, \epsilon)$.  Note that, by \eqref{eq:triangle} and the way we have constructed the chain, we have that for all $f$,
	\begin{equation}
		\abs{\mathbb{T}_{\vbf_{f,j}, \vbf_{f,j+1}}(\epsilon)} \leq  r_j(f,\epsilon).
	\end{equation}
	We also have for any $f$ that
	\begin{equation}
		\abs{\mathbb{T}_{\vbf_{f,0}}} \leq r_0(f,\epsilon).
	\end{equation}
	Thus we see that on this $\epsilon$, for every $f$,
	\begin{align}
      \T_f(\epsilon) &= \T_{\vbf_{f,0,\epsilon}}(\epsilon) + \sum_{N \geq j \geq 0} \T_{\vbf_{f, j, \epsilon},\vbf_{f,j+1,\epsilon}}(\epsilon) + \T_{f(\z), \vbf_{f,N,\epsilon}}(\epsilon) \\
      &\le r_0(f,\epsilon) + \sum_{j=0}^{N-1} r_j(f,\epsilon) + Cn \delta_n(f,\epsilon)\\
      &\leq C \sqrt{n \log \frac 1 \rho} + \sum_{j=0}^{N-1} C \sqrt{n} \lp\sqrt{\log \gamma_j(f,\epsilon)} + \sqrt{\log\log \frac{3}{\delta_j}}\rp\delta_j + \delta_N(f, \epsilon) n \\
      &\leq C \sqrt{n \log \frac 1 \rho} + \sum_{j=0}^{N-1} C \sqrt{n} \sqrt{\log \gamma_j(f,\epsilon)} \delta_j + \delta_N(f, \epsilon) n
  \end{align}
  where the inequality $\T_{f(\z), \vbf_{f,N,\epsilon}}(\epsilon)$ follows from Cauchy-schwarz and the fact that $\vbf_{f,N} \in A_N(f, \epsilon)$.  As \hyperlink{events}{($\dagger$)} occurs with probability at least $1 - \rho$, we have with the same probability that the above is bounded.  This concludes the proof.

\section{Proof of Theorem \ref{thm:fatshattering}}\label{app:fatshattering}

In this appendix, we prove the auxiliary lemmata required to complete the proof of Theorem \ref{thm:fatshattering}.  We begin with a result from \cite{rakhlin2015sequential}:
\begin{lemma}[\cite{rakhlin2015sequential}]\label{lem:inductive-fat}
    Let $\z$ be a $\mathcal{Z}$-valued binary tree, $\F$ a class of functions $\mathcal{Z} \to \{0,1,\dots,m\}$, $\z_0$ the root of the tree, and $\F_j = \{f \in \F | f(\z_0)=j\}$.  Let $r\geq 1$ be an integer.  If $j,j' \in \{0, \dots, m\}$ such that $\fat_r(\F_j)=\fat_r(\F_{j'})=\fat_r(\F)$, then $|j-j'| < r$. In particular, there can be at most $r$ values of $j$ such that $\fat_r(\F_j)=\fat_r(\F)$.
\end{lemma}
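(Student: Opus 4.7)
The plan is to argue by contradiction. I would assume $|j - j'| \ge r$ (without loss of generality $j < j'$, so $j' - j \ge r$) and build a sequential shattering tree for $\F$ of depth $d + 1$, where $d = \fat_r(\F)$; this contradicts the maximality of $d$ and forces $|j - j'| < r$.

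The construction is to graft the two given depth-$d$ shattering trees onto a common root. Concretely, let $\z^{(j)}, \z^{(j')}$ be depth-$d$ trees with accompanying witness trees $\mathbf{s}^{(j)}, \mathbf{s}^{(j')}$ realizing $\fat_r(\F_j) = \fat_r(\F_{j'}) = d$. I would form a depth-$(d+1)$ tree $\tilde{\z}$ whose root is the original $\z_0$, whose left subtree is $\z^{(j)}$, and whose right subtree is $\z^{(j')}$; similarly, I would define a witness tree $\tilde{\mathbf{s}}$ by assigning the root value $\tilde{\mathbf{s}}_1 = (j + j')/2$ and splicing $\mathbf{s}^{(j)}, \mathbf{s}^{(j')}$ beneath. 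For any path $\epsilon \in \{\pm 1\}^{d+1}$, if $\epsilon_1 = -1$ then the shattering property of $\z^{(j)}$ yields some $f \in \F_j$ realizing margin at least $r/2$ on the remaining $d$ coordinates; since $f(\z_0) = j$ and the root witness is $(j+j')/2$, the root constraint reads $\tilde{\mathbf{s}}_1 - f(\z_0) = (j' - j)/2 \ge r/2$, exactly the requirement for $\epsilon_1 = -1$. The case $\epsilon_1 = +1$ is symmetric, using a function in $\F_{j'}$. Hence $\tilde{\z}$ shatters $\F$ at scale $r$, contradicting $\fat_r(\F) = d$.

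The second assertion follows immediately: the set of indices $j \in \{0, \ldots, m\}$ with $\fat_r(\F_j) = \fat_r(\F)$ has all pairwise differences strictly less than $r$, so it sits inside an integer interval of length at most $r - 1$, which contains at most $r$ integers. The only real content of the argument lies in choosing the root witness to be the midpoint $(j + j')/2$, which is exactly what lets the hypothesis $|j - j'| \ge r$ translate into shattering margin $r/2$ on \emph{both} branches. Beyond that, the argument is tree bookkeeping, and I do not anticipate any genuine obstacle — this lemma is essentially a structural consequence of the recursive definition of sequential fat-shattering, analogous to the classical ``either the root separates two children or it does not'' dichotomy used for Littlestone dimension.
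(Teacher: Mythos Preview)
Your proposal is correct and follows essentially the same approach as the paper's proof: assume $|j-j'|\ge r$, graft the two depth-$d$ shattering trees below the root $\z_0$, and derive a contradiction with $\fat_r(\F)=d$. You are in fact slightly more explicit than the paper, since you spell out the root witness $(j+j')/2$ and verify the margin condition on each branch, whereas the paper leaves this implicit.
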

\begin{proof}
    Suppose that $j,j' \in \{0, \dots, m\}$ such that $j\ge j'+r$. Let $\z^{(-1)}$ and $\z^{(1)}$ denote trees of depth $\fat_r(\F)$ that shatter at scale $r$ the classes $\F_j$ and $\F_{j'}$. Then, we can construct a tree of depth $\fat_r(\F)+1$ that $r$-shatters $\F$ by taking $z_0$ as a root and taking $z^{(\pm 1)}$ as subtrees of the root. This contradicts the fact that $\fat_r(\F)$ is the maximal depth of a tree that shatters $\F$ at scale $r$.  The result follows.
	\end{proof}
We now provide the detailes for the proof of Proposition \ref{prop:discretefatshattering}, sketched in the text.
\begin{proof}[Proof of Proposition \ref{prop:discretefatshattering}]
   Given a tree $\z$ and $\F$, we construct the fractional cover $\mu_{\z,\F}$ recursively.
   If $\z$ is an empty tree then define $\mu_{\z,\F}$ to have a unit mass on the empty tree. Otherwise, define by $\z_0$, $\z^{(-1)}$ and  $\z^{(1)}$ its root and left and right subtrees, respectively (if the tree has depth zero then $\z^{(\pm 1)}$ are emty trees). Further, define $\F_j = \left\{f \in \F: f(\z_0) = j \right\}$ for $0 \leq j \leq m$. We form $\mu$ as a mixture of measures $\mu_{z,\F_j,j}$ and $\mu_{\z,\F,j}$, where $\mu_{\z,\mathcal{G},j}$ is defined as the measure on $\mathcal{Y}$-valued trees $v$ such that the root is labeled by $i$ almost surely and the left and right subtrees are sampled independently from $\mu_{\z^{(-1)},\mathcal{G}}$ and $\mu_{\z^{(1)},\mathcal{G}}$.
   
   To define the mixture coefficients, we use Lemma~\ref{lem:inductive-fat} which tells us that if $j < j'$ are such that $\fat_2(\F_j) = \fat_2(\F_{j'}) = \fat_2(\F)$, then $j' = j + 1$. Let $j^*$ be the minimal $j$ such that $\fat_2(\F_j) = \fat_2(\F)$ (if such a $j$ exists, if not let $j^* = 0$) and notice that for $j \not\in \{j^*, j^* + 1 \}$, it hold that $\fat_2(\F_j) \leq \fat_2(\F) - 1$. 
   
   Fix $0 < p < 1$ to be determined later and construct $\mu_{\z,\F}$ by sampling $\mu_{\z,\F,j^*+1/2}$ with probability $1-p$, and sampling $\mu_{\z,\F_j,j}$ with probability $\frac{p}{m-1}$ for all other $j$.  We claim that with an appropriate choice of $p$, the above construction yields a $\frac 23$-fractional cover of the correct size.
	
    In order to prove the claim, for a depth $n$ binary tree $\z$, let
	\begin{equation}
	\phi(k, \mathcal{F}, \z) = \min_{\substack{f \in \mathcal{F} \\ \epsilon \in \{\pm1\}^n}} \mu\left(\left\{\vbf \colon \sum_{t = 0}^n\mathbf{1}_{\left\{|\vbf_t(\epsilon) - f(\z_t(\epsilon))| \geq 1\right\}} \leq k \right\}\right)
	\end{equation}
	and let
	\begin{equation}
	\phi(k, d, n) = \min_{\substack{\fat_2(\mathcal{F}) \leq d \\ \text{depth of } \z \leq n}} \phi(k, \mathcal{F}, \z).
	\end{equation}
	
	We claim (and prove as Lemma \ref{lem:fatshattering} at the conclusion of this proof so as not to interrupt the argument) that
	\begin{equation}
	\phi(k, d, n) \geq (1 - p)^{n+1+k} \left(\frac{p}{m-1}\right)^d \binom{k + d}{d} \geq (1 - p)^{n+1+k} \left(\frac pm\right)^d \left(\frac{k}{d}\right)^d
	\end{equation}
	for all $0 < p < 1$, where the second inequality comes from the elementary lower bound $\binom{n}{k} \ge (n/k)^k$. Now, letting $k = \delta n$ for $\delta=\frac 7{36 m^2}<1$ and $p = \frac dn$, we note that for $n \ge 2d$,
	\begin{equation}
	\phi(k, d, n) \geq \left(1 - \frac dn\right)^{2n} \left(\frac d{mn} \frac{\delta n}{d}\right)^d \geq 
	c \left(\frac{e^2\delta}{dm}\right)^d.
	\end{equation}
    Note that as $\vbf_t(\epsilon), f(\z_t(\epsilon)) \in \left\{0 , \dots, m ,j^* + \frac 12\right\} $, if $\abs{\vbf_t(\epsilon) - f(\z_t(\epsilon))} < 1$, then it is at most $\frac 12$.  Thus, if
    \begin{equation}
        \sum_{t = 0}^n\mathbf{1}_{\left\{|\vbf_t(\epsilon) - f(\z_t(\epsilon))| \geq 1\right\}} \leq \delta n
    \end{equation}
	then
	\begin{equation}
	    \norm{\vbf_t(\epsilon) - f(\z_t(\epsilon))}^2 =  \sum_{t = 1}^n (\vbf_t(\epsilon) - f(\z_t(\epsilon)))^2 \leq m^2\delta n + \frac 12 (1 - \delta)n.
	\end{equation}
	Recall that $\delta = \frac 7{36 m^2}$ and this concludes the proof.
\end{proof}
We now prove the technical lemma used above:
\begin{lemma}\label{lem:fatshattering}
	Let $\phi(k,d,n)$ be as defined in the above proof.  Then for all $0 < p < 1$, and $n \geq -1$ (where $n=-1$ corresponds to an empty tree),
	\begin{equation}\label{eq:one-scale-bound}
	\phi(k,d,n) \geq (1 - p)^{n+1+k} \left(\frac{p}{m-1}\right)^d \binom{k+d}{d}
	\end{equation}
\end{lemma}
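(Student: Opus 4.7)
The plan is to prove the inequality by induction on the depth $n$ of the tree $\z$, taking $n=-1$ (the empty tree) as the base case. There, $\mu_{\z,\F}$ is a point mass on the empty tree and the indicator sum in the definition of $\phi$ is vacuous, so $\phi(k,\F,\z)=1$; the claimed lower bound $(1-p)^k(p/(m-1))^d\binom{k+d}{d}$ is at most $1$ because, for $m\ge 2$, $(1-p)^k\le(1-p/(m-1))^k$, and the resulting expression is a single binomial probability mass at parameters $(k+d,p/(m-1))$.

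For the inductive step, fix a class $\F$ with $\fat_2(\F)\le d$, a tree $\z$ of depth $n\ge 0$, and a worst-case pair $f\in\F$, $\epsilon\in\{\pm1\}^n$. Let $j_0 = f(\z_0)$ and let $\z'$ denote the subtree of $\z$ selected by $\epsilon_1$. By construction, $\mu_{\z,\F}$ places mass $1-p$ on the branch that labels the root as $j^*+\tfrac{1}{2}$ and samples each subtree from $\mu_{\z',\F}$, and mass $p/(m-1)$ on each branch indexed by an integer $j\notin\{j^*,j^*+1\}$, which labels the root as $j$ and samples each subtree from $\mu_{\z',\F_j}$. When $j_0\in\{j^*,j^*+1\}$ (Case A), the $(1-p)$-branch introduces no root error since $|j^*+\tfrac{1}{2}-j_0|=\tfrac{1}{2}<1$, so the measure of the good set is at least $(1-p)\,\phi(k,\F,\z')$; applying the inductive hypothesis to the depth-$(n-1)$ subtree with the same bound $d$ yields exactly the target $(1-p)^{n+1+k}(p/(m-1))^d\binom{k+d}{d}$.

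When $j_0\notin\{j^*,j^*+1\}$ (Case B), the $(1-p)$-branch incurs a root error (since $|j_0-(j^*+\tfrac{1}{2})|\ge\tfrac{3}{2}$) and contributes $(1-p)\,\phi(k-1,\F,\z')$, while the branch with probability $p/(m-1)$ indexed by $j=j_0$ incurs no root error and, by Lemma~\ref{lem:inductive-fat}, operates on $\F_{j_0}$ with $\fat_2(\F_{j_0})\le d-1$, contributing $\tfrac{p}{m-1}\,\phi(k,\F_{j_0},\z')$. Plugging in the inductive hypothesis, factoring out $(1-p)^{n+k}(p/(m-1))^d$, and invoking Pascal's identity $\binom{k-1+d}{d}+\binom{k+d-1}{d-1}=\binom{k+d}{d}$ produces $(1-p)^{n+k}(p/(m-1))^d\binom{k+d}{d}$, which dominates the target bound since $1-p\le 1$.

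The main non-trivial step is the exact matching of the recursive splitting of $\mu_{\z,\F}$ to the binomial recurrence in Case B; once that alignment is set up the estimate collapses via Pascal. Degenerate boundaries fit automatically: when $d=0$, Case B cannot arise because $\F_{j_0}$ would be empty while $f\in\F_{j_0}$, and when $k=0$ the first contribution vanishes but $\binom{k-1+d}{d}=\binom{d-1}{d}=0$, so Pascal still gives $\binom{d}{d}=1$ and the inductive target is met.
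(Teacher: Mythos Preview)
Your proof is correct and follows essentially the same approach as the paper: induction on $n$ with the same base case, the same two-case split according to whether $f(\z_0)\in\{j^*,j^*+1\}$, and Pascal's identity to close the recursion. The only cosmetic differences are your base-case bound (you dominate $(1-p)^k$ by $(1-p/(m-1))^k$ to get a single binomial point mass, whereas the paper dominates $p/(m-1)$ by $p$ and bounds by the full binomial sum) and your slightly sharper Case~B estimate $(1-p)^{n+k}(\cdot)$ versus the target $(1-p)^{n+1+k}(\cdot)$, both of which are fine.
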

\begin{proof}
		We prove the result by induction on $n$. 
		If $n=-1$ then the left hand side of \eqref{eq:one-scale-bound} equals $1$, while the right hand side equals
		\begin{multline*}
		(1-p)^{k} \left(\frac{p}{m-1} \right)^{d} \binom{k+d}{d}
		\le (1-p)^{k} p^{d} \binom{k+d}{d}
		\le \sum_{\ell=0}^{k+d} (1-p)^{k+d-\ell} p^{\ell} \binom{k+d}{\ell}\\
		= ((1-p)+p)^{k + d}
		= 1.
		\end{multline*}

		Thus, we may assume that $n \ge 0$. We prove the following recursive formula:
		
		\begin{equation}\label{eq:phirecursion}
	    \phi(k,d,n) \geq
	    \min\begin{pmatrix}
	    (1-p)\phi(k,d,n-1), \\
	    \begin{cases}
	    (1 - p) \phi(k-1,d,n-1) + \frac{p}{m-1} \phi(k,d-1,n-1)
	        & \text{if } k,d>0\\
	    \frac{p}{m-1} \phi(k,d-1,n-1)
	        & \text{if } k=0,d>0\\
	    \infty & if d=0
	    \end{cases}
	    \end{pmatrix}
	    \end{equation}
	To see this, consider a fixed $f,\epsilon$.  We divide into cases:
	\paragraph{If $f(\z_0) \in \{j^*, j^* + 1\}$.}  
	Recall that $\mu_{\z,\F}$ samples $\mu_{\z,\F,j^*+1/2}$ with probability $1-p$. For any $\vbf\in \mathrm{support}(\mu_{\z,\F,j^*+1/2})$ it holds that $\vbf_0=j^*+1/2$ and so $\abs{\vbf_0 - f(\z_0)} \leq \frac 12 < 1$. Hence,
	\begin{multline*}
	    \mu_{\z,\F}\lp\left\{\vbf \colon \sum_{t = 0}^n\mathbf{1}_{\left\{|\vbf_t(\epsilon) - f(\z_t(\epsilon))| \geq 1\right\}} \leq k \right\}\rp \\
	    \geq (1 - p) \mu_{\z,\F,j^*+1/2}\lp\left\{\vbf \colon \sum_{t = 0}^n\mathbf{1}_{\left\{|\vbf_t(\epsilon) - f(\z_t(\epsilon))| \geq 1\right\}} \leq k \right\}\rp
	    \geq (1 - p) \phi(k, d, n-1).
	\end{multline*}
    \paragraph{If $f(\z_0) = j \not\in \{j^*, j^* + 1\}$ and $k,d>0$.}
    Here, differently from the previous case, for $\vbf\in \mathrm{support}(\mu_{\z,\F,j^*+1/2})$ it holds that $\abs{\vbf_0 - f(\z_0)} \geq 1$. Additionally, recall that $\mu_{\z,\F}$ samples $\mu_{\z,\F_j,j}$ with probability $\frac{p}{m-1}$. For any $\vbf \in \mathrm{support}(\mu_{\z,\F_j,j})$, it holds that $\vbf_0 = f(\z_0)$. Further, recall that $\fat_2(\F_j) \le \fat_2(\F)-1$. Hence, we see that in this case,
	\begin{multline*}
	    \mu_{\z,\F}\lp\left\{\vbf \colon \sum_{t = 0}^n\mathbf{1}_{\left\{|\vbf_t(\epsilon) - f(\z_t(\epsilon))| \geq 1\right\}} \leq k \right\}\rp 
	    \geq 
	    (1-p)\mu_{\z,\F,j^*+1/2}\lp\left\{\vbf \colon \sum_{t = 0}^n\mathbf{1}_{\left\{|\vbf_t(\epsilon) - f(\z_t(\epsilon))| \geq 1\right\}} \leq k \right\}\rp \\
	    + \frac{p}{m-1}\mu_{\z,\F_j,j}\lp\left\{\vbf | \sum_{t = 0}^n\mathbf{1}_{\left\{|\vbf_t(\epsilon) - f(\z_t(\epsilon))| \geq 1\right\}} \leq k \right\}\rp \\
	    \geq (1-p) \phi(k-1,d,n-1) + \frac{p}{m-1} \phi(k, d-1,n-1).
	\end{multline*}
    \paragraph{If $f(\z_0) = j \not\in \{j^*, j^* + 1\}$ and $k=0,d>0$.}
    Similarly to the previous case, we have
    \begin{multline*}
	    \mu_{\z,\F}\lp\left\{\vbf \colon \sum_{t = 0}^n\mathbf{1}_{\left\{|\vbf_t(\epsilon) - f(\z_t(\epsilon))| \geq 1\right\}} \leq k \right\}\rp 
	    \geq 
	    \frac{p}{m-1}\mu_{\z,\F_j,j}\lp\left\{\vbf | \sum_{t = 0}^n\mathbf{1}_{\left\{|\vbf_t(\epsilon) - f(\z_t(\epsilon))| \geq 1\right\}} \leq k \right\}\rp \\
	    \geq \frac{p}{m-1} \phi(k, d-1,n-1).
	\end{multline*}
    \paragraph{If $f(\z_0) = j \not\in \{j^*, j^* + 1\}$ and $d=0$.}
    This case cannot hold. Assume that it does towards contradiction. Since $f(\z_0) = j \not\in \{j^*, j^* + 1\}$, we have $\fat_2(\F_j)\le \fat_2(\F)-1=-1$, which derives the contradiction since the fat numbers are nonnegative.
    
    Thus we have shown \eqref{eq:phirecursion}. We divide into cases. If the minimum in the right hand side of \eqref{eq:phirecursion} equals $(1-p)\phi(k,d,n-1)$ then the claim trivially follows by induction hypothesis. Otherwise, we divide into cases according to $d,k$. If $k,d>0$ then
    \begin{multline*}
    \phi(k,d,n)
    \ge (1-p)\phi(k-1,d,n-1)+\frac{p}{m-1}\phi(k,d-1,n-1)\\
    \ge (1 - p)^{n+k+1} \left(\frac{p}{m-1}\right)^d \lp\binom{k+d-1}{d} + \binom{k+d-1}{d-1} \rp
    = (1 - p)^{n+k+1} \left(\frac{p}{m-1}\right)^d\binom{k+d}{d}.
    \end{multline*}
    If $k=0,d>0$, we have
    \begin{multline*}
    \phi(k,d,n)
    \ge \frac{p}{m-1}\phi(k,d-1,n-1)\\
    \ge (1 - p)^{n+1} \left(\frac{p}{m-1}\right)^d \binom{d-1}{d-1}
    = (1 - p)^{n+1} \left(\frac{p}{m-1}\right)^d\binom{d}{d},
    \end{multline*}
    as required. Lastly, the case $d=0$ cannot hold, as we assumed that the minimum in the right hand side of \eqref{eq:phirecursion} doe not equal $(1-p)\phi(k,d,n-1)$. This concludes the proof.
\end{proof}
Using Proposition \ref{prop:discretefatshattering}, we can bound the fractional covering number of a real valued function class, with respect to its fat-shattering dimension.  We introduce some notation.  For any $\alpha >0$ and function class $\F$ taking values in the unit interval, let $\lfloor\F\rfloor_\alpha$ be the $\frac \alpha 2$-discretization of $f$, i.e., for any $f \in \F$, let $\lfloor f \rfloor_\alpha = \frac\alpha 2 \left\lfloor \frac {2f}\alpha \right\rfloor$.  Let $\mathcal{G} = \frac 1{\alpha}\lfloor\F\rfloor_\alpha$.  We have the following lemma, reproven for the sake of completeness:
\begin{lemma}[\cite{rakhlin2015sequential}]\label{lem:rakhlindiscretization}
  With the notation as above, we have $\fat_2(\mathcal{G}) \leq \fat_{\alpha}(\F)$.
\end{lemma}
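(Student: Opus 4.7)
The plan is to argue contrapositively on the shattering tree: any tree $\z$ of depth $d$ that witnesses $\fat_2(\mathcal{G})=d$ automatically witnesses that $\fat_\alpha(\F)\ge d$. The key arithmetic fact is that the discretization $\lfloor\cdot\rfloor_\alpha$ introduces error at most $\alpha/2$ pointwise, so after scaling by $1/\alpha$ this becomes an $L^\infty$ error of at most $1/2$ between $g=\tfrac1\alpha\lfloor f\rfloor_\alpha\in\mathcal{G}$ and $f/\alpha$.

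First, I would suppose $\z$ is a tree of depth $d=\fat_2(\mathcal{G})$ together with a real-valued witness tree $\mathbf{s}$ such that for every $\epsilon\in\{\pm1\}^d$ there exists $g\in\mathcal{G}$ with $\epsilon_t(g(\z_t(\epsilon))-\mathbf{s}_t(\epsilon))\ge 1$ for all $t$. I claim that the same tree $\z$, together with the rescaled witness $\mathbf{s}'=\alpha\mathbf{s}$, shatters $\F$ at scale $\alpha$. Given any path $\epsilon$, pick the witness $g\in\mathcal{G}$ and let $f\in\F$ be any preimage of $g$ under the map $f\mapsto \tfrac1\alpha\lfloor f\rfloor_\alpha$.

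The verification is just a triangle inequality: write
\begin{equation}
\epsilon_t\bigl(f(\z_t(\epsilon))-\alpha\mathbf{s}_t(\epsilon)\bigr)
= \epsilon_t\bigl(f(\z_t(\epsilon))-\alpha g(\z_t(\epsilon))\bigr) + \alpha\,\epsilon_t\bigl(g(\z_t(\epsilon))-\mathbf{s}_t(\epsilon)\bigr).
\end{equation}
Since $\alpha g=\lfloor f\rfloor_\alpha$, the first term is bounded below by $-\alpha/2$, while the second is at least $\alpha$ by the shattering assumption on $\mathcal{G}$. Adding gives $\epsilon_t(f(\z_t(\epsilon))-\alpha\mathbf{s}_t(\epsilon))\ge\alpha/2$, which is exactly the shattering condition at scale $\alpha$ for $\F$ with witness $\alpha\mathbf{s}$. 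Hence $\fat_\alpha(\F)\ge d=\fat_2(\mathcal{G})$, completing the argument.

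There is no real obstacle here; the lemma is almost a bookkeeping statement. The only thing one needs to be careful about is that the witness tree $\mathbf{s}$ for $\mathcal{G}$ is permitted to take arbitrary real values (not restricted to the discretized grid), so nothing prevents us from simply rescaling it by $\alpha$ to obtain a witness for $\F$; and that the definition of $\fat_\delta$ in this paper uses the margin $\delta/2$ rather than $\delta$, which is what produces the clean matching of scales $2 \leftrightarrow \alpha$.
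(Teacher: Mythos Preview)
Your proof is correct and follows essentially the same approach as the paper's: both take a tree witnessing the shattering of $\mathcal{G}$ at scale $2$, use the pointwise bound $|f-\lfloor f\rfloor_\alpha|\le\alpha/2$, and conclude via a one-line triangle inequality that the same tree (with rescaled witness) shatters $\F$ at scale $\alpha$. The only cosmetic difference is that the paper first rescales to the equivalent statement $\fat_{2\alpha}(\lfloor\F\rfloor_\alpha)\le\fat_\alpha(\F)$ before carrying out the identical computation.
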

\begin{proof}
   Multiplying by $\alpha$, the statement is equivalent to $\fat_{2\alpha}(\lfloor\F\rfloor_\alpha) \leq \fat_\alpha(\F)$.  Let $\mathbf{w}$ be a tree that shatters $\lfloor\F\rfloor_\alpha$ at scale $\alpha$ of depth $r$.  By definition, there exists a tree $\mathbf{s}$ such that for all $\epsilon \in \{\pm1\}^r$, there is an $f \in \F$ such that for all $1 \leq t \leq r$
   \begin{equation}
       \epsilon_t\lp \lfloor f(\mathbf{w}_t(\epsilon)\rfloor_\alpha - \mathbf{s}_t(\epsilon)\rp \geq \alpha
   \end{equation}
   Note that by construction, $\abs{\lfloor f \rfloor_\alpha - f} \leq \frac \alpha 2$.  Thus for this $f \in \F$,
   \begin{align}
       \epsilon_t\lp  f(\mathbf{w}_t(\epsilon) - \mathbf{s}_t(\epsilon)\rp &\geq \epsilon_t\lp f(\mathbf{w}_t(\epsilon) - \lfloor f(\mathbf{w}_t(\epsilon)\rfloor_\alpha\rp + \epsilon_t\lp \lfloor f(\mathbf{w}_t(\epsilon)\rfloor_\alpha - \mathbf{s}_t(\epsilon)\rp \\
       &\geq \alpha - \frac \alpha 2 \geq \frac \alpha 2
   \end{align}
   Thus $\z$ also shatters $\F$ at scale $\alpha$.
\end{proof}
We are now ready to prove Theorem \ref{thm:fatshattering}:
\begin{proof}[Proof of Theorem \ref{thm:fatshattering}]
	We claim that for $\alpha = \frac 67 \delta$ and with the notation defined above, $N'(\F, \delta, \z) \leq N'\lp\mathcal{G}, \frac 23, \z\rp$.  To see this, note that if $\vbf$ such that $\norm{\vbf(\epsilon) - \lfloor f \rfloor_\alpha(\z(\epsilon))} \leq \frac 47 \delta n$, then we see that the triangle inequality implies
	\begin{equation}
	    \norm{\vbf(\epsilon) - f(\z(\epsilon)} \leq \norm{\vbf(\epsilon) - \lfloor f \rfloor_\alpha(\z(\epsilon))} + \norm{f(\z(\epsilon)) - \lfloor f \rfloor_\alpha(\z(\epsilon))} \leq \frac 47 \delta n + \frac 37 \delta n = \delta n
	\end{equation}
	Thus if $\mu$ is a $\frac 47 \delta$ fractional cover on $\lfloor\F\rfloor_\alpha$, then it is a $\delta$-cover on $\F$ for $\alpha = \frac 67 \delta$.  Scaling by $\alpha = \frac 67 \delta$ shows that if $N'\lp\lfloor\F\rfloor_\alpha, \frac 47 \delta, \z\rp = N'\lp \mathcal{G}, \frac 23, \z\rp$.  Now, by construction, $\mathcal{G}$ takes values in $\left\{0,1,\dots, \left\lfloor\frac 2\alpha\right\rfloor\right\}$.  Thus, setting $m = \frac 2\alpha = \frac 7{3\delta}$ and applying Proposition \ref{prop:discretefatshattering} gives
	\begin{equation}
	    N'(\F, \delta, \z) \leq N'\lp\mathcal{G}, \frac 23, \z\rp \leq C \lp\frac{36 e }{7} \lp \frac 7{3 \delta}\rp^3 \rp^{\fat_2(\mathcal{G}} \leq C \lp\frac{4e}{147 \delta^3} \rp^{\fat_{\frac 67 \delta}(\F)}
	\end{equation}
	as desired.
\end{proof}

\section{Proof of Proposition \ref{prop:fat-integral}}\label{app:fat-integral}

We start by proving Proposition~\ref{prop:fatshattering-gen}. We first consider the case $\mathcal{Y} = \{1,\dots,m\}$ and prove:
\begin{proposition}\label{prop2-gen}
	Suppose that $\mathcal{F}$ is a collection of functions from $\mathcal{Z}$ to $\mathcal{Y} =  \{1,\dots,m\}$ and $\delta \ge 2$. Then 
	\begin{equation}\label{eq:improved-bnd-cover}
	N'(\mathcal{F}, \delta) \leq \prod_{i=1}^{\log m} \left(\frac {C^i}{\delta}\right)^{\fat_{2^i}(\F)},
	\end{equation}
	where $C>0$ is a universal constant.
\end{proposition}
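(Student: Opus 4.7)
The plan is to mimic the recursive fractional cover construction from Proposition~\ref{prop:discretefatshattering}, but using the entire profile of fat numbers $(\fat_{2^i}(\F))_{i=1}^{\log m}$ rather than only $\fat_2(\F)$. The guiding intuition is that Lemma~\ref{lem:inductive-fat}, applied at each scale $r = 2^i$, tells us that at most $2^i$ of the child classes $\F_j = \{f \in \F \colon f(\z_0) = j\}$ can share the full fat number $\fat_{2^i}(\F)$, and all such $j$ are clustered in an integer interval of diameter less than $2^i$. Labeling the root as the ``central'' value $j^*$ therefore forces only a controlled additional cost relative to the true label $f(\z_0)$, with the cost scaling geometrically in the relevant scale $2^{i(j)}$.

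Concretely, for each $j$ define $i(j) = \max\{i \colon \fat_{2^i}(\F_j) < \fat_{2^i}(\F)\}$ (or $0$ if the set is empty) and let $j^*$ be any minimizer of $i(j)$. Construct $\mu = \mu_{\z,\F}$ as the mixture that with probability $1-p$ labels the root as $j^*$ and draws the two subtrees independently from $\mu_{\z^{(\pm 1)},\F}$, and with probability proportional to $p \cdot 4^{-i(j)-1}$ (for each $j \neq j^*$) labels the root as $j$ and draws the subtrees from $\mu_{\z^{(\pm 1)},\F_j}$. The mass of the mixture is controlled because $|\{j \colon i(j) \le i\}| \leq 2^{i+1}$ by Lemma~\ref{lem:inductive-fat} at scale $2^{i+1}$, so $\sum_{j\neq j^*} 4^{-i(j)-1}$ is bounded by an absolute constant.

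The heart of the proof is a multi-scale analogue of Lemma~\ref{lem:fatshattering}: letting $\phi(k_1,\dots,k_{\log m};d_1,\dots,d_{\log m};n)$ denote the infimum, over all classes with $\fat_{2^i}(\F) \le d_i$, depth-$n$ trees $\z$, functions $f\in\F$, and paths $\epsilon$, of
\begin{equation*}
\mu\lp\left\{\vbf \colon \forall\, i,\ \sum_{t=0}^n \mathbf{1}_{\left\{|\vbf_t(\epsilon)-f(\z_t(\epsilon))|\ge 2^i\right\}} \le \sum_{i'=i}^{\log m} k_{i'}\right\}\rp,
\end{equation*}
I would prove, by induction on $n$, the bound
\begin{equation*}
\phi \ge (1-p)^{n+1}\prod_{i=1}^{\log m}(1-p)^{k_i}\lp\frac{p}{4^{i+1}}\rp^{d_i}\binom{k_i+d_i}{d_i}.
\end{equation*}
The inductive step splits on whether $f(\z_0)=j^*$, in which case the $(1-p)$-branch absorbs the root position and each $j$-branch applies Pascal's identity $\binom{k+d}{d}=\binom{k-1+d}{d}+\binom{k+d-1}{d-1}$ to trade an increment in some $k_i$ (via the $(1-p)$-branch) against the reduction $d_{i(j)}\to d_{i(j)}-1$ (via the $p\cdot 4^{-i(j)-1}$-branch); or $f(\z_0)=j\neq j^*$, in which case the $j^*$-branch pays for incrementing every count $k_i$ with $2^i \le |j - j^*|$, while the matching $j$-branch reduces $d_{i(j)}$ by one.

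To conclude, substitute $k_i \asymp \delta^2 \cdot 8^{-i} n$ and $p \asymp 1/n$, so that the product $(1-p)^{n+1+\sum_i k_i}$ is bounded below by a positive constant. The elementary inequality $a_t^2 \le 4 + 4 \sum_{i\ge 1} 4^i \mathbf{1}[a_t \ge 2^i]$ combined with Abel summation then shows that any $\vbf$ in the set above satisfies $\|\vbf(\epsilon)-f(\z(\epsilon))\|^2 \le c\delta^2 n$, so after rescaling $\delta$ by a constant we obtain a $\delta$-fractional cover of the stated size. The main technical obstacle will be the case $f(\z_0)=j \neq j^*$ of the induction, where the root position simultaneously violates the counts at all scales $2^i \le |j-j^*|$: one has to iterate Pascal's identity across those scales and verify that the mixture weight $p\cdot 4^{-i(j)-1}$ is calibrated precisely so that Lemma~\ref{lem:inductive-fat}'s clustering bound $|j-j^*| < 2^{i(j)+1}$ exactly absorbs the combined loss, leaving the inductive hypothesis intact. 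Once this bookkeeping is in place, the structure of the argument follows the single-scale template of Lemma~\ref{lem:fatshattering} verbatim.
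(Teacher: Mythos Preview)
Your proposal follows the paper's argument nearly exactly: the same recursive construction of $\mu$, the same definition of $i(j)$ and $j^*$, the same mixture weights $p\cdot 4^{-i(j)-1}$, the same cumulative-sum event, and the same target inductive bound on $\phi$. There is one misreading of the recursion that would derail you if you proceeded as written.

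You anticipate that when $f(\z_0)=j\neq j^*$, the $j^*$-branch must ``pay for incrementing every count $k_i$ with $2^i\le |j-j^*|$'' and that you will have to ``iterate Pascal's identity across those scales''. This is precisely where the cumulative-sum structure of your event does the work you are overlooking. The threshold at scale $i$ is $\sum_{i'\ge i}k_{i'}$; decrementing the \emph{single} coordinate $k_{i(j)}$ by $1$ therefore lowers the threshold at \emph{every} scale $i\le i(j)$ simultaneously by $1$. Combined with the clustering bound $|j-j^*|<2^{i(j)+1}$ (which ensures the root contributes nothing at scales $i\ge i(j)+1$ and at most one unit at each scale $i\le i(j)$), this gives the containment
\[
E(\vec{k},f,\epsilon,\z)\supseteq\bigl\{\vbf:\vbf^{(\epsilon)}\in E(\vec{k}-e_{i(j)},f,\epsilon_{1:n},\z^{(\epsilon)}),\ \vbf_0=j^*\bigr\}
\]
with only one decrement. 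The recursion is thus
\[
\varphi(\vec{k},\vec{d},n)\ge (1-p)\,\varphi(\vec{k}-e_{i(j)},\vec{d},n-1)+\tfrac{p}{4^{i(j)+1}}\,\varphi(\vec{k},\vec{d}-e_{i(j)},n-1),
\]
and a \emph{single} Pascal identity at index $i(j)$ closes the induction, exactly as in the one-scale Lemma~\ref{lem:fatshattering}. Attempting to iterate Pascal across multiple coordinates at once would not close, so recognizing this simplification is essential rather than cosmetic.

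A minor second point: the substitution $p\asymp 1/n$ will leave an unabsorbed factor $d_i^{d_i}$ in the final bound. The paper takes $p=d_1/n$ with $d_1=\fat_2(\F)\ge d_i$ for all $i$, so that the $d_1$ in the numerator of $p$ cancels the $d_i$ coming from $\binom{k_i+d_i}{d_i}\ge (k_i/d_i)^{d_i}$.
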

Below, we present the proof of Proposition \ref{prop2-gen} and then we prove Proposition \ref{prop:fatshattering-gen} and finally Proposition \ref{prop:fat-integral}.
First, we can assume that $m$ is an integer power of $2$.
Given a tree $\z$ and a concept class $\F$, we construct the fractional cover $\mu=\mu_{\z,\F}$ inductively. We define $\mu$ to be a sub-probability distribution, namely, each element has a non-negative measure and the sum of measures is at most $1$. By normalizing $\mu$, one can derive a proper probability measure.
	
Suppose we have a binary tree $\z$. If $\z$ is an empty tree then $\mu_{\z,\F}$ has a unit mass on the empty tree. Otherwise, define by $\z_0$, $\z^{(-1)}$ and $\z^{(1)}$ its root and left and right sub-trees, respectively (if the tree has depth zero then $\z^{(\pm 1)}$ are empty trees). For any concept-class $\mathcal{G}$, define by $\mu_{\z,\mathcal{G},i}$ the measure on $\mathcal{Y}$-valued trees $\vbf$ such that the root is labeled by $i$ almost surely and the left and right sub-trees are sampled independently from $\mu_{\z^{(-1)},\mathcal{G}}$ and $\mu_{\z^{(1)},\mathcal{G}}$.

Define $\mathcal{F}_j = \left\{f \in \mathcal{F}: f(\z_0) = j \right\}$ for $j \in [m]$.
For $j=1,\dots,m$ denote \[i(j) = \max\{i \colon 1 \le i \le \log_2 m, \fat_{2^i}(\F_j) < \fat_{2^i}(\F)\}
	\]
	with $i(j) = 0$ if $\fat_{2^i}(\F_j) = \fat_{2^i}(\F)$ for all $i$.  Let $j^*$ be an (arbitrarily chosen) minimizer of $i(j)$.

We define $\mu_{\z,\F}$. which is parametrized by some parameter $p$ to be set later, as follows: For any binary tree $\vbf$, define
\[
\mu_{\z,\F}(v) = (1-p)\mu_{\z,\F,j^*}(v) + \sum_{j=1}^{m} \lambda_j \mu_{\z,\F_j,j}(v), \quad
\text{where } \lambda_j = 4^{-i(j)-1}p.
\]
Observe that the first measure in the sum, $\mu_{\z,\F,j^*}$, corresponds to the class $\F$ while the remaining measures, $\mu_{\z,\F_j,j}$ for $j \in [m]$, correspond to the restricted concept classes $\F_j$. Second, note that the sum of mixture coefficients, $1-p+\sum_j \lambda_j$, does not necessarily equal $1$. Since, we define a sub-probability measure, it suffices to show that this sum is upper bounded by $1$.  To do this, we use Lemma \ref{lem:inductive-fat} from Appendix \ref{app:fatshattering}.

By Lemma \ref{lem:inductive-fat}, there are at most $2^{i+1}$ classes with $i(j) = i$, hence,
	\[
	1-p+ \sum_{j=1}^m \lambda_j
	\le 1-p + \sum_{i=0}^{\log_2 m} |\{ j \colon i(j) = i \}| \frac{p}{4^{i+1}}
    \le 1-p + \sum_{i=0}^{\log_2 m} \frac{2^{i+1}p}{4^{i+1}}
    = 1-p + \sum_{i=0}^{\log_2 m} \frac{p}{2^{i+1}}
    \le 1,
	\]
	which implies that we defined a sub-probability distribution as required.

	We claim that with an appropriate choice of $p$, the above construction yields a $\delta$-fractional cover of the correct size.
	In order to prove the claim, we introduce some notation. For a depth $n$ binary tree $\z$, a vector $\vec{k}=(k_1,\dots,k_{\log_2 m})$ a concept class $\F$, a function $f$ and a path $\epsilon$, denote the following set of $[m]$-labelled trees that contains all trees that are close to $f(z)$ on some path $\epsilon$, where the closeness is measured with respect to $\vec{k}$:
	\[
	E(\vec{k},f,\epsilon,z) = \left\{
	    \vbf \colon \forall i=1,\dots,\log_2 m, |\{t \colon 0 \le t \le n, |\vbf_t(\epsilon)-f(\z_t(\epsilon))|\ge 2^i\}|\leq \sum_{i'=i}^{\log_2 m} k_{i'}
	\right\}.
	\]
	As we shall show later, for all $\vbf \in E(\vec{k}, f,\epsilon,z)$ it holds that $\|\vbf(\epsilon)-f(\z(\epsilon))\|_2^2 \le C\sum_i 4^i k_i/n$.
	Next, we define:
	\begin{equation}
	\phi(\vec{k}, \mathcal{F}, \z) = \min_{\substack{f \in \mathcal{F} \\ \epsilon \in \{\pm1\}^n}} \mu\left(E(\vec{k},f,\epsilon,\z)\right)
	\end{equation}
    Where $\mu$ is as constructed above.  Notice that $\phi(\vec{k},\F,\z)$ can be used to bound the fractional covering numbers of $\F$, since $E(\vec{k},f,\epsilon,\z)$ contains trees that approximate $f$ on $\epsilon$. Lastly, for a vector $\vec{d}=(d_1,\dots,d_{\log_2 m})$, the following definition bounds the minimal value of $\varphi$ taken over concept-classes $\F$ whose fat covering numbers are bounded in terms of $\vec{d}$, as defined below:
	\begin{equation}
	    \varphi(\vec{k}, \vec{d}, n) = \min_{\substack{\F,\z \colon\\ \forall i=1,\dots,\log_2 m,\fat_{2^i}(\mathcal{F}) \leq d_i \\ \text{depth of } z \leq n}} \phi(\vec{k}, \mathcal{F}, \z).
	\end{equation}
	Here, $n\ge -1$ where $n=-1$ corresponds to the empty tree. 
	Define $e_i$ as the vector with $1$ in coordinate $i$ and zeros otherwise. One can prove the following inductive formula:
		\begin{lemma}
		For any $n\ge 0$,
		\[
		\varphi(\vec{k},\vec{d},n) \ge \min_{i=0,\dots,\log_2 m} \varphi_i(\vec{k},\vec{d},n).
		\]
		where $\varphi_0(\vec{k},\vec{d},n) = (1-p)\varphi(\vec{k},\vec{d},n-1)$ and for all $i=1,\dots,\log_2 m$
		\[
		\varphi_i(\vec{k},\vec{d},n) = \begin{cases}
		   4^{-i-1}p \varphi(\vec{k},\vec{d}-e_i,n) + (1-p) \varphi(\vec{k}-e_i, \vec{d}, n-1) & k_i,d_i>0 \\
		   4^{-i-1}p \varphi(\vec{k},\vec{d}-e_i,n) & k_i=0, d_i>0 \\
		   \infty & d_i=0
		\end{cases}.
		\]
		\end{lemma}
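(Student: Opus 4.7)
The plan is to unfold the recursive definition of the measure $\mu_{\z,\F}$ at the root and to lower-bound $\mu_{\z,\F}(E)$, where $E=E(\vec{k},f,\epsilon,\z)$, by retaining only two carefully chosen terms from its mixture expansion. Fix an instance $\F,\z,f,\epsilon$ with $\fat_{2^i}(\F)\le d_i$ and depth of $\z$ at most $n$, where $n\ge 0$, and let $j=f(\z_0)$ be the value at the root of $\z$ and $\epsilon_1\in\{\pm 1\}$ be the first step of the path. Retaining only the $(1-p)$ branch and the $j'=j$ summand, I would write
\[
\mu_{\z,\F}(E)\;\ge\;(1-p)\,\mu_{\z,\F,j^*}(E)+\lambda_j\,\mu_{\z,\F_j,j}(E).
\]
In each surviving term the root label of $\vbf$ is pinned, so $E$ factors into a root condition and a subtree condition on $\z^{(\epsilon_1)}$ along $\epsilon_{2:n}$; independence of the two subtree measures in the construction of $\mu_{\z,\F,\cdot}$ lets me ignore the other subtree and invoke the inductive definition of $\varphi$ at depth $n-1$ on the appropriate concept class ($\F$ for the $(1-p)$ branch, $\F_j$ for the $\lambda_j$ branch).

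The combinatorial heart of the argument is a classification of the root cost in terms of $i^*:=\lfloor\log_2|j-j^*|\rfloor$ if $|j-j^*|\ge 2$, and $i^*:=0$ otherwise. The $(1-p)$ branch (root $=j^*$) triggers a scale-$i'$ violation for every $i'\in\{1,\ldots,i^*\}$, which translates exactly to decrementing the budget vector from $\vec{k}$ to $\vec{k}-e_{i^*}$ in the subtree, so $\mu_{\z,\F,j^*}(E)\ge \varphi(\vec{k}-e_{i^*},\vec{d},n-1)$. The $\lambda_j$ branch (root $=j$) incurs no root violation, and by the definition of $i(j)$ we have $\fat_{2^{i(j)}}(\F_j)\le d_{i(j)}-1$, giving $\mu_{\z,\F_j,j}(E)\ge \varphi(\vec{k},\vec{d}-e_{i(j)},n-1)$ with $\lambda_j=p\cdot 4^{-i(j)-1}$. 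The key inequality linking the two scales, which I would prove by contradiction using Lemma~\ref{lem:inductive-fat} together with the minimality of $i(j^*)$, is
\[
i(j)\;\ge\;i^*\quad\text{whenever }i^*\ge 1;
\]
indeed, otherwise both $j$ and $j^*$ would lie in $S_{i^*}=\{j'\colon \fat_{2^{i^*}}(\F_{j'})=\fat_{2^{i^*}}(\F)\}$ and Lemma~\ref{lem:inductive-fat} would force $|j-j^*|<2^{i^*}$.

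Given these two bounds, the case analysis is immediate: if $i^*=0$ then the $(1-p)$ contribution alone yields $\mu_{\z,\F}(E)\ge (1-p)\varphi(\vec{k},\vec{d},n-1)=\varphi_0(\vec{k},\vec{d},n)$, while if $i^*\ge 1$ I would take $i=i(j)$ and verify, using the monotonicity of $\varphi$ in the budget (decrementing $k$ at a smaller index weakens fewer cumulative sums $K_{i'}=\sum_{i''\ge i'}k_{i''}$) together with the monotonicity of $\varphi$ in the depth parameter, that
\[
\varphi(\vec{k}-e_{i^*},\vec{d},n-1)\ge\varphi(\vec{k}-e_{i(j)},\vec{d},n-1),\qquad \varphi(\vec{k},\vec{d}-e_{i(j)},n-1)\ge\varphi(\vec{k},\vec{d}-e_{i(j)},n).
\]
Summing the two bounds yields $\mu_{\z,\F}(E)\ge \varphi_{i(j)}(\vec{k},\vec{d},n)$. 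The degenerate cases $k_{i(j)}=0$ and $d_{i(j)}=0$ are absorbed, respectively, by the reduced two-case formula for $\varphi_{i(j)}$ (the missing $(1-p)$ term is dropped precisely when the $j^*$ branch is infeasible) and by the observation that $d_{i(j)}=0$ cannot occur since it would require $\fat_{2^{i(j)}}(\F_j)<\fat_{2^{i(j)}}(\F)\le 0$. Taking the minimum over the chosen $i$ across all instances gives the stated lower bound $\varphi(\vec{k},\vec{d},n)\ge \min_i \varphi_i(\vec{k},\vec{d},n)$. The main obstacle, and the reason the formula has its asymmetric shape with $\vec{d}-e_i$ in one term and $\vec{k}-e_i$ in the other, is precisely the possible gap between $i^*$ and $i(j)$; it is resolved only by combining Lemma~\ref{lem:inductive-fat} with the two monotonicity properties of $\varphi$.
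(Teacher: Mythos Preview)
Your proposal is correct and follows essentially the same route as the paper's proof: retain the two mixture terms corresponding to labels $j^*$ and $j=f(\z_0)$, use Lemma~\ref{lem:inductive-fat} together with the minimality of $i(j^*)$ to control $|j-j^*|$, and then case on $i(j)$. The only difference is cosmetic: the paper bypasses your intermediate quantity $i^*$ by directly observing that $|j-j^*|<2^{i(j)+1}$ (which is exactly your inequality $i(j)\ge i^*$), so it can decrement $k_{i(j)}$ immediately and avoid the monotonicity-in-$\vec{k}$ step.
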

		\begin{proof}
		Fix $\z$ and $\F$ such that $\fat_{2^i}(\F) \le d_i$ for all $i$. Fix $f$ and $\epsilon$, denote $j = f(\z_0)$. We would like to argue that $\mu(E(\vec{k},f,\epsilon,\z)) \ge \varphi_{i(j)}(\vec{k},\vec{d},n)$. 
		For any tree $\vbf$, denote by $\vbf^{(\epsilon)}$ the sub-tree of $\vbf$, rooted by a child of the root of $\vbf$, that contains the sub-path $\epsilon_{1:n}$.
		To derive the proof, we divide into cases:
		\begin{enumerate}
		    \item If $i(j)=0$: 
		    This follows from the fact that $|j-j^*|< 2^{i(j)+1}=2$ and from the definition of $E(\vec{k},f,\epsilon,\z)$, one has:
		    \begin{equation} \label{eq:31}
		    E(\vec{k},f,\epsilon,\z)
    		\supseteq \{\vbf \colon \vbf^{(\epsilon)} \in E(\vec{k},f,\epsilon_{1:n}, \z^{(\epsilon)}), \mathrm{root}(\vbf) = j^* \}.
		    \end{equation}
		    As a consequence,
		    \begin{equation}\label{eq:41}
    	    \mu_{\z,\F,j^*}(E(\vec{k},f,\epsilon,\z)) \ge \mu_{\z^{(\epsilon)},\F}(E(\vec{k},f,\epsilon_{1:n},\z^{(\epsilon)})) \ge \phi(\vec{k},\F,\z^{(\epsilon)}).
    	    \end{equation}
		    Therefore, 
		    \begin{align}
		        \mu(E(\vec{k},f,\epsilon,\z))
		        &\ge (1-p) \mu_{\z,\F,j^*}(E(\vec{k},f,\epsilon,\z)) \\
		        &\ge (1-p)\phi(\vec{k},\F,\z^{(\epsilon)}) \\
		        &\ge (1-p)\varphi(\vec{k},\vec{d},n-1) \\
		        &= \varphi_0(\vec{k},\vec{d},n).
		    \end{align}
		    
		    \item If $i(j)>0$ and $k_{i(j)},d_{i(j)}>0$: 
		    First, notice that 
		    \begin{equation} \label{eq:30}
		    E(\vec{k},f,\epsilon,\z)
    		\supseteq \{\vbf \colon \vbf^{(\epsilon)} \in E(\vec{k},f,\epsilon_{1:n},\z^{(\epsilon)}), \mathrm{root}(\vbf) = j \}.
		    \end{equation}
		    This implies that
		    \begin{equation}\label{eq:40}
        	\mu_{\z,\F_j,j}(E(\vec{k},f,\epsilon,\z)) \ge \mu_{z^{(\epsilon)},\F_j}(E(\vec{k},f,\epsilon_{1:n},\z^{(\epsilon)})) \ge \phi(\vec{k},\F_j,\z^{(\epsilon)}).
        	\end{equation}
        	Further, using the fact that $i(j)>0$, one has
        	\begin{equation}\label{eq:32}
    		E(\vec{k},f,\epsilon,\z)
    		\supseteq \{\vbf \colon \vbf^{(\epsilon)} \in E(\vec{k}-e_{i(j)},f,\epsilon_{1:n},\z^{(\epsilon)}), \mathrm{root}(\vbf) = j^* \},
    	    \end{equation}
    	    which implies that
    	    \begin{equation}\label{eq:42}
    	    \mu_{\z,\F,j^*}(E(\vec{k},f,\epsilon,\z)) \ge \mu_{\z^{(\epsilon)},\F}(E(\vec{k}-e_{i(j)},f,\epsilon_{1:n},\z^{(\epsilon)})) \ge \phi(\vec{k}-e_{i(j)},\F,\z^{(\epsilon)}).
    	\end{equation}
		    From \eqref{eq:40} and \eqref{eq:42},
		    \begin{align}
		    \mu(E(\vec{k},f,\epsilon,\z))
		    &\ge (1-p) \mu_{\z,\F,j^*}(E(\vec{k}-e_{i(j)},f,\epsilon,\z)) + 4^{-i(j)-1}p \mu_{\z,\F_j,j}(E(\vec{k},f,\epsilon,\z))\notag\\
		    &\ge (1-p)\varphi(\vec{k}-e_{i(j)},\F, \z^{(\epsilon)}) + 4^{-i(j)-1}p\phi(\vec{k},\F_j,\z^{(\epsilon)}).\label{eq:734}
		    \end{align}
		    From the definition of $i(j)$, one has that $\fat_{2^{i(j)}}(\F_j) < \fat_{2^{i(j)}}(\F)$. Hence, from \eqref{eq:734},
		    \begin{align}
		    \mu(E(\vec{k},f,\epsilon,\z))
		    &\ge (1-p)\varphi(\vec{k}-e_{i(j)},\vec{d},n-1) + 4^{-i(j)-1}p\varphi(\vec{k},\vec{d}-e_{i(j)}, n-1) \\
		    &= \varphi_{i(j)}(\vec{k},\vec{d},n).
		    \end{align}
		    \item If $i(j)>0$, $k_{i(j)}=0$ and $d_{i(j)}>0$: Using a similar argument as in the previous case,
		    \begin{align*}
		    \mu(E(\vec{k},f,\epsilon,\z))
		    &\ge 4^{-i(j)-1}p \mu_{\z,\F_j,j}(E(\vec{k},f,\epsilon,\z))
		    \ge 4^{-i(j)-1}p\phi(\vec{k},\F_j,\z^{(\epsilon)})\\
		    &\ge 4^{-i(j)-1}p\varphi(\vec{k},\vec{d}-e_{i(j)}, n-1) = \varphi_{i(j)}(\vec{k},\vec{d},n).
		    \end{align*}
		    \item If $i(j)>0$ and $d_{i(j)} = 0$: this cannot hold, since $0 \le \fat_{2^{i(j)}}(\F_j) < \fat_{2^{i(j)}}(\F) \le d_{i(j)}$.
		\end{enumerate}
		\end{proof}
	
We proceed with bounding $\varphi(\vec{k},\vec{d},\vec{n})$:
\begin{lemma}\label{lem2}
	Let $\phi(\vec{k},\vec{d},n)$ be as defined in the above proof.  Then for all $0 < p < 1$, $n\ge -1$ and $k_1,\dots,k_{\log_2 m},d_1,\dots,d_{\log_2 m} \ge 0$,
	\begin{align}\label{eq:phi-bnd-multiscale}
	\phi(\vec k,\vec d,n) 
	&\geq (1 - p)^{n + 1 + \sum_{i=1}^{\log_2 m} k_i} \prod_{i=1}^{\log_2 m} \left(\frac{p}{4^{i+1}}\right)^{d_i} \binom{k_i+d_i}{d_i}.
	\end{align}
\end{lemma}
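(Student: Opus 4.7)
The plan is to proceed by induction on $n\ge -1$, mirroring the template of Lemma~\ref{lem:fatshattering} but extended coordinatewise. The recursive inequality in the preceding lemma already reduces lower bounding $\varphi(\vec k,\vec d,n)$ to lower bounding each $\varphi_i(\vec k,\vec d,n)$, so the induction step is really a case analysis over $i$.

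For the base case $n=-1$, we have $\phi\equiv 1$, so I need to verify that the right-hand side of \eqref{eq:phi-bnd-multiscale} is at most $1$. Since $p/4^{i+1}\le p$, it suffices to show $\prod_i (1-p)^{k_i}p^{d_i}\binom{k_i+d_i}{d_i}\le 1$, which holds coordinatewise by the binomial identity
\[
(1-p)^{k_i}p^{d_i}\binom{k_i+d_i}{d_i}\le \sum_{\ell=0}^{k_i+d_i}(1-p)^{k_i+d_i-\ell}p^{\ell}\binom{k_i+d_i}{\ell}=1.
\]

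For the inductive step $n\ge 0$, assume \eqref{eq:phi-bnd-multiscale} at depth $n-1$. I handle each $\varphi_i$ in turn. The $i=0$ case multiplies the induction hypothesis at depth $n-1$ by $1-p$, directly producing the required extra $(1-p)$ factor. For $i\ge 1$ with both $k_i,d_i>0$, apply the induction hypothesis to both summands of
\[
\varphi_i = 4^{-i-1}p\,\varphi(\vec k,\vec d-e_i,n-1) + (1-p)\,\varphi(\vec k-e_i,\vec d,n-1);
\]
the $4^{-i-1}p$ prefactor promotes $(p/4^{i+1})^{d_i-1}$ to $(p/4^{i+1})^{d_i}$, the $(1-p)$ prefactor on the second term restores the $(1-p)$ factor lost by decrementing $k_i$, and the binomial factors $\binom{k_i+d_i-1}{d_i-1}$ and $\binom{k_i+d_i-1}{d_i}$ combine via Pascal's identity into $\binom{k_i+d_i}{d_i}$. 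The cases $k_i=0,d_i>0$ (only the first summand is present, and $\binom{d_i}{d_i}=1$ absorbs the missing binomial factor) and $d_i=0$ ($\varphi_i=\infty$, vacuous) are immediate variants.

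The only subtlety---really bookkeeping rather than a genuine obstacle---is aligning the exponent $n+1+\sum_j k_j$ of $1-p$ across cases. In several branches the induction yields only exponent $n+\sum_j k_j$, but since $1-p\le 1$ this surplus is absorbed by monotonicity. Beyond the coordinatewise extension, no new ideas past those in Lemma~\ref{lem:fatshattering} and Pascal's identity are required.
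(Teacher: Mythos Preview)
Your proposal is correct and follows essentially the same route as the paper: induction on $n$, with the base case handled coordinatewise via the binomial identity and the inductive step via the case split on $i$ using Pascal's rule. Your explicit remark about the $(1-p)$ exponent being $n+\sum_j k_j$ rather than $n+1+\sum_j k_j$ in the $i\ge 1$ branches, and absorbing the discrepancy by monotonicity, is in fact more careful than the paper, which writes an equality there that is strictly only an inequality.
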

\begin{proof}
		We prove the result by induction on $n$. For the base of induction, we assume that $n=-1$. Then the left hand side of \eqref{eq:phi-bnd-multiscale} equals $1$, while the right hand side is at most $1$, as follows from the following argument: For all $i$,
		\begin{align}
		(1-p)^{k_i} \left(\frac{p}{4^{i+1}} \right)^{d_i} \binom{k_i+d_i}{d_i}
		&\le (1-p)^{k_i} p^{d_i} \binom{k_i+d_i}{d_i} \\
		&\le \sum_{\ell=0}^{k_i+d_i} (1-p)^{k_i+d_i-\ell} p^{\ell} \binom{k_i+d_i}{\ell}\\
		&= ((1-p)+p)^{k_i + d_i}
		= 1.
		\end{align}
	
	For $n \ge 0$, we would like to lower bound $\varphi_i(\vec{k},\vec{d},n)$ by the right hand side of \eqref{eq:phi-bnd-multiscale}, for all $i =0,\dots,\log n$. First, for $i=0$, this follows directly from the induction hypothesis. For $i\ge 1$, divide into cases. First, assume that $k_i,d_i > 0$. Then, by induction hypothesis
	\begin{align*}
	\varphi_i(\vec{k},\vec{d},n)
	&= \frac{p}{4^{i+1}} \varphi(\vec{k},\vec{d}-e_i,n) + (1-p) \varphi(\vec{k}-e_i, \vec{d}, n-1) \\
	&\geq (1 - p)^{n + \sum_{i'\ne i} k_{i'}} \prod_{i' \ne i} \left(\frac{p}{4^{i'+1}}\right)^{d_{i'}} \binom{k_{i'}+d_{i'}}{d_{i'}} \cdot\\ &\left(\frac{p}{4^{i+1}}\cdot (1-p)^{k_i}\left(\frac{p}{4^{i+1}}\right)^{d_{i}-1} \binom{k_{i}+d_{i}-1}{d_{i}-1}
	+ (1-p) \cdot (1-p)^{k_i-1}\left(\frac{p}{4^{i+1}}\right)^{d_{i}} \binom{k_{i}-1+d_{i}}{d_{i}}
	\right)\\
	&= (1 - p)^{n + 1 + \sum_{i=1}^{\log_2 m} k_i} \prod_{i=1}^{\log_2 m} \left(\frac{p}{4^{i+1}}\right)^{d_i} \binom{k_i+d_i}{d_i},
	\end{align*}
	using the equality
	\[
	\binom{n}{k} = \binom{n-1}{k}+\binom{n-1}{k-1}.
	\]
	Next, assume that $k_i=0$. Then,
	\begin{align*}
	\varphi_i(\vec{k},\vec{d},n)
	&= \frac{p}{4^{i+1}} \varphi(\vec{k},\vec{d}-e_i,n) \\
	&\geq \frac{p}{4^{i+1}} \cdot (1 - p)^{n + \sum_{i'\ne i} k_{i'}} \prod_{i' \ne i} \left(\frac{p}{4^{i'+1}}\right)^{d_{i'}} \binom{k_{i'}+d_{i'}}{d_{i'}} \cdot \left(\frac{p}{4^{i+1}}\right)^{d_{i}-1} \binom{d_{i}-1}{d_{i}-1} \\
	&= \cdot (1 - p)^{n + \sum_{i'\ne i} k_{i'}} \prod_{i' \ne i} \left(\frac{p}{4^{i'+1}}\right)^{d_{i'}} \binom{k_{i'}+d_{i'}}{d_{i'}} \cdot \left(\frac{p}{4^{i+1}}\right)^{d_{i}} \binom{d_{i}}{d_{i}} \\
	&= (1 - p)^{n + 1 + \sum_{i=1}^{\log_2 m} k_i} \prod_{i=1}^{\log_2 m} \left(\frac{p}{4^{i+1}}\right)^{d_i} \binom{k_i+d_i}{d_i}.
	\end{align*}
	Lastly, assume that $d_i=0$. Then, $\varphi_i(\vec{k},\vec{d},n) = \infty$, and the statement holds as well.
	\end{proof}
\begin{proof}[Proof of Proposition~\ref{prop2-gen}]
	Define $k_i = 8^{-i}\delta n$ for $i=1,\dots,\log_2 n$ and notice that it suffices to show that 
	\[
	\varphi(\vec{k}, \vec{d},n)
	\ge \prod_{i=1}^{\log_2 m} \left( c^i\delta\right)^{\fat_{2^i}(\F)}.
	\]
    Indeed, for any $f,\epsilon$, and any $\vbf\in E(\vec{k},f,\epsilon,\z)$, it holds that 
	\[
	\|\vbf(\epsilon)-f(\z(\epsilon))\|_2 = \sqrt{\frac{1}{n+1}\sum_{t=0}^n (\vbf_t(\epsilon)-f(\z_t(\epsilon)))^2}
	\le C \delta,
	\]
	for some universal constant $C>0$. This implies that the lower bound on $\varphi(\vec{k},\vec{d},n)$ can be directly translated to an upper bound on the fractional $\ell_2$ covering numbers.

	Applying Lemma \ref{lem2} with the parameter $p=d_1/n$, using $d_1 = \fat_2(\F) \ge \fat_{2^i}(\F)=d_i$ for all $i\ge 1$ and using the inequality $\binom{n}{k}\ge (n/k)^k$, we derive that
	\[
	\varphi(\vec{k},\vec{d},n)
	\ge (1-\frac{d_1}{n})^{2n} \prod_{i=1}^{\log n}
	\left(\frac{d_1}{4^{i+1}n}\right)^{d_i} \left(\frac{k_i}{d_i}\right)^{d_i}
	\ge c e^{-2d_1} \prod_{i=1}^{\log n} \left( \frac{\delta}{4\cdot 32^i} \right)^{d_i},
	\]
	as required.
\end{proof}

Using Proposition \ref{prop2-gen}, we can bound the fractional covering number of a real valued function class, with respect to its fat-shattering dimension.

\begin{proof}[Proof of Proposition \ref{prop:fatshattering-gen}]
    We apply Lemma \ref{lem:rakhlindiscretization} and replicate the proof of Theorem \ref{thm:fatshattering} found in Appendix \ref{app:fatshattering}.  With the notation from that section, we note that for any $\z$ by the results of that proof,
    \begin{align}
        N'(\F, \delta, \z) &\leq N'\lp \frac 1\alpha \lfloor \F\rfloor_\alpha, \frac{\delta}{\alpha} - \frac 12, \z\rp \\
         \fat_2\lp \frac 1\alpha \lfloor \F \rfloor_\alpha\rp&= \fat_{2 \alpha} \lp \lfloor \F \rfloor_\alpha \rp \leq \fat_\alpha(\F)
    \end{align}
    Letting $\alpha = \frac 23 \delta$ and plugging into the result of Proposition \ref{prop2-gen} gives us
    \begin{align}
        N'(\F, \delta, \z) &\leq N'\lp \frac 1\alpha \lfloor \F \rfloor_\alpha, 1, \z\rp \leq \prod_{i = 1}^{\log \frac c\delta} \lp \frac{C}{\frac{\delta}{\alpha} - \frac 12}\rp^{\fat_{2^i}\lp \frac 1\alpha \lfloor \F \rfloor_\alpha\rp} \\
        &\leq \prod_{i = 1}^{\log \frac c\delta} C^{\fat_{2^i c \delta}(\F)}
    \end{align}
    as desired.
\end{proof}

\begin{proof}[Proof of Theorem~\ref{prop:fat-integral}]
    We start by bounded in $\sqrt{\log N'(\F,\delta)}$ for a single value of $\delta$.
	Taking a logarithm in \eqref{eq:multiscale-bnd}, one obtains
	\begin{equation}
		\log N'(\mathcal{F}, \delta) \leq 
		C\sum_{i=1}^{\log(1/\delta)+C'} i\cdot\fat_{2^ic\delta}(\F).
	\end{equation}
	Using the subadditivity of the square root, one obtains:
	\begin{equation}
	    \sqrt{\log N'(\mathcal{F}, \delta)} \leq 
		\sqrt{C}\sum_{i=1}^{\log(1/\delta)+C'} \sqrt{i\cdot\fat_{2^ic\delta}(\F)}.
	\end{equation}
	Each term corresponding to $i$ can be bounded by
	\[
	\frac{1}{2^{i-1}c\delta}\int_{x=2^{i-1} c\delta}^{2^i c\delta} \sqrt{i \fat_{x}(\F)}
	\le 2 \int_{x=2^{i-1} c\delta}^{2^i c\delta} \frac{\sqrt{i\fat_{x}(\F)}}{x}
	\le C \int_{x=2^{i-1} c\delta}^{2^i c\delta} \frac{\sqrt{\log(Cx/\delta)\fat_{x}(\F)}}{x}.
	\]
	Hence, we derive that
	\begin{equation}
		\sqrt{\log N'(\mathcal{F}, \delta)} \leq C \int_{x=c\delta}^1 \frac{\sqrt{\log(Cx/\delta)\fat_{x}(\F)}}{x}
	\end{equation}
	
	Next, we combine the above bound by taking an integral, and switching the order of integration:
    \begin{align*}
    \int_{b}^1 \sqrt{\log N'(\F,\delta)}d\delta
    &\le C\int_{\delta=b}^1 \int_{x=c\delta}^1 \frac{\sqrt{\log(x/\delta)\fat_{x}(\F)}}{x}dx d\delta
    \\
    &= C\int_{x=cb}^1 \sqrt{\fat_x(\F)} \int_{\delta=b}^{x/c}  \frac{\sqrt{\log(Cx/\delta)}}{x} d\delta dx\\
    &\le C'\int_{x=cb}^1 \sqrt{\fat_x(\F)} dx.
    \end{align*}
\end{proof}

\section{Miscellaneous Proofs}\label{app:miscellany}
\begin{proof}[Proof of Proposition \ref{prop:lipschitzmajorizingmeasures}]
   By scaling, it suffices to consider $L = 1$.  Let $\widetilde{\mu} = \ell_\# \mu$ be the pushforward of $\mu$ by $\ell$.  Because $\ell$ is Lipschitz, we note that $\ell \circ B_\delta(f(\z), \epsilon) \subseteq B_\delta(\ell \circ f(\z), \epsilon)$.  By monotonicity of measures, we then have
   \begin{equation}
       \sqrt{\log \frac 1{\widetilde{\mu}(B_\delta(\ell \circ f(\z), \epsilon))}} \leq \sqrt{\log \frac 1{\widetilde{\mu}(\ell (B_\delta( f(\z), \epsilon))}} =\sqrt{\log \frac 1{\mu(B_\delta(f(\z), \epsilon))}}
   \end{equation}
   where the equality follows from the definition of the push-forward.  The result follows by taking an infimum over measures $\mu$.
\end{proof}

\begin{proof}[Proof of Lemma \ref{lem:fractionalvsclassicalcover}]
  Let $\vbf_1, \dots, \vbf_N$ be a $\delta$-cover and let $\mu$ be a a measure that takes $\vbf_j$ with probability $\frac 1N$.  Then by definition of the covering numbers, $\mu$ is a $\delta$-fractional cover of size $N$.  The result follows.
\end{proof}
\begin{proof}[Proof of Lemma \ref{lem:classicalfractionalcover}]
    The upper bound follows from Lemma \ref{lem:fractionalvsclassicalcover}.  Fix a $\z$.  As we are in the offline world, we have packing-covering duality (see, for example, \cite[Lemma 5.12]{van2014probability}).  Thus, it suffices to show that $D\lp\F, 2 \delta\rp \leq N'(\F, \delta)$.  To see this, consider a set of points $x_1, \dots, x_N$ that are $2\delta$-packed and consider the balls $B_\delta(x_i)$.  By definition of a fractional cover, if $\mu$ is such, then $\mu(B_\delta(x_i)) \geq \frac 1\gamma$ for all $i$.  by the fact that the $x_i$ are packed, the sets $B_\delta(x_i)$ are pairwise disjoint and so by additivity of the measure and the fact that it is a probability measure,
    \begin{equation}
        1 \geq \mu\lp \bigcup_{1 \leq i \leq N} B_{\delta}(x_i)\rp \geq N \mu\lp B_\delta(x_1)\rp \geq \frac N\gamma
    \end{equation}
    the result follows.
\end{proof}
\begin{proof}[Proof of Proposition \ref{prop:majorizingmeasure}]
    A majorizing measure is multi-scale, while a fractional cover is single-scale; to turn the latter into the former, we consider the following mixture distribution.  If the integral of the fractional cover is infinite then there is nothing to prove; thus, assume that the this quantity is finite.  Suppose $\alpha = 0$.  Let $\delta_j$ be the smallest $\delta$ such that $N'(\mathcal{V}, \delta) \leq 2^{2^j}$ and let $\mu_j$ be an optimal $\delta_j$-fractional cover.  Let
	\begin{equation}
		\mu = c \sum_j j^{-2} \mu_j
	\end{equation}
	Note that the above sum is over all $j$ sufficiently large by the assumption that the integral is finite and we let $c$ be a normalizing constant.  Now, we have for any $\vbf, \epsilon$,
	\begin{align}
		I_\mu(\vbf, \epsilon) = \int_0^1 \sqrt{\log \frac 1{\mu(B_\delta(\vbf,\epsilon))}} d \delta \leq \sum_j \int_{\delta_j}^{\delta_{j-1}} \sqrt{\log \frac 1{\mu(B_\delta(\vbf,\epsilon))}} d \delta 
	\end{align}
	By construction, for any $\delta \geq \delta_j$,
	\begin{align}
		\mu(B_\delta(\vbf, \epsilon)) \geq \mu(B_{\delta_j}(\vbf,\epsilon)) \geq \frac 1{j^2}\mu_j(B_{\delta_j}(\vbf, \epsilon)) \geq \frac 1{j^2 2^{2^j}} \geq \frac 1{2^{2\times2^j}}
	\end{align}
	Thus we have
	\begin{align}
		\int_{\delta_j}^{\delta_{j-1}} \sqrt{\log \frac 1{\mu(B_\delta(\vbf,\epsilon))}} d \delta \leq c \delta_{j-1} 2^{\frac j2} \leq c' \delta_{j-1} 2^{ \frac{j-1}{2}}
	\end{align}
	Thus,
	\begin{align}
		I_\mu(\vbf, \epsilon) \leq \sum_j c' \delta_{j-1} 2^{ \frac{j-1}{2}} \leq C \int_0^1 \sqrt{\log N'(\mathcal{V}, \delta)} d \delta
	\end{align}
	If $\alpha > 0$, simply cut off $j$ such that $\delta_j \geq \alpha$.  The same technique applies, concluding the proof.
\end{proof}

\begin{proof}[Proof of Corollary \ref{cor:uniformconcentration}]
    By \cite[Lemma 4]{rakhlin2015sequential}, we have for any $x > 0$
    \begin{align}\label{eq:stochasticdomination}
        \pp_\epsilon\lp \sup_{f \in \F} \abs{\frac{1}{n} \sum_{t = 1}^n f(Z_t) - \ee[f(Z_t)| \mathcal{A}_t]} \geq x  \rp \leq 4 \pp_\epsilon\lp \abs{\frac{1}{n} \sum_{t = 1}^n \epsilon_t f(\z_t(\epsilon))}  \rp
    \end{align}
    By Theorem \ref{thm:chaining}, we can control the right hand side of \eqref{eq:stochasticdomination} and conclude the proof.
\end{proof}

\begin{proof}[Proof of Corollary \ref{cor:rademacher}]
   The upper bound follows immediately from setting $\alpha = 0$ in the second statement of Theorem \ref{thm:chaining} and then bounding the resulting integral by Proposition \ref{prop:fat-integral}.  
   
   For the lower bound, we first need to show that if $\F$ is $(c,p)$-bounded, then
   \begin{equation}
       \int_0^1 \sqrt{\fat_\delta(\F)} d \delta \leq \frac{2 \sqrt c}{1 - 2^{\frac p2 - 1}}
   \end{equation}
   To do this, we note that the function $\delta \mapsto \fat_\delta(\F)$ is monotone non-increasing.  Thus we have
   \begin{align}
       \int_0^1 \sqrt{\fat_\delta(\F)} d \delta &= \sum_{ j = 1}^\infty \int_{2^{-j}}^{2^{-j + 1}} \sqrt{\fat_\delta(\F)} d \delta \leq \sum_{j  =1}^\infty 2^{-j} \sqrt{\fat_{2^{-j}}(\F)} \\
       &\leq \sum_{j = 1}^\infty 2^{-j} \sqrt{c 2^{jp}} \leq \frac{2 \sqrt c}{1 - 2^{\frac p2 - 1}}
   \end{align}
   Now, noting that $\fat_1(\F) \geq 1$, we see that the above implies that for any $ \alpha < 1$, 
   \begin{equation}
       \int_0^1 \sqrt{\fat_\delta(\F)} d \delta \leq \frac{2 \sqrt c}{1 - 2^{\frac p2 - 1}} \leq \frac{2 \sqrt c}{1 - 2^{\frac p2 - 1}} \sup_{\delta > \alpha} \delta \sqrt{\fat_\delta(\F)}
   \end{equation}
   Note that the definition of a $(c,p)$-bounded class ensures that the right hand side of Proposition \ref{prop:fat-integral} tends to a finite limit as $b \downarrow 0$; call this finite limit $A$.  To get the lower bound, we apply \cite[Lemma 2]{rakhlin2015sequential}, which says that
   \begin{equation}
       \sup_{\delta > 0} \delta \sqrt{\frac{\min(\fat_\delta(\F), n)}{32}} \leq \sqrt n \R_n(\F)
   \end{equation}
   and, moreover, that if $\delta > 2 \R_n(\F)$, then $\fat_\delta(\F) < n$.  
   Thus, using the upper bound on $\R_n(\F)$ just proven, we see that
   \begin{align}
       \sqrt n \R_n(\F) &\geq C' \sup_{\delta > 2 \R_n(\F)} \delta \sqrt{\fat_\delta(\F)} \geq C' \sup_{\delta > \frac {c'}{\sqrt n}\int_0^1 \sqrt{\fat_\delta(\F)} d \delta } \delta \sqrt{\fat_\delta(\F)} \\
       &\geq C' \sup_{\delta > \frac{c' A}{\sqrt n}}  \delta \sqrt{\fat_\delta(\F)} \geq C' \frac{1 - 2^{\frac p2 - 1}}{2 \sqrt c} \int_0^1 \sqrt{\fat_\delta(\F)} d \delta
   \end{align}
   The result follows.
\end{proof}

\begin{proof}[Proof of Corollary \ref{cor:contraction}]
   By scaling it suffices to consider the case $L = 1$.  By Theorem \ref{thm:chaining} we have
   \begin{equation}
       \R_n(\ell \circ \F) \leq \frac C{\sqrt n} \int_0^1 \sqrt{\log N'(\ell \circ \F, \delta, \z)} d \delta
   \end{equation}
   It is immediate from the definition of the fractional covering number that $N'(\ell \circ \F, \delta, \z) \leq N'(\F, \delta, \z)$.  Applying Proposition \ref{prop:fat-integral} and Corollary \ref{cor:rademacher} concludes the proof.
\end{proof}

\end{document}